\newif\ifmarkup
\newif\ifcomments
\newcommand{\jd}[1]{{\color{purple}{\textbf{JD:} #1}}}
\newcommand{\pq}[1]{{\color{brown}{\textbf{PQ:} #1}}}
\newcommand{\nz}[1]{{\color{violet}{\textbf{Zn:} #1}}}
\newcommand{\jd}[1]{}
\newcommand{\pq}[1]{}
\newcommand{\nz}[1]{}
\newcommand{\LkR}{\mathrm{LeakyReLU}}
\newcommand{\g}{\vec g}
\newcommand{\hatg}{\widehat{\g}}
\renewcommand{\v}{\vec v}
\renewcommand{\u}{\vec u}
\newcommand{\He}{\mathrm{He}}
\newcommand{\evt}{\mathcal{E}}
\newcommand{\pwperp}{\proj_{\w^\perp}}
\newcommand{\setDis}{\mathfrak{D}}
\newcommand{\wht}{\widehat{\w}}
\newcommand{\beps}{\eps'}
\newcommand{\citep}{\cite}
\title{Near-Optimal Bounds for Learning Gaussian Halfspaces with Random Classification Noise}
\author{
Ilias Diakonikolas\thanks{Supported by NSF Medium Award CCF-2107079,
		NSF Award CCF-1652862 (CAREER), and
		a DARPA Learning with Less Labels (LwLL) grant.}\\
		UW Madison\\
		{\tt ilias@cs.wisc.edu}\\
\and Jelena Diakonikolas\thanks{Supported by NSF Award CCF-2007757 and by the U.\ S.\ Office of
Naval Research under award number N00014-22-1-2348.}\\
UW Madison\\
{\tt jelena@cs.wisc.edu}\\
\and Daniel M. Kane\thanks{Supported by NSF Medium Award CCF-2107547 and NSF Award CCF-1553288 (CAREER).}\\
UC San Diego\\
{\tt dakane@ucsd.edu}\\
\and Puqian Wang\thanks{Supported in part by NSF Award CCF-2007757.}\\
UW Madison\\
{\tt pwang333@wisc.edu}\\
\and 
Nikos Zarifis\thanks{Supported in part by NSF award 2023239,  NSF Medium Award CCF-2107079, and a DARPA Learning with Less Labels (LwLL) grant.}\\
UW Madison\\
{\tt zarifis@wisc.edu}\\
}
\begin{document}

\maketitle

\begin{abstract}We study the problem of learning general (i.e., not necessarily homogeneous) 
halfspaces with Random Classification Noise under the Gaussian distribution. 
We establish nearly-matching algorithmic and Statistical Query (SQ) lower bound results 
revealing a surprising information-computation gap for this basic problem. 
Specifically, the sample complexity of this learning problem is 
$\widetilde{\Theta}(d/\eps)$, where $d$ is the dimension and $\eps$ is the excess error. 
Our positive result is a computationally efficient learning algorithm with sample complexity
$\tilde{O}(d/\eps + d/(\max\{p, \eps\})^2)$, 
where $p$ quantifies the bias of the target halfspace. 
On the lower bound side, we show that any efficient SQ algorithm (or low-degree test)
for the problem requires sample complexity at least 
$\Omega(d^{1/2}/(\max\{p, \eps\})^2)$. 
Our lower bound suggests that this quadratic dependence on $1/\eps$  
is inherent for efficient algorithms.
\end{abstract}

\setcounter{page}{0}
\thispagestyle{empty}
\newpage

\section{Introduction} \label{sec:intro}

A halfspace or Linear Threshold Function (LTF) is any 
Boolean function $h: \R^d \to \{ \pm 1\}$ of the form 
$h(\bx) = \sgn \left( \bw \cdot \bx + t \right)$, where $\bw \in \R^d$ is the weight vector 
and $t \in \R$ is the threshold. 
The function $\sign: \R \to \{ \pm 1\}$ is defined as $\sgn(u)=1$ if $u \geq 0$ 
and $\sgn(u)=-1$ otherwise.
The problem of learning halfspaces is a classical problem in machine learning, 
going back to the Perceptron algorithm~\cite{Rosenblatt:58} and has had a big impact
in both the theory and the practice of the field~\cite{Vapnik:98, FreundSchapire:97}.
Here we study the problem of PAC learning halfspaces in the distribution-specific setting
in  the presence of Random Classification Noise (RCN)~\citep{AL88}. Specifically, 
we focus on the basic case in which the marginal distribution on examples is the standard Gaussian
--- one of the simplest and most extensively studied distributional assumptions.

In the realizable PAC model~\citep{val84}
(i.e., when the labels are consistent with a concept in the class), 
the class of halfspaces on $\R^d$ is efficiently learnable to 0-1 error $\eps$ 
using $\widetilde{O}(d/\eps)$ samples via linear programming 
(even in the distribution-free setting). This sample complexity upper bound is information-theoretically
optimal, even if we know a priori that the distribution on examples is well-behaved 
(e.g., Gaussian or uniform). That is, in the realizable setting, 
there is an efficient algorithm for halfspaces achieving the 
optimal sample complexity (within logarithmic factors). 

\vspace{-0.2cm}

\paragraph{Learning Gaussian Halfspaces with RCN.}
The RCN model~\citep{AL88} is  
the most basic model of random noise.
In this model, the label of each example is independently 
flipped with probability exactly $\eta$, for some noise parameter $0< \eta<1/2$. 
One of the classical results on PAC learning with RCN~\cite{Kearns:98}  
states that any Statistical Query (SQ) algorithm can be transformed 
into an RCN noise-tolerant PAC learner with at most a polynomial complexity blowup.  
Halfspaces are known to be efficiently PAC learnable in the presence of RCN,
even in the distribution-free setting~\cite{BFK+:97, Cohen:97, DKT21, DTK23}.  
Alas, all these efficient algorithms require sample complexity that is suboptimal within
polynomial factors in $d$ and $1/\eps$. 

The sample complexity of PAC learning Gaussian halfspaces with RCN 
is $\widetilde{\Theta}(d/((1-2\eta) \eps))$. This bound can be derived, e.g., 
from~\cite{Massart2006}, and the lower bound essentially matches the realizable case, 
up to a necessary scaling of $(1-2\eta)$.\footnote{
Throughout this introduction, it will be convenient to view $\eta$ as a constant 
bounded away from $1/2$. 
}
Given the fundamental nature of this learning problem, 
it is natural to ask whether a computationally efficient algorithm 
with (near-) optimal sample complexity (i.e., within logarithmic factors of the optimal) exists. 
That is, we are interested in a fine-grained {sample size versus computational complexity} analysis of the problem.
This leads us to the following question:
\begin{center}
{\em Is there a sample near-optimal and polynomial-time algorithm \\ 
for learning Gaussian halfspaces with RCN?}
\end{center}
In this paper, we explore the above question and provide two main contributions --- 
{essentially resolving the question within logarithmic factors.}
On the positive side, we give an efficient algorithm with sample complexity 
$\tilde{O}_{\eta}(d/\eps + d/(\max\{p, \eps\})^2)$ for the problem. 
{Here the parameter $p \in [0, 1/2]$ 
(\Cref{def:bias}) quantifies the {\em bias} of the target function; 
a ``balanced'' function has $p=1/2$ and a constant function has $p=0$. 
The worst-case upper bound arises when $p = \Theta(\eps)$,
in which case our algorithm has sample complexity of $\tilde{O}_{\eta}(d/\eps^2)$.} 
Perhaps surprisingly, {\em we provide formal evidence 
that the quadratic dependence on {the quantity} 
$1/\max\{p, \eps\}$ in the sample complexity 
cannot be improved for computationally efficient algorithms}. 
Our lower bounds apply for two restricted yet powerful models of computation, 
namely Statistical Query algorithms
and low-degree polynomial tests. 
Our lower bounds suggest an inherent 
{\em statistical-computational tradeoff} for this problem.

\subsection{Our Results} \label{ssec:results}

We study the complexity of learning halfspaces with RCN under the Gaussian distribution. 
Let $\mathcal{C} = \{f:\R^d \to \{\pm 1\}\mid f(\bx) = \sign(\w \cdot\x + t)\}$ be the class
of general (i.e., not necessarily homogeneous) halfspaces in $\R^d$. 
The following definition summarizes our learning problem. 

\begin{definition}[Learning Gaussian Halfspaces with RCN]\label{def:RCN-LTF}
Let $\D$ be a distribution on $(\x, y)\in\R^d\times\{{\pm 1}\}$ 
whose $\x$-marginal $\D_\x$ is the standard Gaussian. Moreover, 
there exists $\eta  \in (0, 1/2)$ and
a target $f \in \mathcal{C}$ such that
the label $y$ of example $\x$ satisfies 
$y = f(\bx)$ with probability $1-\eta$ and $y = -f(\bx)$ otherwise. 
Given $\eps>0$ and sample access to $\D$, the goal 
is to output a hypothesis $h$ that with high probability 
satisfies
$\mathrm{err}_{0-1}^\D(h) := \pr_\D[h(\x)\neq y]\leq \eta+\eps$. 
\end{definition}

{Our main contribution is a sample near-optimal efficient 
algorithm for this problem coupled with a matching 
statistical-computational tradeoff for SQ algorithms 
and low-degree polynomial tests.}
{It turns out that} 
the sample complexity of our algorithm depends 
on the bias of the target halfspace, defined below. 

\begin{definition}[$p$-biased {function}] \label{def:bias} 
For $p\in[0,1/2]$, we say that a Boolean function $f: \R^d \to \{\pm 1\}$
is $p$-biased with respect to the distribution $\D_{\x}$, if 
$\min\big\{\pr_{\x\sim \D_\x}[f(\x)=1],\;\pr_{\x\sim \D_\x} [f(\x)=-1]\big\}=p$. 
\end{definition}

{For example,} a homogeneous halfspace 
$f(\bx) = \sign(\w \cdot\x)$ under the standard Gaussian distribution 
$\D_{\x} =  {\normal(\vec 0,\vec I)}$ satisfies 
$\E_{\x \sim \D_{\x}}[f(\x)] = 0$, and therefore has
bias $p=1/2$. For a general halfspace $f(\bx) = \sign(\w \cdot\x + t)$ with $\|\w\|_2=1$, 
it is not difficult to see that its bias under the standard Gaussian is approximately 
$p \sim (1/t) \exp(-t^2/2)$ (see \Cref{fact:bound-on-p}).

We can now state our algorithmic contribution.

\begin{restatable}{theorem}{ThmMain} {\em (Main Algorithmic Result)} \label{thm:main}
There exists an algorithm that, 
given $\eps,\delta \in(0,1/2)$ and $N$ samples from a distribution $\D$ satisfying \Cref{def:RCN-LTF}, 
 runs in time $O(dN/\eps^2)$ and  returns a hypothesis 
$h \in \mathcal{C}$ such that with probability at least $1-\delta$,
{it holds} $\mathrm{err}_{0-1}^\D(h) \leq \eta+\eps$. The sample complexity 
of the algorithm is
$N=\widetilde{O} \big(\frac{d}{(1-2\eta)\eps} + \frac{d}{\max(p(1 - 2\eta),\eps)^2} \big) \log(1/\delta) \;.$
\end{restatable}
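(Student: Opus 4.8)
Since RCN leaves the Bayes rule equal to $f$, for every $h$ we have $\mathrm{err}_{0-1}^\D(h) = \eta + (1-2\eta)\,\pr_{\x}[h(\x)\neq f(\x)]$, so it suffices to output $h\in\mathcal C$ with $\pr_\x[h(\x)\neq f(\x)]\le \beps$, where $\beps := \eps/(1-2\eta)$. My plan has three stages: reduce to this disagreement goal; obtain a warm-start direction from the degree-$1$ Chow parameter of $\D$; and refine it by a localization procedure that repeatedly zooms into a shrinking band around the current decision boundary. First I would spend $\widetilde{O}(\log(1/\delta)/\eps^2)$ samples to estimate $\pr_\D[y=1] = \eta + (1-2\eta)\pr_\x[f(\x)=1]$, which pins down the bias $p$ up to additive $\beps$; if a constant hypothesis already disagrees with $f$ on at most a $\beps$ fraction --- equivalently $p(1-2\eta)=O(\eps)$ --- I output it. This handles the heavily biased case and accounts for the $\max\{p(1-2\eta),\eps\}$ in the denominator, so henceforth I assume $p(1-2\eta)=\Omega(\eps)$.

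\textbf{Warm start.} The degree-$1$ moment is $\vec v := \E_\D[y\x] = (1-2\eta)\,2\phi(t^*)\,\w^*$, and by \Cref{fact:bound-on-p} the bias satisfies $\phi(t^*)=\widetilde\Theta(p)$, so $\|\vec v\|=\widetilde\Theta((1-2\eta)p)$. The empirical mean $\hat{\vec v}$ over $N_0$ samples obeys $\|\hat{\vec v}-\vec v\|=O(\sqrt{d/N_0})$ with high probability, hence $N_0=\widetilde\Theta(d/(p(1-2\eta))^2)$ makes the angle $\theta_0$ between $\hat{\vec v}/\|\hat{\vec v}\|$ and $\w^*$ a small absolute constant; this is the source of the $d/(p(1-2\eta))^2$ term. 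Moreover, since the Gaussian measure of the slab between two optimally-aligned halfspaces at angle $\theta$ through the boundary of $f$ is $\widetilde\Theta(\phi(t^*)\theta)$, the warm-start halfspace already has disagreement $\widetilde\Theta(\|\hat{\vec v}-\vec v\|/(1-2\eta))=\widetilde{O}(\sqrt{d/N_0}/(1-2\eta))$ with $f$, so taking $N_0=\widetilde\Theta(d/\eps^2)$ by itself yields disagreement $\widetilde{O}(\beps)$ in every regime --- which is how the worst-case $\widetilde{O}(d/\eps^2)$ bound arises and why it is not improvable by this step alone.

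\textbf{Localization and wrap-up.} To get below $\beps$ more cheaply when $p$ is not tiny, from a current hypothesis $(\w_i,t_i)$ at angle $\theta_i$ from $(\w^*,t^*)$ I would condition the samples on the band $B_i:=\{\x:|\w_i\cdot\x+t_i|\le\Theta(\theta_i)\}$, inside which $f$ behaves like a halfspace at angle $\theta_i$ with roughly balanced mass, so a single step can improve the angle by a constant factor. I would realize each step as gradient descent on a band-localized LeakyReLU surrogate $\widehat\E[\LkR_\lambda(-y(\w\cdot\x+t))]$: convexity makes $O(1/\eps^2)$ gradient steps in total suffice, and a structural lemma --- proved via Gaussian anticoncentration together with the identity $\E[y\mid\x]=(1-2\eta)f(\x)$ --- shows the resulting halfspace contracts the angle. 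After $O(\log(1/\eps))$ rounds $\theta_i$ reaches $\widetilde\Theta(\beps/\phi(t^*))$, so the disagreement with $f$ is $\widetilde\Theta(\phi(t^*)\theta_i)\le\beps$. Round $i$ needs $\widetilde\Theta(d/(1-2\eta)^2)$ samples landing in $B_i$ and $\pr_\x[B_i]=\widetilde\Theta(\phi(t^*)\theta_i)$, so it costs $\widetilde\Theta(d/((1-2\eta)^2\phi(t^*)\theta_i))$ samples; as $\theta_i$ decays geometrically the sum is dominated by the last round and equals $\widetilde\Theta(d/((1-2\eta)\eps))$, the first term of $N$. Summing with $N_0$ gives the claimed bound, and the running time is $O(1/\eps^2)$ gradient steps over $N$ samples at $O(d)$ cost each, i.e.\ $O(dN/\eps^2)$.

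\textbf{Main obstacle.} The reduction and the Chow warm-start are routine; the delicate part is the localization analysis for \emph{general} halfspaces. One must show that inside a band the joint estimation of the direction and the threshold is well-conditioned --- the effective signal in the threshold coordinate scales with the boundary mass $\phi(t^*)\asymp p$, which is exactly what forces the $p$-dependent term --- and one must control the accumulation of estimation errors so that the geometric decay of $\theta_i$ survives over the $O(\log(1/\eps))$ rounds. Establishing the surrogate-loss structural lemma, namely that an approximate band-restricted minimizer of the LeakyReLU objective yields the per-round angle contraction, is the technical core of the argument.
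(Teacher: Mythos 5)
Your skeleton coincides with the paper's at the top level: reduce to disagreement via $\mathrm{err}_{0-1}^\D(h)=\eta+(1-2\eta)\pr_\x[h\neq f]$, a degree-one Chow-parameter warm start that produces the $d/(p(1-2\eta))^2$ term, and a band-restricted LeakyReLU-style refinement that produces the $d/((1-2\eta)\eps)$ term. However, the refinement you sketch diverges from the paper's and, more importantly, defers exactly the step that constitutes the paper's proof. The paper never estimates the threshold jointly with the direction: it uses a \emph{fixed} one-sided band of width $\hat\gamma\sim\beps e^{\hat t^2/2}$ (Gaussian mass $\sim\beps$) for each of $O(1/\beps^2)$ grid guesses of $t$, runs the Riemannian subgradient procedure separately for every guess, and then selects among the $O(1/\beps^2)$ resulting hypotheses with a fresh-sample testing stage (via Massart). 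This grid-plus-testing mechanism is not an optional convenience; it exists because $t$ must be known to additive accuracy $\sim\beps^2$ for the gradient-correlation analysis, and no direct estimate of $t$ to that precision is available within the stated sample budget. Your plan replaces this by ``joint estimation of the direction and the threshold inside a shrinking band,'' which you yourself flag as the delicate point but never supply.

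The concrete gap is your structural lemma. You justify the per-round progress by convexity of the band-localized LeakyReLU surrogate, but the paper explicitly observes that once the band is tied to the current hypothesis the conditioned objective is \emph{not} convex, and even if you freeze the band within a round, convexity of the surrogate does not by itself give a constant-factor angle contraction, nor does it locate the surrogate's (band-restricted) minimizer near $(\wstar,t)$ for general biased halfspaces --- the paper deliberately avoids arguing about minimizers at all. Its actual core is: (i) a direct computation showing the band-conditioned expected subgradient satisfies $\E[\g(\w)\cdot\wstar]\le-(1-2\eta)\sin\theta/(2\sqrt{2\pi})$ whenever $\beps e^{t^2/2}\le\theta\le 1/(5t)$ and $|\hat t-t|\le\beps^2/8$ (\Cref{lem:-g(w)*w-lowerbound}), together with a matching norm bound and a uniform concentration bound at $\widetilde O(d/((1-2\eta)^2\beps))$ samples; and (ii) a non-black-box induction with geometrically shrinking step sizes showing $\sin(\theta_k/2)\le(1-\rho)^k$ (\Cref{claim:contraction-sin theta_k/2}), which the authors stress cannot be obtained from standard nonsmooth convex optimization guarantees. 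Your proposal asserts the conclusion of (i)--(ii) as a ``structural lemma'' to be proved, so as written it establishes the routine parts (reduction, warm start, sample accounting) while leaving the theorem's actual difficulty --- angle contraction for general, non-homogeneous halfspaces with an unknown threshold, plus the selection mechanism among candidate thresholds --- unproved. (A smaller issue: your side claim that $\widetilde\Theta(d/\eps^2)$ Chow samples alone give disagreement $\widetilde O(\beps)$ ``in every regime'' also presumes an adequately accurate threshold, which the Chow vector does not provide by itself.)
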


Some comments are in order. 
We note that the first term in the sample complexity matches 
the information-theoretic lower bound (within a logarithmic factor), 
even for homogeneous halfspaces ($p=1/2$); see, e.g.,~\cite{Massart2006, HannekeY15}. 
The second term --- scaling quadratically with $1/\max\{(1-2\eta) p, \eps\}$ ---  
is not information-theoretically necessary 
and dominates the sample complexity when $p = O_\eta(\sqrt{\eps})$. 
In the worst-case, i.e.,   when $p = O_\eta(\eps)$, 
our algorithm has sample complexity $\widetilde{\Theta}_{\eta}(d/\eps^2)$. 
Perhaps surprisingly, we show in \Cref{thm:sq-intro} 
that this quadratic dependence is required for any computationally efficient SQ algorithm; 
and, via~\cite{brennan2020statistical}, for any low-degree polynomial test.

\paragraph{Basics on SQ Model.} 
SQ algorithms are a broad class of algorithms
that, instead of having direct access to samples, 
are allowed to query expectations of bounded functions of the distribution.

\begin{definition}[SQ algorithms] \label{def:vstat}
Let $D$ be a distribution on $\R^d$. 
A statistical query is a bounded function $q: \R^d \to [-1,1]$. 
For $u>0$, the $\mathrm{VSTAT}(u)$ oracle responds to the query $q$
with a value $v$ such that $|v - \E_{\bx \sim D}[q(\bx)]| \leq \tau$, where 
$\tau=\max(1/u,\sqrt{\mathrm{Var}_{\bx \sim D}[q(\x)]/u})$. 
We call $\tau$ the \emph{tolerance} of the statistical query.
A \emph{Statistical Query algorithm} is an algorithm 
whose objective is to learn some information about an unknown 
distribution $D$ by making adaptive calls to the corresponding oracle.
\end{definition}

\noindent The SQ model was introduced in~\cite{Kearns:98}
as a natural restriction of the PAC model~\cite{Valiant:84}. 
Subsequently, the model has been extensively studied 
in a range of contexts, see, e.g.,~\cite{Feldman16b}.
The class of SQ algorithms is broad and captures a range of known 
supervised learning algorithms. 
More broadly, several known algorithmic techniques in machine learning
are known to be implementable using SQs 
(see, e.g.,~\cite{FeldmanGRVX17, FeldmanGV17}).

We can now state our SQ lower bound result. 

\begin{theorem}[SQ Lower Bound] \label{thm:sq-intro} 
Fix any constant $c\in(0,1/2)$ and 
let $d$ be sufficiently large. For any $p\geq 2^{-O(d^c)}$, 
any SQ algorithm that learns the class of $p$-biased halfspaces on $\R^d$ 
with Gaussian marginals in the presence of RCN with $\eta = 1/3$ 
to error less than $\eta+p/3$ either 
requires queries of accuracy better than 
$\widetilde{O}(p d^{c/2-1/4})$, 
i.e., queries to $\mathrm{VSTAT}(\widetilde{O}(d^{1/2-c}/p^2))$, 
or needs to make at least $2^{\Omega(d^{c})}$ statistical queries.
\end{theorem}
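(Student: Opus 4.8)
The plan is to recast the statement as a hypothesis-testing (decision) lower bound in the SQ model and invoke the generic SQ lower bound based on statistical dimension (in the style of \cite{FeldmanGRVX17}). For a unit vector $\v\in\R^d$ let $t_p>0$ be determined by $\pr_{z\sim\normal(0,1)}[z\ge t_p]=p$ (so $t_p=\Theta(\sqrt{\log(1/p)})$, cf.\ \Cref{fact:bound-on-p}), put $f_\v(\bx)=\sgn(\v\cdot\bx-t_p)$, and let $D_\v$ be the law of $(\bx,y)$ with $\bx\sim\normal(\vec 0,\vec I)$ and $y=f_\v(\bx)$ with probability $1-\eta$ (and $y=-f_\v(\bx)$ otherwise), where $\eta=1/3$. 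Every $f_\v$ is $p$-biased, so each $D_\v$ is a valid instance of \Cref{def:RCN-LTF} with optimal error $\eta$; and since $1-2\eta=1/3$, a hypothesis $h$ has error $<\eta+p/3$ on $D_\v$ iff $h$ disagrees with $f_\v$ on a set of Gaussian measure $<p$. As reference I take $D_0$: $\bx\sim\normal(\vec 0,\vec I)$ with $y$ \emph{independent} of $\bx$ and $\pr[y=1]=\eta+(1-2\eta)p=\eta+p/3$; then $D_0$ is not a valid instance and has Bayes error $\eta+p/3$, the error of the constant hypothesis. Consequently, a learner that on a valid instance outputs $h$ with error below $\eta+p/3$ (with an $\Omega(p)$ margin) yields a distinguisher between $D_0$ and a uniformly random $D_\v$: run the learner, then issue one more statistical query estimating $\pr_{D'}[h(\bx)\ne y]$, and output ``planted'' iff the estimate is below $\eta+p/3$. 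This test is correct on $D_0$ for \emph{every} possible output $h$ (so the learner's unconstrained behavior on this invalid input is irrelevant) and correct on $D_\v$ with the learner's success probability; the extra query has range $[0,1]$ and needs only tolerance $O(p)$, which $\mathrm{VSTAT}(\widetilde{O}(d^{1/2-c}/p^2))$ supplies. It thus suffices to prove the decision lower bound.

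The heart of the matter is bounding the pairwise $\chi^2$-correlations of $\{D_\v\}$ relative to $D_0$. Conditioned on a fixed value of $y$, the $\bx$-marginal of $D_\v$ is the standard Gaussian reweighted by a function of $\bx$ that depends on $\bx$ only through $\mathbb{1}[\v\cdot\bx\ge t_p]$ and takes two bounded values; moreover the $y$-marginal of $D_\v$ equals that of $D_0$ and is $\v$-independent. Summing over the two values of $y$, the $\chi^2$-inner product of $D_\v$ and $D_{\v'}$ relative to $D_0$ equals a bounded positive multiple of $\E_{\bx\sim\normal(\vec 0,\vec I)}[(\mathbb{1}[\v\cdot\bx\ge t_p]-p)(\mathbb{1}[\v'\cdot\bx\ge t_p]-p)]=\pr[\v\cdot\bx\ge t_p,\;\v'\cdot\bx\ge t_p]-p^2$. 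Taking $\v=\v'$ gives $p(1-p)$, hence $\beta:=\chi^2(D_\v\|D_0)=O(p)$. For $\v\ne\v'$ with $\langle\v,\v'\rangle=\rho$, the pair $(\v\cdot\bx,\v'\cdot\bx)$ is bivariate normal with correlation $\rho$, and Price's theorem, $\tfrac{\mathrm{d}}{\mathrm{d}\rho}\pr[\v\cdot\bx\ge t_p,\v'\cdot\bx\ge t_p]=\phi_2(t_p,t_p;\rho)$ with $\phi_2(t,t;\rho)=\phi(t)^2(1-\rho^2)^{-1/2}e^{\rho t^2/(1+\rho)}$, yields $\pr[\v\cdot\bx\ge t_p,\v'\cdot\bx\ge t_p]-p^2=O(\rho\,\phi(t_p)^2)$ as long as $\rho\log(1/p)=O(1)$; since $\phi(t_p)=\widetilde{O}(p)$ this gives the off-diagonal bound $\gamma=\widetilde{O}(\rho\,p^2)$.

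Now use the standard probabilistic construction of $N=2^{\Omega(d^{2c})}$ unit vectors in $\R^d$ with pairwise inner products at most $\rho:=d^{c-1/2}$ (for which $\rho\log(1/p)=O(1)$ holds in the main range of $p$). Then $\{D_\v\}$ is $(\gamma,\beta)$-correlated relative to $D_0$ with $\beta=O(p)$ and $\gamma=\widetilde{O}(p^2 d^{c-1/2})$, so $1/\gamma=\widetilde{\Omega}(d^{1/2-c}/p^2)$ and $N\gamma/\beta=\widetilde{\Omega}(2^{d^{2c}}\,d^{c-1/2}\,p)\ge 2^{\Omega(d^c)}$ whenever $p\ge 2^{-O(d^c)}$. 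The generic SQ lower bound for decision problems then implies that any SQ algorithm distinguishing $D_0$ from a uniformly random $D_\v$ must either use a query of accuracy better than $\widetilde{O}(p\,d^{c/2-1/4})$ --- i.e., the $\mathrm{VSTAT}(\widetilde{O}(d^{1/2-c}/p^2))$ oracle --- or make at least $2^{\Omega(d^c)}$ queries. Combined with the reduction above, this is exactly the asserted lower bound on learning.

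The main obstacle is the uniform bivariate-Gaussian tail estimate $\pr[\v\cdot\bx\ge t_p,\v'\cdot\bx\ge t_p]-p^2=O(\rho\,\phi(t_p)^2)$ and the attendant parameter bookkeeping: the density $\phi_2(t_p,t_p;\rho)$ carries the factor $e^{\rho t_p^2/(1+\rho)}$, so the estimate is only valid while $\rho\log(1/p)=O(1)$, which couples the packing separation $\rho=d^{c-1/2}$, the bias $p$, and the exponent $c$; one must check that $\gamma,\beta,N$ land at the stated thresholds over the whole range $p\ge 2^{-O(d^c)}$ (with very small $p$ together with $c$ near $1/2$ possibly forcing a slightly smaller $\rho$, hence a correspondingly weaker bound). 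A secondary point is the $\Omega(p)$ margin needed in the learning-to-decision step: one verifies that a learner strictly beating the trivial predictor on the constructed instances (where $\mathrm{OPT}=\eta$) does so by $\Omega(p)$, or, alternatively, argues directly from SQ indistinguishability that a cheap learner's output is nearly $\v$-independent and therefore, by a short Gaussian-measure computation, has error at least $\eta+p/3$ on all but a $2p$-fraction of the $D_\v$'s.
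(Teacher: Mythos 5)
Your proposal is correct and mirrors the paper's architecture almost exactly: the same planted family $D_{\vec v}$ (biased halfspace plus RCN), the same null $D_0$ with matching $y$-marginal, the same near-orthogonal packing, the same generic pairwise-correlation SQ machinery, and the same one-extra-query reduction from learning to testing (the paper's \Cref{lem:testing-to-learning} handles your ``margin'' caveat exactly as you suggest, by running the learner with $\eps=(1-2\eta)p/2$ so that the planted error $\eta+p/6$ is separated from the null's Bayes error $\eta+p/3$ by a gap detectable with tolerance $(1-2\eta)p$). Where you genuinely diverge is in the key correlation estimate. The paper proves \Cref{lem:correlation} by Hermite analysis: it computes the Hermite coefficients of $\sign(z-t)$, expresses $\E[f_{\vec v}f_{\vec u}]$ as $\sum_i \cos^i\theta\, c_i^2$, and sums the series with the Mehler formula, obtaining the bound $\widetilde{O}(|\cos\theta|\,e^{-t^2}e^{|\cos\theta|t^2})$; it then converts to $\chi_{D_0}$-correlations via \Cref{lem:chidiv-bounds}. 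You instead differentiate the bivariate Gaussian orthant probability in the correlation parameter (the Plackett/Price identity) and bound the resulting density $\phi_2(t,t;\rho)$, while handling the $\chi^2$ translation by a direct expansion of the two-valued reweighting; this is a more elementary, purely probabilistic route that yields the identical bound $\widetilde{O}(\rho p^2)$ under the identical constraint $\rho t^2=O(1)$ (the paper's ``$t/d^{1/4-c/2}\le 1/2$ by assumption''), so the parameter bookkeeping you flag at extreme $p$ and $c$ near $1/2$ is shared with, not worse than, the paper. The Hermite/Mehler route has the advantage of slotting into the standard hidden-direction SQ framework and making the degree-by-degree structure explicit, while your derivative-in-$\rho$ argument is shorter and avoids series manipulations; your larger packing of $2^{\Omega(d^{2c})}$ vectors at inner product $d^{c-1/2}$ also makes the final $s\gamma/\beta$ accounting slightly more comfortable than the paper's $2^{\Omega(d^c)}$.
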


Informally speaking, \Cref{thm:sq-intro} shows that no SQ algorithm 
can learn $p$-biased halfspaces in the presence of RCN (with $\eta = 1/3$)  
to accuracy $\eta+O(\eps)$ {(considering $p > \eps/2$)}
with a sub-exponential in $d^{\Omega(1)}$ many queries, 
unless using queries of small tolerance  --- that would require at least
$\Omega(\sqrt{d}/p^2)$ samples to simulate.
This result can be viewed as a near-optimal information-computation tradeoff
for the problem, within the class of SQ algorithms. 
When $p=2\eps$, the computational sample complexity lower bound we obtain is 
$\Omega(\sqrt{d}/\eps^2)$. That is, for sufficiently small $\eps$, 
the computational sample complexity of the problem (in the SQ model) 
is polynomially higher than its information-theoretic sample complexity.

Via~\cite{brennan2020statistical}, we obtain 
a qualitatively similar lower bound in the low-degree polynomial testing model;
see \Cref{app:low-degree}.

\subsection{Our Techniques} \label{ssec:techniques}

\paragraph{Upper Bound.} 
At a high level, our main algorithm consists of three main 
subroutines. We start with a simple Initialization (warm-start) 
subroutine which ensures that we can choose a weight vector 
$\w_0$ with sufficiently small angle 
to the target vector $\wstar$. 
This subroutine essentially amounts to estimating the degree-one
Chow parameters of the target function and incurs sample complexity 
$\widetilde{O}(d/(\max(p(1 - 2\eta),\eps)^2)$. 
We emphasize that our procedure does not require knowing the bias $p$ 
of the target halfspace; instead, it estimates this parameter to a constant factor.

Our next (and main) subroutine is an optimization procedure that 
is run for $\widetilde{O}(1/\eps^2)$ different guesses of the threshold $t$. 
At a high level, our optimization subroutine can be seen as a variant 
of Riemannian (sub)gradient descent on the unit sphere, 
applied to the empirical LeakyReLU loss --- defined as $\mathrm{LeakyReLU}_{\lambda}(u)=(1-\lambda)u\1\{u\geq 0\}+\lambda u \1\{u<0\}$ --- with parameter $\lambda$ set to $\eta$, $u = \w \cdot \x,$ and 
{\em with samples restricted to a band}, 
namely $a<|\w \cdot \x|<b$ --- 
with $a$ and $b$ chosen as functions of the guess for the threshold $t$. 
The band restriction is key in avoiding $\Omega(d/\eps^2)$ dependence 
in the sample complexity; instead, we only require order-$(d/\eps)$ samples 
to be drawn for the empirical LeakyReLU loss subgradient estimate. 
{Using the band, the objective is restricted to a region 
where the current hypothesis incorrectly classifies 
a constant fraction of the mass from which we can perform ``denoising'' 
with constantly many samples.}

For a sufficiently accurate estimate $\hat{t}$ of $t$ 
(which is satisfied by at least one of the guesses for which our optimization procedure is run), 
we argue that there is a sufficiently negative correlation 
between the empirical subgradient and the target weight vector $\wstar.$ 
This result, combined with our initialization, 
enables us to inductively argue that the distance between 
the weight vector constructed by the optimization procedure 
and the target vector $\wstar$ contracts and becomes smaller than $\eps$ 
within order-$\log(1/\eps)$ iterations. 
This result is quite surprising, since the LeakyReLU loss is nonsmooth (it is, in fact, piecewise linear) 
and we do not explicitly bound its growth outside the set of its minima 
(i.e., we do not prove a local error bound, which would typically be used to prove linear convergence). 
Thus, the result we establish is impossible to obtain 
using black-box results for nonsmooth optimization. 
Additionally, we never explicitly use the LeakyReLU loss function 
or prove that it is minimized by $\wstar$; instead, we directly prove 
that the vectors $\w$ constructed by our procedure converge 
to the target vector $\wstar.$  At a technical level, our result 
is enabled by a novel inductive argument, which we believe 
may be of independent interest (see \Cref{claim:contraction-sin theta_k/2} for more details).

Since each run of our optimization subroutine returns a different hypothesis, 
at least one of which is accurate (the one using the ``correct'' guess of the threshold 
$t$), we need an efficient way to select a hypothesis with the desired error guarantee.
This is achieved via our third subroutine --- a simple hypothesis testing procedure, 
which draws a fresh sample and selects a hypothesis with the lowest test error. 
By standard results~\cite{Massart2006}, such a hypothesis satisfies our target error guarantee.

\vspace{-0.3cm}

\paragraph{SQ Lower Bound.}
To prove our SQ lower bound, it suffices to establish
the existence of a large set of distributions whose pairwise correlations are small~\cite{FeldmanGRVX17}. 
Inspired by the methodology of~\cite{DKS17-sq}, we 
achieve this by selecting our distributions 
on labeled examples $(\bx, y)$
to be random rotations of a single one-dimensional distribution 
that nearly matches low-order Gaussian moments, 
and embedding this in a hidden random direction.
Our hard distributions are as follows: 
We define the halfspaces 
$f_{\bv}(\x) = \sgn(\bv \cdot \x  - t)$, 
where $\bv$ is a randomly chosen unit vector 
and the threshold $t$ is chosen such that
$\pr_{\bx \sim \normal}[f_{\bv}(\x) = 1] = p$. 
We then let $y = f_{\bv}(\x)$ with probability $2/3$,  
and $-f_{\bv}(\x)$ otherwise. 
By picking a packing of nearly orthogonal vectors 
$\bv$ on the unit sphere (i.e., set of vectors with 
pairwise small inner product), 
we show that each pair of these $f_{\bv}$'s 
corresponding to distinct vectors in the packing have 
very small pairwise correlations 
(with respect to the distribution where $\x$ 
is a standard Gaussian and $y$ is independent of $\x$).  
While the results of~\cite{DKS17-sq}
cannot be directly applied to give our desired corerlation 
bounds, the Hermite analytic ideas behind them are useful in 
this context. In particular, 
the correlation between two such distributions can be computed 
in terms of their angle and the Hermite spectrum. 
A careful analysis (\Cref{lem:correlation})
gives an inner product that 
is $\widetilde{O}(\cos(\theta) p)$, where $\theta$ is the angle 
between the corresponding vectors. 
Combined with our packing bound,  
this is sufficient to obtain our final SQ lower bound result.

\subsection{Related and Prior Work} \label{sec:related}
A long line of work in theoretical machine learning has focused
on developing computationally efficient algorithms for 
learning halfspaces under natural distributional assumptions in the 
presence of RCN and related semi-random noise models; see, 
e.g.,~\cite{AwasthiBHU15, AwasthiBHZ16, YanZ17, ZhangLC17, DKTZ20, DKTZ20b, DKKTZ20, DKKTZ21b, Diakonikolas2022b}. Interestingly, the majority of these 
works focused on the special case of homogeneous halfspaces. We next 
describe in detail the most relevant prior work.

Prior work~\cite{YanZ17, ZSA20, ZL21} {gave} sample near-optimal 
and computationally efficient learners 
for {\em homogeneous} halfspaces with RCN (and, more generally, bounded noise). 
Specifically, these works {developed} algorithms 
using near-optimal sample complexity of 
$\widetilde{O}_{\eta}(d/\eps)$. However, their algorithms and analyses are customized to the homogeneous
case, and it is not clear how to extend them for general halfspaces. In fact, since all of these algorithms
are easily implementable in the SQ model, our SQ lower bound (\Cref{thm:sq-intro}) implies
that these prior algorithms {\em cannot} be adapted to handle the general case 
without an increase in sample complexity. Finally, \cite{DKTZ22a} 
gave an algorithm with sample complexity $\widetilde{O}(d/\eps^2)$ 
to learn general Gaussian halfspaces with adversarial label noise 
to error $O(\opt)+\eps$, where $\opt$ is the optimal misclassification error.  
Unfortunately, this algorithm does not suffice for our RCN setting (where $\opt = \eta$), 
since its error guarantee is significantly weaker than ours.

{Very recent work~\cite{DDKWZ23} gave an SQ lower bound for $\gamma$-margin halfspaces
with RCN, which has some similarities to ours. Specifically,~\cite{DDKWZ23} showed
that any efficient SQ algorithm for that problem requires sample complexity 
$\Omega(1/(\gamma^{1/2}\eps^2))$. Intuitively, the margin assumption allows for a much 
more general family of distributions compared to our Gaussian assumption here. 
In particular, the SQ construction of that work does not have any implications in our setting.
Even though the Gaussian distribution does not have a margin, 
it is easy to see that it satisfies an approximate margin property 
for $\gamma \sim 1/\sqrt{d}$. In fact, using an adaptation of our construction, 
we believe we can quantitatively strengthen the lower bound of~\cite{DDKWZ23} to
$\Omega(1/(\gamma \eps^2))$. For more details, see \Cref{app:margin-comparison}. 
}

\subsection{Preliminaries} \label{sec:prelims}

For $n \in \Z_+$, we define $[n] \eqdef \{1, \ldots, n\}$.  We use lowercase bold characters for vectors
and uppercase bold characters for matrices.  For $\bx \in \R^d$ and $i \in [d]$, $\bx_i$ denotes the
$i$-th coordinate of $\bx$, and $\|\bx\|_2 \eqdef (\littlesum_{i=1}^d {\bx_i}^2)^{1/2}$ denotes the
$\ell_2$-norm of $\bx$.  We use $\bx \cdot \by $ for the inner product of $\bx, \by \in \R^d$
and $\theta(\bx, \by)$ for the angle between $\bx$ and $\by$.  We slightly abuse notation and denote by 
$\vec e_i$ the $i$\textsuperscript{th} standard basis vector in $\R^d$.  We further use $\1_A$ to denote the
characteristic function of the set $A$, i.e., $\1_A(\x)= 1$ if $\x\in A$ and $\1_A(\x)= 0$ if
$\x\notin A$.  
We use the standard $O(\cdot), \Theta(\cdot), \Omega(\cdot)$ asymptotic notation. We also use
$\wt{O}(\cdot)$ to omit poly-logarithmic factors {in the argument}.
We use $\E_{x\sim \D}[x]$ for the expectation of the random variable $x$ according to the
distribution $\D$ and $\pr[\mathcal{E}]$ for the probability of event $\mathcal{E}$. For simplicity
of notation, we omit the distribution when it is clear from the context.  For $(\x,y)$
distributed according to $\D$, we denote by $\D_\x$  the distribution of $\x$.
As is standard, we use $\normal$ to denote the standard normal distribution in $d$ dimensions; i.e., with its mean being the zero vector and its covariance being the identity matrix.

\section{Efficiently Learning Gaussian Halfspaces}\label{sec:algorithm}

In this section, we prove \Cref{thm:main} by analyzing \Cref{alg:high-level}. 
As discussed in the introduction and shown in \Cref{alg:high-level}, there are three main procedures in 
our algorithm. 
The {guarantees of our} Initialization (warm start) procedure, 
which ensures sufficient 
correlation between the initial weight vector $\w_0$ and the target vector $\wstar$, are stated in 
\Cref{sec:initialization}, while the proofs and pseudocode are in \Cref{app:sec:initialization}. Our 
main results for this section, including the Optimization procedure and associated analysis, are in 
\Cref{sec:optimization}. The Testing procedure is standard and deferred to \Cref{app:sec:optimization}, 
together with most of the technical details from this section. 

Throughout this section, we assume that the parameter $\eta$ (RCN parameter) is known. 
As will become clear from our analysis, a constant {factor} approximation to the value of $1-2\eta$ 
is sufficient to obtain our results. For completeness, we show how to obtain such an approximation in \Cref{app:estimating-eta}. For simplicity, we present the results for $t \geq 0$ and 
$t \leq \sqrt{2\log((1-2\eta)/\eps)}$. 
\begin{algorithm}
    \caption{Main Algorithm}\label{alg:high-level}
    \begin{algorithmic}[1]
        \State\textbf{Input:} $\delta,$ $\eta,$ $\eps,$ sample access to distribution $\D$
        \State $[\w_0, \hat{p}] = \mathrm{Initialization}(\delta, \eta, {\eps})$; $\beps = \eps/(1-2\eta)$
        \State $t_0 = \sqrt{2\log(1/\hat{p})},$ $M = 8\big\lceil \frac{\sqrt{2(\log(4/\hat{p}))} - \sqrt{2\log(1/\hat{p})}}{(\beps)^2}\big\rceil + 1$
        \State Draw $N_2 = O(\frac{d\log(1/\delta)\log(1/\beps)}{(1 - 2\eta)^2\beps})$ samples $\{(\x\ith,y\ith)\}_{i=1}^{N_2}$ from $\D$
        \For{$m=1:M$}
        \State $t_m = t_0 + (m-1) \frac{(\beps)^2}{8}$, $\gamma_m = \frac{\beps}{2}\exp(t_m^2/2)$ \State $\wht_m = \mathrm{Optimization}(\w_0, t_m, \gamma_m, \eta, \{(\x\ith,y\ith)\}_{i=1}^{N_2})$
        \EndFor
        \State [$\wht_{\rm out},t_{\rm out}] = \mathrm{Testing}((\wht_1,t_1), (\wht_2,t_2), \dots, (\wht_M,t_M))$
        \State\Return $\wht_{\rm out}$, $t_{\rm out}$
    \end{algorithmic}
\end{algorithm}
This is without loss of generality. For the former, it is by the 
simple symmetry of the standard normal 
distribution that the entire argument translates into the case $t < 0,$
possibly by exchanging the 
meaning of `+1' and `-1' labels. For the latter, we note that when the bias is small, 
i.e., for $p \leq \eps/(2(1-2\eta)),$ a constant hypothesis suffices.

\subsection{Initialization Procedure}\label{sec:initialization}

We begin this section with \Cref{lem:initialization}, 
which shows that given 
$N_1 = \widetilde{O}(d /(\kappa^4 p^2(1-2\eta)^2)\log(1/\delta))$ 
i.i.d.\ samples from $\D,$ we can construct a good 
initial point $\w_0$ that forms an angle at most $\kappa$ 
with the target weight vector $\wstar.$ 
For our purposes, $\kappa$ should be of the order ${1}/{t}.$ 
For $t \leq \sqrt{2\log(1/\beps)},$ where $\beps = \eps/(1-2\eta),$ 
we can ensure that 
$N_1 = \widetilde{O}(d /(p^2(1-2\eta)^2)\log(1/\delta)).$ 
The downside of the lemma, however, is that the number of samples $N_1$ requires at least approximate knowledge of the bias parameter $p$ 
(or, more accurately, of $e^{-t^2/2}$). We address this challenge by arguing (in \Cref{lem:approx-exp(-t^2/2)}) that we can estimate $p$ 
using the procedure described in \Cref{alg:initialization}, without increasing 
the total number of drawn samples by a factor larger than order-$\log(1/\beps).$ 
\begin{restatable}[Initialization via Chow Parameters]{lemma}{ConceptInitial}\label{lem:initialization} 
    Given $\kappa > 0,$ define $p_t = e^{-t^2/2},$ {$N_1 = O(d /(\kappa^4 {p_t}^2(1-2\eta)^2)\log(1/\delta))$} and let $(\bx^{(i)}, y^{(i)})$ for $i \in [N_1]$ be i.i.d.\ samples drawn from $\D$. Let $\bu = \frac{1 }{N_1}\sum_{i = 1}^{N_1} \x^{(i)} y^{(i)}$ and $\w_0=\bu/\|\bu\|_2$. Then, with probability $1-\delta,$ we have  $\theta(\w_0,\wstar)\leq \kappa$. \end{restatable}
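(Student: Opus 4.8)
The plan is to work with the (noisy) degree-one Chow vector $\E_{\D}[\x y]$: I would first compute this population quantity, show it is a positive scalar multiple of $\wstar$ of length $\Theta\big((1-2\eta)\,p_t\big)$, and then show that the empirical average $\bu$ concentrates around it closely enough that its normalization $\w_0$ lies within angle $\kappa$ of $\wstar$. (I will assume $\kappa$ is below a small absolute constant, which is the regime in which the lemma is applied; larger $\kappa$ is handled by running the argument with a smaller target.)

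For the population step, I would use that the RCN flip is independent of $\x$, so $\E_{\D}[\x y] = (1-2\eta)\,\E_{\x\sim\normal}[\x\, f(\x)]$ with $f(\x) = \sgn(\wstar\cdot\x + t)$ and $\|\wstar\|_2 = 1$. Decomposing $\x = (\wstar\cdot\x)\wstar + \x_{\perp}$ into components along and orthogonal to $\wstar$, and noting that $f(\x)$ depends on $\wstar\cdot\x$ alone while $\x_{\perp}$ is independent of $\wstar\cdot\x$ with mean zero, the orthogonal part vanishes in expectation, so $\E_{\x}[\x f(\x)] = \E_{z\sim\normal(0,1)}[z\,\sgn(z+t)]\,\wstar$. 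A one-line Gaussian integral ($\int z\,\varphi(z)\,dz = -\varphi(z)$, with $\varphi$ the standard density) gives $\E_z[z\,\sgn(z+t)] = 2\varphi(t) = \sqrt{2/\pi}\,e^{-t^2/2} = \sqrt{2/\pi}\,p_t$, hence $\E_{\D}[\bu] = \rho\,\wstar$ with $\rho \eqdef \sqrt{2/\pi}\,(1-2\eta)\,p_t > 0$.

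For the concentration step, each summand $\x\ith y\ith$ is a mean-$\rho\wstar$ vector whose coordinates are (signed) standard Gaussians, hence $O(1)$-subgaussian per coordinate. I would invoke a standard deviation bound for the empirical mean of i.i.d.\ light-tailed vectors in $\R^d$ --- for instance, conditioning on the high-probability event $\max_i\|\x\ith\|_2 = O(\sqrt{d\log(N_1/\delta)})$ and applying a vector Bernstein inequality to the truncated summands, with the (exponentially small) truncation bias controlled separately --- to conclude that, with probability at least $1-\delta$,
\[
\big\|\bu - \rho\,\wstar\big\|_2 \;\le\; C\,\sqrt{\frac{d\log(1/\delta)}{N_1}}
\]
for an absolute constant $C$. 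Plugging in $N_1 = \Theta\!\big(d\log(1/\delta)/(\kappa^4\,p_t^2\,(1-2\eta)^2)\big)$ (with a suitable constant) makes the right-hand side at most $\kappa^2\rho/10$ --- which is even stronger than what the last step requires.

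Finally, writing $\bu = \rho\,\wstar + \vec g$ with $\|\vec g\|_2 \le \kappa\rho/10$, I would note $\wstar\cdot\bu \ge (9/10)\rho > 0$, so $\w_0 = \bu/\|\bu\|_2$ is well-defined and the component of $\bu$ orthogonal to $\wstar$ has norm at most $\|\vec g\|_2$; hence
\[
\tan\theta(\w_0,\wstar) \;=\; \frac{\|\bu - (\wstar\cdot\bu)\,\wstar\|_2}{\wstar\cdot\bu} \;\le\; \frac{\|\vec g\|_2}{(9/10)\rho} \;\le\; \kappa ,
\]
and since $\theta \le \tan\theta$ on $[0,\pi/2)$ this yields $\theta(\w_0,\wstar)\le\kappa$, as claimed. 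The main obstacle is the concentration bound: obtaining the $\log(1/\delta)$ (rather than $1/\delta$) dependence for the norm of the empirical mean of these $d$-dimensional, large-in-norm vectors needs either an off-the-shelf subgaussian-vector inequality or the truncation-plus-Bernstein argument above with separate control of the truncation bias; the population computation, the symmetry argument, and the angle conversion are all elementary.
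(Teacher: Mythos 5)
Your proposal is correct and follows essentially the same route as the paper: compute the noisy degree-one Chow vector $\E_{\D}[y\x]=\sqrt{2/\pi}\,(1-2\eta)p_t\,\wstar$ via independence of the orthogonal component and a one-line Gaussian integral, concentrate the empirical mean using a standard sub-Gaussian/Hoeffding-type vector bound with the same $N_1$ scaling, and convert the norm deviation into the angle bound. The only cosmetic difference is that you pass through $\tan\theta$ while the paper bounds $\cos\theta(\w_0,\wstar)$ and uses $\cos\theta\le 1-\theta^2/4$; both are equally valid.
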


We now leverage \Cref{lem:initialization} to argue about the correctness of implementable Initialization procedure, stated as \Cref{alg:initialization} in \Cref{app:sec:initialization}, where the proofs for this subsection can be found.

\begin{restatable}{lemma}{InitialAlgLem}\label{lem:approx-exp(-t^2/2)}
    Consider the Initialization procedure described by \Cref{alg:initialization} {in \Cref{app:sec:initialization}}. If $0 \leq t \leq \sqrt{2\log((1-2\eta)/\eps)},$ then with probability at least $1 - \delta,$ $
        \exp(-t^2/2) \leq \hat{p}\leq 4\exp(-t^2/2).$
    The algorithm draws a total of {$\widetilde{O}\big(\frac{d\log(1/\delta)}{\max\{(1 - 2\eta)p,\, \eps\}^2}\big)$} samples and ensures that $\theta(\w_0, \wstar) \leq \min\{\frac{1}{5t}, \frac{\pi}{2}\}.$
\end{restatable}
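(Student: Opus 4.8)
The plan is to analyze the Initialization procedure (Algorithm \ref{alg:initialization}) in two stages: first obtain a crude constant-factor estimate $\hat p$ of $p_t = e^{-t^2/2}$, then feed $\hat p$ into the Chow-parameter estimator of Lemma \ref{lem:initialization} with the right setting of $\kappa$. For the first stage I would note that, up to the $(1-2\eta)$ scaling, the label expectation under $\D$ reveals the bias: writing $\mu := \E_{\D}[y] = (1-2\eta)\,\E_{\x\sim\normal}[f(\x)] = (1-2\eta)(1-2p)$, so $p = \tfrac12(1 - \mu/(1-2\eta))$, and by \Cref{fact:bound-on-p} the bias $p$ and $p_t = e^{-t^2/2}$ agree up to a $\mathrm{poly}(t) = \mathrm{polylog}(1/\eps)$ factor (since $p \sim (1/t)e^{-t^2/2}$). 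The key quantitative point is that, in the regime $0\le t\le\sqrt{2\log((1-2\eta)/\eps)}$, we have $p \gtrsim \eps/(1-2\eta)$ (equivalently $p_t \gtrsim \beps$), so estimating $\mu$ to additive accuracy $\Theta((1-2\eta)\beps)$ — which by a Chernoff/Hoeffding bound costs $\widetilde O(\log(1/\delta)/((1-2\eta)^2\beps^2)) = \widetilde O(\log(1/\delta)/\max\{(1-2\eta)p,\eps\}^2)$ samples — already pins down $p$, and hence $p_t$, to within a constant factor. A small amount of care (choosing the constants in the band endpoints and in $\hat p$) gives exactly the one-sided bracket $\exp(-t^2/2)\le\hat p\le 4\exp(-t^2/2)$ claimed; the factor-$4$ slack is deliberately generous so that $\hat p$ is a valid upper estimate of $p_t$ with room to spare, which is what the downstream threshold guesses $t_m$ in \Cref{alg:high-level} need.

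The second stage is a direct application of \Cref{lem:initialization} with $\kappa = \Theta(1/t)$ — concretely I would take $\kappa$ a small constant times $1/t$ so that the final bound $\theta(\w_0,\wstar)\le\min\{1/(5t),\pi/2\}$ holds (the $\pi/2$ cap covers $t = O(1)$, where $1/(5t)$ need not be small). The number of samples Lemma \ref{lem:initialization} requires is $N_1 = O\big(d/(\kappa^4 p_t^2(1-2\eta)^2)\log(1/\delta)\big)$; substituting $\kappa = \Theta(1/t)$ and $t^2 = O(\log(1/\beps))$ turns $1/\kappa^4$ into a $\mathrm{polylog}(1/\eps)$ factor, and $p_t \gtrsim \beps$ together with $p_t$ and $p$ being within a $\mathrm{polylog}$ factor of each other gives $N_1 = \widetilde O\big(d\log(1/\delta)/\max\{(1-2\eta)p,\eps\}^2\big)$. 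One subtlety: Lemma \ref{lem:initialization} as stated wants the true $p_t$ in the sample count, whereas the algorithm only knows $\hat p \in [p_t, 4p_t]$; since $\hat p \ge p_t$, drawing $N_1$ based on $\hat p$ (i.e.\ using $1/\hat p^2 \le 1/p_t^2$) would be too few samples, so the algorithm must use something like $1/\hat p^2$ scaled up, or rather draw a number of samples governed by an upper bound $\hat p \ge p_t$ would give a lower bound on $1/p_t$ — so I should instead observe that the algorithm uses a \emph{lower} proxy for $p_t$: it is $p_t \ge \hat p/4$ that matters, so drawing $O(d/(\kappa^4(\hat p/4)^2(1-2\eta)^2)\log(1/\delta))$ samples suffices and is within a constant factor of $N_1$. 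Combining the two sample budgets by a union bound over the two stages (each run at failure probability $\delta/2$) yields the total $\widetilde O\big(d\log(1/\delta)/\max\{(1-2\eta)p,\eps\}^2\big)$.

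The main obstacle, I expect, is handling the interface between the two stages cleanly: the sample complexity of the Chow-parameter step genuinely depends on $p_t$ (which is unknown), and one must verify that the constant-factor estimate $\hat p$ from the first stage is both (i) accurate enough that plugging $\hat p$ in for $p_t$ only perturbs $N_1$ by a constant factor, and (ii) obtained at a cost that does not exceed the target $\widetilde O(d/\max\{(1-2\eta)p,\eps\}^2)$. This is where the restriction $t\le\sqrt{2\log((1-2\eta)/\eps)}$ is essential — it is exactly the condition guaranteeing $p_t = e^{-t^2/2}\ge \eps/(1-2\eta) = \beps$, so that $\max\{(1-2\eta)p,\eps\} = \Theta((1-2\eta)p)$ and the first-stage cost $\widetilde O(1/((1-2\eta)^2\beps^2))\cdot d\log(1/\delta)$ coincides (up to $\mathrm{polylog}$) with the claimed bound. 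A secondary, more bookkeeping-level obstacle is tracking the polylogarithmic factors hidden by the substitutions $1/\kappa^4 = \mathrm{polylog}$ and $p \asymp p_t$ up to $\mathrm{polylog}$, and confirming they are all absorbed into the $\widetilde O(\cdot)$; I would relegate that to \Cref{app:sec:initialization} with the pseudocode.
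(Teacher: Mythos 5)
Your second stage (invoking \Cref{lem:initialization} with $\kappa = \Theta(1/t)$, using $p_t \geq \hat p /4$ so that plugging the estimate in only costs a constant factor, and absorbing $1/\kappa^4 = \mathrm{polylog}(1/\beps)$ into the $\widetilde O$) matches the paper and is fine. The genuine gap is in your first stage. First, the lemma is about the specific procedure in \Cref{alg:initialization}, whose first stage is an \emph{adaptive} halving loop: it guesses $p_j = 2^{-j}$, draws $n_j \propto d/((1-2\eta)^2 p_j^2)$ samples, and stops as soon as $\|\bu_j\|_2 \geq \tfrac34\sqrt{2/\pi}(1-2\eta)p_j$, exploiting the identity $\|\E[y\x]\|_2 = \sqrt{2/\pi}(1-2\eta)e^{-t^2/2}$; the total cost is then dominated by the last iteration, where $p_j = \Theta(e^{-t^2/2})$, which is exactly what keeps it within $\widetilde O\big(d\log(1/\delta)/\max\{(1-2\eta)p,\eps\}^2\big)$. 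Your replacement first stage estimates $\E[y]$ to a \emph{fixed} additive accuracy $\Theta((1-2\eta)\beps) = \Theta(\eps)$, which costs $\widetilde\Theta(\log(1/\delta)/\eps^2)$ regardless of the true bias. That is not within the claimed budget: when $(1-2\eta)p \gg \sqrt{d}\,\eps$ (e.g., $p$ and $\eta$ constant and $\eps$ polynomially small), the budget is $\widetilde O\big(d\log(1/\delta)/((1-2\eta)p)^2\big) = \widetilde O(d\log(1/\delta))$ while your first stage alone needs $\sim 1/\eps^2$ samples. Your closing justification is the inequality run backwards: the regime condition $t \leq \sqrt{2\log(1/\beps)}$ gives $e^{-t^2/2} \geq \beps$, i.e., a \emph{lower} bound on the bias, so the claimed bound $d/\max\{(1-2\eta)p,\eps\}^2$ can be far \emph{smaller} than $d/\eps^2$; the two coincide only when $p = \Theta(\beps)$. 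To meet the stated sample complexity you must make the first stage adaptive (accuracy $\Theta((1-2\eta)p)$ with $p$ unknown), which is precisely the role of the paper's guess-and-halve loop.

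A secondary, fixable weakness: you derive the bracket $e^{-t^2/2} \leq \hat p \leq 4e^{-t^2/2}$ by estimating the bias $p$ and passing through \Cref{fact:bound-on-p}, but $p \asymp e^{-t^2/2}/t$ differs from $e^{-t^2/2}$ by the non-constant factor $t$, so a constant-factor estimate of $p$ only gives $e^{-t^2/2}$ after inverting the monotone map $t \mapsto p(t)$ (doable, but you do not carry it out, and the one-sidedness $\hat p \geq e^{-t^2/2}$ with the specific factor $4$ needs an actual argument; also your accuracy $\Theta(\eps)$ gives only a $\sqrt{\log(1/\beps)}$-factor, not constant-factor, estimate of $p$ without a polylog correction). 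The paper avoids all of this by testing the norm of the empirical degree-one Chow vector, which encodes $e^{-t^2/2}$ directly with no inversion through $p$.
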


\subsection{Optimization}\label{sec:optimization}

As discussed before, our Optimization procedure (\Cref{alg:optimization}) can be seen as Riemannian subgradient descent on the unit sphere. Crucial to our analysis is the use of subgradient estimates from \Cref{line:gw} and \Cref{line:subgrad-estimate}, where we condition on the event that the samples come from a thin band, defined in \Cref{line:band}. Without this conditioning, the algorithm would correspond to projected subgradient descent of the LeakyReLU loss on the unit sphere. The conditioning effectively changes the landscape of the loss function being optimized, which cannot be argued anymore to even be convex, as the definition of the band depends on the weight vector $\w$ at which the vector {$\hatg(\w)$} is evaluated. Nevertheless, as we argue in this section, the optimization procedure can be carried out very efficiently, even exhibiting a linear convergence rate. To simplify the notation, in this section  we  denote the conditioned distribution $\D|_{\evt(\w, \tht)}$ by $\D(\w, \tht)$. We carry out the analysis assuming the estimate $\tht$ is within additive $\eps^2$ of the true threshold value $t$; as argued before, this has to be true for at least one estimate $\tht$ for which the Optimization procedure is invoked. 

\begin{algorithm}[tb]
   \caption{Optimization}
   \label{alg:optimization}
\begin{algorithmic}[1]
   \State {\bfseries Input:} $\w_0, \hat{t}, \gammah$, {$\eta$}, $N_2$ i.i.d.\  samples $(\x^{(i)}, y^{(i)})$ from $\D$ 
\State $\mu_0 \gets \frac{(1 - 4\rho)\sqrt{2\pi}}{16(1 - 2\eta)};$ $\rho \gets 0.00098;$ {$P(\tht, \gammah) \gets \pr_{z\sim\normal}[-\tht\leq z\leq -\tht + \gammah]$}
\For{$k=0$ {\bfseries to} $K$}
\State Let $\evt(\w_{k}, \tht):=\{\x: -\tht\leq \w_{k}\cdot\x\leq -\tht + \gammah\}$ \label{line:band}
\State Let $\g(\w_{k}; \x\ith, y\ith) = \frac{1}{2}((1-\eta)\sign(\w_{k}\cdot\x\ith + \tht)-y\ith)\proj_{\w_{k}^\perp}(\x\ith)$ \label{line:gw}
   \State $
        \hatg(\w_{k}) \gets \frac{1}{N_2}\sum_{i=1}^{N_2} \g(\w_{k};\x\ith,y\ith)\frac{\1\{\x\ith\in\evt(\w_{k}, \tht)\}}{P(\tht,\gammah)}
   $ \label{line:subgrad-estimate} 
\State {$\mu_k\gets\mu_{k-1}(1 - \rho)$}
   \State $
       \w_{k+1} \gets \frac{\w_{k} - \mu_k\hatg(\w_{k})}{\|\w_{k} - \mu_k\hatg(\w_{k})\|_2}
   $
   \EndFor
\State \Return $\w_{K+1}$\end{algorithmic}
\end{algorithm}

In the following lemma, we show that if the angle between a weight vector $\w$ and the target vector $\wstar$ is from a certain range, we can guarantee that $\g(\w)$ is sufficiently negatively correlated with $\wstar.$ This condition is then used to argue about progress of our algorithm. The upper bound on $\theta$ will hold initially, by our initialization procedure, and we will inductively argue that it holds for all iterations. The lower bound, when violated, will imply that the distance between $\w$ and $\wstar$ is small, in which case we would have converged to a sufficiently good solution $\w.$
\begin{restatable}{lemma}{NegativeCorrelation}\label{lem:-g(w)*w-lowerbound}
Fix any $\beps\in(0,1)$. Suppose that $0\leq t\leq \sqrt{2\log(1/\beps)}$ and $\w\in\R^d$ is such that $\|\w\|_2 = 1$, and $\theta=\theta(\w,\wstar)$ satisfies the inequality $\beps\exp(t^2/2)\leq \theta\leq 1/(5t)$.
If $|\tht - t| \leq {\beps^2}/{8}$ and $\gammah = ({1}/{2})\beps\exp(\tht^2/2),$ then $
        \E_{(\x, y)\sim\D(\w,\tht)}[\g(\w; \x, y)\cdot\wstar]\leq -{(1 - 2\eta)\sin\theta}/({2\sqrt{2\pi}}).$
\end{restatable}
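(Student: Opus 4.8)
**Plan for proving Lemma \ref{lem:-g(w)*w-lowerbound}.**

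The plan is to compute the inner product $\E_{(\x,y)\sim\D(\w,\tht)}[\g(\w;\x,y)\cdot\wstar]$ exactly in the two-dimensional subspace spanned by $\w$ and $\wstar$, by first removing the noise via the RCN assumption, then reducing to a Gaussian integral over the band. The key observation is that, since RCN flips labels symmetrically, $\E[y\mid\x]=(1-2\eta)f(\x)=(1-2\eta)\sgn(\wstar\cdot\x+t)$; hence conditioning on $\x$ and taking the inner expectation over $y$ turns $\g(\w;\x,y)\cdot\wstar$ into
\[
\tfrac12\big((1-\eta)\sgn(\w\cdot\x+\tht)-(1-2\eta)\sgn(\wstar\cdot\x+t)\big)\,(\pwperp\x)\cdot\wstar .
\]
First I would decompose $\x$ into its components along $\w$ and along $\pwperp\wstar$ (and an orthogonal remainder, which is irrelevant since $\pwperp\x\cdot\wstar$ only sees the $\pwperp\wstar$ direction). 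Writing $\wstar=\cos\theta\,\w+\sin\theta\,\v$ with $\v$ a unit vector in $\w^\perp$, we have $(\pwperp\x)\cdot\wstar=\sin\theta\,(\v\cdot\x)$, so the whole expectation becomes $\sin\theta$ times a two-dimensional Gaussian integral in the variables $z_1=\w\cdot\x$ (restricted by the band $-\tht\le z_1\le-\tht+\gammah$) and $z_2=\v\cdot\x$, normalized by $P(\tht,\gammah)=\pr[-\tht\le z_1\le-\tht+\gammah]$.

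Next I would analyze that integral. On the band, $z_1\approx -\tht\approx -t$ (the band has width $\gammah=O(\beps\,e^{\tht^2/2})$, which is small), so $\sgn(\w\cdot\x+\tht)$ is essentially $\sgn(z_1+\tht)$, which is nonnegative on the band — up to the $O(\gammah)$-width sliver this term contributes a clean $+(1-\eta)$. The term $\sgn(\wstar\cdot\x+t)=\sgn(z_1\cos\theta + z_2\sin\theta + t)$ is the delicate one: since $z_1$ is pinned near $-t$, the argument is roughly $-t\cos\theta+z_2\sin\theta+t = t(1-\cos\theta)+z_2\sin\theta\approx z_2\sin\theta$ (using $1-\cos\theta\le\theta^2/2$ and the hypothesis $\theta\le 1/(5t)$ to bound $t(1-\cos\theta)=O(t\theta^2)$, which is small relative to the scale of $z_2\sin\theta$ fluctuations because $\theta\ge\beps e^{t^2/2}$ keeps $\sin\theta$ from being too tiny). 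So to leading order the integrand, after integrating out $z_2\sim\normal(0,1)$ for fixed $z_1$, behaves like
\[
\tfrac12\big((1-\eta) - (1-2\eta)\E_{z_2}[\sgn(z_2\sin\theta)\,z_2\sin\theta \cdot(\text{sign corrections})]\big),
\]
and the dominant piece is $-\tfrac{(1-2\eta)}{2}\E_{z_2\sim\normal}[|z_2|]\sin\theta = -\tfrac{(1-2\eta)}{2}\sqrt{2/\pi}\,\sin\theta$, i.e. exactly $-(1-2\eta)\sin\theta/\sqrt{2\pi}$, with the $\tfrac12(1-\eta)$ term and the various band-width and $t(1-\cos\theta)$ error terms contributing only lower-order corrections that I would bound crudely and absorb (this is presumably where the small absolute constant $\rho=0.00098$ and the factor of $2$ slack in the statement come from).

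The main obstacle, and the step demanding the most care, is controlling the error terms uniformly over the allowed range $\beps e^{t^2/2}\le\theta\le 1/(5t)$ and $|\tht-t|\le\beps^2/8$: one must show simultaneously that (i) replacing $\tht$ by $t$ in the band endpoints costs only a negligible fraction of $P(\tht,\gammah)$ (here the lower bound $\theta\ge\beps e^{t^2/2}$ and the specific choice $\gammah=\tfrac12\beps e^{\tht^2/2}$ are what make $\gammah$ comparable to $\sin\theta$ rather than dwarfing it or vanishing), (ii) the positive $\tfrac12(1-\eta)$ contribution from the $\sgn(\w\cdot\x+\tht)$ term does not swamp the negative main term — this forces the argument to exploit that $z_2\sin\theta$ has typical magnitude $\gtrsim\sin\theta$ while... actually the positive term is a genuine $\Theta(1)$ contribution, so the real content is that the negative term is also $\Theta(\sin\theta)$ and we only need the \emph{net} to be $\le -(1-2\eta)\sin\theta/(2\sqrt{2\pi})$, i.e. the analysis must be quantitative enough to track the constant. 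I would organize the bound by writing the exact expression, pulling out the leading $-\tfrac{(1-2\eta)}{2}\sqrt{2/\pi}\sin\theta$ term, and then bounding each remaining term by an explicit multiple of $\sin\theta$ using $1-\cos\theta\le\theta^2/2$, standard Gaussian tail and anti-concentration estimates, and the range constraints on $\theta$, $t$, $\tht$, $\gammah$.
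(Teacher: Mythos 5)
Your overall route is the same as the paper's (rotate so that $\w=\e_1$, $\wstar=\cos\theta\,\e_1+\sin\theta\,\e_2$, use $\E[y\mid\x]=(1-2\eta)\sgn(\wstar\cdot\x+t)$, and reduce to a two-dimensional Gaussian integral over the band), but your plan contains a genuine error in how you handle the $(1-\eta)\sgn(\w\cdot\x+\tht)$ term. Because $(\pwperp\x)\cdot\wstar=\sin\theta\,z_2$ with $z_2=\e_2\cdot\x$ independent of $z_1=\w\cdot\x$ and mean zero, that term contributes \emph{exactly zero} to the expectation over the band: $\E[\sgn(z_1+\tht)\1\{z_1\in\text{band}\}\,z_2]=\E[\sgn(z_1+\tht)\1\{z_1\in\text{band}\}]\cdot\E[z_2]=0$ (and on the band $\sgn(z_1+\tht)=1$ anyway). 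Your write-up instead keeps a residual ``$\tfrac12(1-\eta)$'' after integrating out $z_2$ and ultimately calls it ``a genuine $\Theta(1)$ contribution'' that the negative term must beat. Under that accounting the lemma would be unprovable: the negative term is only $\Theta(\sin\theta)$, and $\sin\theta$ can be as small as roughly $\beps$, so a positive $\Theta(1)$ term would swamp it. The paper's corresponding step is precisely the observation you are missing: only the disagreement region contributes, i.e.\ $\E[\g\cdot\wstar\,\1\{\evt\}]=(1-2\eta)\sin\theta\;\E[\x_2\,\1\{\evt,\ \wstar\cdot\x+t\le 0\}]$, and everything reduces to showing this last expectation is at most $-\Pr[\evt]/(2\sqrt{2\pi})$.

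A secondary point: your framing of the remaining work as ``leading term $-(1-2\eta)\sin\theta/\sqrt{2\pi}$ plus lower-order corrections'' understates the band correction. The threshold for $z_2$ is shifted by $t\tan(\theta/2)+(\beps^2/8+\gammah)\cot\theta$, and the term $\gammah\cot\theta\approx\tfrac12\beps e^{t^2/2}/\theta$ is of order a constant (close to $1/2$ when $\theta$ is near its lower bound), not lower order; this is exactly why the lemma's constant is $1/(2\sqrt{2\pi})$ rather than $1/\sqrt{2\pi}$. The paper handles it by showing the total shift is at most $7/8$, so the Gaussian factor $e^{-(7/8)^2/2}\ge 2/3$ survives, and for $t<1$ it separately bounds the small positive region $\{0\le \x_2\le -t/\sin\theta-\x_1\cot\theta\}$ by $\Pr[\evt]/(12\sqrt{2\pi})$. (Also, the constant $\rho=0.00098$ has nothing to do with this lemma; it arises in the later step-size contraction argument.) With the zero-mean observation in place and the shift bounded quantitatively as above, your plan becomes essentially the paper's proof.
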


Since, by construction, $\g(\w)$ is orthogonal to $\w$ (see \Cref{line:gw} in \Cref{alg:optimization}), we can bound the norm of the expected gradient vector by bounding $\g(\w)\cdot\bu$ for some unit vectors $\bu$ that are orthogonal to $\w$ using similar techniques as in \Cref{lem:-g(w)*w-lowerbound}. 
To be specific, we have the following lemma.
\begin{restatable}{lemma}{GradientNormUpperBound}\label{lem:||Eg(w)||-upper-bound}
    Under the assumptions of \Cref{lem:-g(w)*w-lowerbound}, $
        \big\|\E_{(\x,y)\sim\D(\w, \tht)}[\g(\w; \x, y)]\big\|_2 \leq \frac{(1-2\eta)}{\sqrt{2\pi}}.$
\end{restatable}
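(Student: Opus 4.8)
The plan is to reduce the norm bound to a one-dimensional Gaussian computation, exploiting the orthogonality $\g(\w;\x,y)\perp\w$ together with the independence of Gaussian projections along orthogonal directions, exactly as in the proof of \Cref{lem:-g(w)*w-lowerbound}. Since every sample contributes a scalar multiple of $\proj_{\w^\perp}(\x\ith)$, the expected gradient $\E_{(\x,y)\sim\D(\w,\tht)}[\g(\w;\x,y)]$ is orthogonal to $\w$, so
\[
\Big\|\E_{(\x,y)\sim\D(\w,\tht)}[\g(\w;\x,y)]\Big\|_2 = \sup_{\u\perp\w,\ \|\u\|_2=1}\ \E_{(\x,y)\sim\D(\w,\tht)}\big[\g(\w;\x,y)\cdot\u\big],
\]
and it suffices to bound the right-hand side for an arbitrary fixed unit vector $\u\perp\w$. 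For such $\u$ we have $\proj_{\w^\perp}(\x)\cdot\u=\x\cdot\u$; integrating out the label using $\E[y\mid\x]=(1-2\eta)\sign(\wstar\cdot\x+t)$, and noting that on the band $\evt(\w,\tht)$ we have $\w\cdot\x+\tht\in[0,\gammah]$ so that $\sign(\w\cdot\x+\tht)=1$, the quantity becomes
\[
\tfrac12\,\E_{(\x,y)\sim\D(\w,\tht)}\big[\big((1-\eta)-(1-2\eta)\sign(\wstar\cdot\x+t)\big)\,(\x\cdot\u)\big].
\]

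Next I would peel off the parts that average to zero. Let $\v$ be the unit vector in the plane $\mathrm{span}\{\w,\wstar\}$ orthogonal to $\w$ (well defined since $\theta>0$), and decompose $\u=(\u\cdot\v)\,\v+\u_\perp$ with $\u_\perp$ orthogonal to both $\w$ and $\wstar$. Then $\sign(\wstar\cdot\x+t)$ depends on $\x$ only through $(\w\cdot\x,\v\cdot\x)$, while the band event depends only on $\w\cdot\x$; since $\w\cdot\x,\v\cdot\x,\u_\perp\cdot\x$ are mutually independent under $\normal$, the random variable $\u_\perp\cdot\x$ is independent of the band event and of $\sign(\wstar\cdot\x+t)$ and stays mean zero after conditioning on the band, so its contribution vanishes; likewise $(1-\eta)\E[\v\cdot\x]=0$ drops. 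What survives is
\[
\E_{(\x,y)\sim\D(\w,\tht)}\big[\g(\w;\x,y)\cdot\u\big] = -\tfrac{(1-2\eta)(\u\cdot\v)}{2}\,\E_{(\x,y)\sim\D(\w,\tht)}\big[\sign(\wstar\cdot\x+t)\,(\v\cdot\x)\big].
\]

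Finally I would bound this crudely: $|\u\cdot\v|\le 1$, $|\sign(\cdot)|\le 1$, and $\v\cdot\x\sim\normal(0,1)$ even conditioned on the band (because $\v\perp\w$), so the magnitude is at most $\tfrac{1-2\eta}{2}\,\E_{z\sim\normal(0,1)}[|z|]=\tfrac{1-2\eta}{2}\sqrt{2/\pi}=\tfrac{1-2\eta}{\sqrt{2\pi}}$; taking the supremum over $\u$ gives the claim. I do not expect a genuine obstacle here: the estimate is far from tight and uses neither the lower bound on $\theta$ nor the exact value of $\gammah$ (only $\gammah\ge 0$). The one point that needs slight care is the conditioning bookkeeping --- that conditioning on $\{-\tht\le\w\cdot\x\le-\tht+\gammah\}$, an event measurable with respect to $\w\cdot\x$, leaves the joint law of $(\v\cdot\x,\u_\perp\cdot\x)$ unchanged and independent of it --- but this is precisely the Gaussian decomposition already set up in the proof of \Cref{lem:-g(w)*w-lowerbound}.
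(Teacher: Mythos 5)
Your proof is correct, and its opening reduction coincides with the paper's: you use that $\g(\w;\x,y)\perp\w$ and that every direction orthogonal to both $\w$ and $\wstar$ contributes zero (by independence of the corresponding Gaussian coordinate from the band and from $\sgn(\wstar\cdot\x+t)$), so everything collapses to the single coordinate $\v\cdot\x$ in the plane $\mathrm{span}\{\w,\wstar\}$ — exactly the paper's reduction to $\e_2$ after rotation. Where you genuinely diverge is the final estimate. The paper rewrites the label-averaged integrand as $(1-2\eta)$ times the indicator of the half-plane $\{\sin\theta\,\x_2\leq -t-\cos\theta\,\x_1\}$ and then argues by cases: for $t\geq 1$ the indicator forces $\x_2\leq 0$ and the one-sided mean $\E[-\x_2\1\{\x_2\leq 0\}]=1/\sqrt{2\pi}$ finishes, while for $t<1$ it splits off the positive-$\x_2$ piece and discards it using the negativity of the total expectation established inside the proof of \Cref{lem:-g(w)*w-lowerbound}. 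You instead keep the form $\tfrac12\big((1-\eta)-(1-2\eta)\sgn(\wstar\cdot\x+t)\big)$, drop the constant term by mean-zero, and bound $\big|\E[\sgn(\wstar\cdot\x+t)\,(\v\cdot\x)\mid \evt(\w,\tht)]\big|\leq \E_{z\sim\normal}[|z|]=\sqrt{2/\pi}$ directly (valid since $\v\cdot\x$ remains standard normal after conditioning on the $\w\cdot\x$-measurable band); the prefactor $\tfrac{1-2\eta}{2}$ then yields exactly $\tfrac{1-2\eta}{\sqrt{2\pi}}$. The two routes give the same constant because $\tfrac12\E[|z|]=\E[z\,\1\{z\leq 0\}]$ in absolute value, but yours needs no case analysis on $t$, no appeal to \Cref{lem:-g(w)*w-lowerbound}, and in fact none of that lemma's quantitative assumptions on $\theta$, $\tht$, $\gammah$ beyond $\theta>0$ making $\v$ well defined — a slight strengthening worth noting — whereas the paper's version reuses the half-plane computation and sign information it had already set up for the correlation lower bound.
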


The last technical ingredient that we need is the following lemma 
which shows a uniform bound on the difference between the empirical gradient $\hatg(\w)$ 
and its expectation (for more details, see \Cref{app:lem:uniform-convergence-of-g} 
and \Cref{app:cor:up-bound-||hatg-E[g]||} in \Cref{app:sec:algorithm}).

\begin{restatable}{lemma}{corConcentrateEmpiricalGradient}\label{cor:up-bound-||hatg-E[g]||}
Consider the learning problem from \Cref{def:RCN-LTF}. Let $\beps,\tht,\gammah$ be parameters satisfying the conditions of \Cref{lem:-g(w)*w-lowerbound}. Let $\delta\in(0,1)$. Then  using $\widetilde{O}(d\log(1/\delta)/((1-2\eta)^2\beps))$ samples to construct $\hatg$, for any unit vector $\w$ such that $\beps\exp(t^2/2)\leq \theta(\w,\w^*)\leq 1/(5t)$, it holds with probability at least $1-\delta$:
$
    \|\hatg(\w) - \E_{(\x,y)\sim\D(\w,\tht)}[\g(\w)]\|_2\leq (1/4)\|\E_{(\x,y)\sim\D(\w,\tht)}[\g(\w)]\|_2.
$
\end{restatable}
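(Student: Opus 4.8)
The plan is to derive the bound from two ingredients. The first is a dimension-free lower bound on $\|\E_{(\x,y)\sim\D(\w,\tht)}[\g(\w)]\|_2$: for every valid $\w$, writing $\theta\eqdef\theta(\w,\wstar)\in(0,\pi/2]$ and $\wstar=\cos\theta\,\w+\sin\theta\,\u$ with $\u\eqdef\proj_{\w^\perp}(\wstar)/\sin\theta$ a unit vector orthogonal to $\w$, the fact that $\g(\w)\perp\w$ by construction (\Cref{line:gw}) gives $\E[\g(\w)]\cdot\wstar=\sin\theta\,(\E[\g(\w)]\cdot\u)$; combined with the estimate $\E[\g(\w)]\cdot\wstar\le-(1-2\eta)\sin\theta/(2\sqrt{2\pi})$ of \Cref{lem:-g(w)*w-lowerbound}, this yields $\|\E[\g(\w)]\|_2\ge|\E[\g(\w)]\cdot\u|\ge(1-2\eta)/(2\sqrt{2\pi})$ (and \Cref{lem:||Eg(w)||-upper-bound} gives the matching upper bound, so in fact $\|\E[\g(\w)]\|_2=\Theta(1-2\eta)$, though only the lower bound is needed). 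The second ingredient is a \emph{uniform} concentration estimate: with probability $\ge1-\delta$, $\sup_{\w}\|\hatg(\w)-\E_{(\x,y)\sim\D(\w,\tht)}[\g(\w)]\|_2\le(1-2\eta)/(8\sqrt{2\pi})$, the supremum being over all unit $\w$ with $\beps\exp(t^2/2)\le\theta(\w,\wstar)\le1/(5t)$. Since $(1-2\eta)/(8\sqrt{2\pi})=\tfrac14\cdot(1-2\eta)/(2\sqrt{2\pi})$, these two ingredients combine to give the claim, and essentially all of the sample complexity is spent on the uniform concentration.

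\noindent\textbf{Fixed-$\w$ concentration.} First I would fix a single valid $\w$ and write $\hatg(\w)=\frac1{N_2}\sum_{i=1}^{N_2}\xi_i$ with $\xi_i\eqdef\g(\w;\x^{(i)},y^{(i)})\1\{\x^{(i)}\in\evt(\w,\tht)\}/P(\tht,\gammah)$, i.i.d.\ with mean $\E_{(\x,y)\sim\D(\w,\tht)}[\g(\w)]$. Two elementary facts control the summands. (a) $P(\tht,\gammah)=\Omega(\beps)$: from $\gammah=\tfrac12\beps\exp(\tht^2/2)$, $|\tht-t|\le\beps^2/8$ and $t\le\sqrt{2\log(1/\beps)}$ one has $\gammah=O(1)$, and $P(\tht,\gammah)=\int_{-\tht}^{-\tht+\gammah}\phi(z)\,dz\ge\gammah\,\phi(\tht)\cdot\Omega(1)=\Omega(\beps)$ using monotonicity of the standard Gaussian density $\phi$. (b) On the band $\w\cdot\x+\tht\ge0$, so $\g(\w;\x,y)=\tfrac12((1-\eta)-y)\proj_{\w^\perp}(\x)$ and hence $\|\g(\w;\x,y)\|_2\le\|\proj_{\w^\perp}(\x)\|_2$; since conditioning on the band constrains only $\w\cdot\x$ and leaves $\proj_{\w^\perp}(\x)$ a standard Gaussian on $\w^\perp$, we get $\E_{(\x,y)\sim\D(\w,\tht)}[\|\g(\w)\|_2^2]\le d-1$ and therefore $\E[\|\xi_i\|_2^2]=O(d/\beps)$. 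To upgrade this second-moment control to a high-probability bound with $\log(1/\delta)$ dependence, I would condition on the event (probability $\ge1-\delta/2$ by a $\chi^2$ tail bound plus a union bound) that $\|\proj_{\w^\perp}(\x^{(i)})\|_2\le R\eqdef O(\sqrt{d+\log(N_2/\delta)})$ for all $i$, on which $\|\xi_i\|_2\le R/P(\tht,\gammah)=O(R/\beps)$, and then apply a vector Bernstein inequality: for $\tau\eqdef(1-2\eta)/(16\sqrt{2\pi})$ this gives $\pr[\|\hatg(\w)-\E[\xi_1]\|_2>\tau]\le\exp(-\Omega(\min\{\tau^2 N_2\beps/d,\ \tau N_2\beps/R\}))\le\delta/2$ as soon as $N_2=\widetilde O(d\log(1/\delta)/((1-2\eta)^2\beps))$ (the first term in the minimum is the binding one, as $R=\widetilde O(\sqrt d)$).

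\noindent\textbf{Uniformity over $\w$ --- the main obstacle.} Finally I would upgrade the fixed-$\w$ estimate to a supremum over all valid $\w$. This is the delicate step: the band $\evt(\w,\tht)$, and hence the indicator defining $\hatg(\w)$, moves discontinuously with $\w$, so a plain Lipschitz-net argument does not suffice. The plan is to take a $\zeta$-net $\mathcal N$ of the valid $\w$'s (size $(3/\zeta)^{O(d)}$) and apply the previous step at each net point with confidence $\delta/(2|\mathcal N|)$ --- which only replaces $\log(1/\delta)$ by $\log(1/\delta)+O(d\log(1/\zeta))$, still within $\widetilde O(\cdot)$ for $\zeta$ polynomially small --- and then control $\|\hatg(\w)-\hatg(\w')\|_2$ for $\|\w-\w'\|_2\le\zeta$ by two contributions: the Lipschitz dependence of the smooth factor $\proj_{\w^\perp}(\x)$ (Lipschitz constant $O(\|\x\|_2)\le O(R)$ after truncation), and, on the truncation event, at most $R/P(\tht,\gammah)$ per sample index at which the two band indicators disagree --- there being only $\widetilde O(\zeta R N_2)$ such indices by a Chernoff bound, since the mass of the symmetric difference of the two bands is $O(\zeta R)$. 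This makes $\|\hatg(\w)-\hatg(\w')\|_2=\widetilde O(\zeta R^2/\beps)$, below $\tau$ for $\zeta=\mathrm{poly}((1-2\eta)\beps/d)$; the analogous bound on $\|\E[\g(\w)]-\E[\g(\w')]\|_2$ is easier (no sampling is involved). A union bound over all failure events together with the triangle inequality then yields $\sup_{\w}\|\hatg(\w)-\E[\g(\w)]\|_2\le2\tau=(1-2\eta)/(8\sqrt{2\pi})$, completing the proof. I expect the discontinuity bookkeeping in this last step to be the most delicate part; alternatively one could bypass net-continuity by bounding the Rademacher complexity of the vector class $\{(\x,y)\mapsto\xi(\w):\w\}$ directly --- after truncation each coordinate is a bounded coefficient times a linear functional times a slab indicator $\1\{\w\cdot\x\in[-\tht,-\tht+\gammah]\}$, a class of VC dimension $O(d)$, which again gives a $\widetilde O(\sqrt{d/(\beps N_2)})$ uniform deviation.
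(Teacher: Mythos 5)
Your overall decomposition — the lower bound $\|\E_{(\x,y)\sim\D(\w,\tht)}[\g(\w)]\|_2\geq (1-2\eta)/(2\sqrt{2\pi})$ extracted from \Cref{lem:-g(w)*w-lowerbound}, plus a uniform absolute deviation bound of $(1-2\eta)/(8\sqrt{2\pi})$ — matches the paper's, and your fixed-$\w$ step is sound. The genuine gap is in the quantitative bookkeeping of the uniformity step. Your fixed-$\w$ bound controls the \emph{vector} $\hatg(\w)$, whose per-sample second moment is $\E\|\xi_i\|_2^2=O(d/\beps)$, so its failure probability is $\exp(-\Omega(\tau^2N_2\beps/d))$, i.e.\ the dimension already sits in the exponent's denominator. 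When you then union bound over a net of size $(3/\zeta)^{O(d)}$ with per-point confidence $\delta/(2|\mathcal N|)$, you need $(1-2\eta)^2N_2\beps/d\gtrsim d\log(1/\zeta)+\log(1/\delta)$, i.e.\ $N_2=\widetilde O\big(d^2/((1-2\eta)^2\beps)\big)$ for $\zeta=\mathrm{poly}((1-2\eta)\beps/d)$. Your claim that this "only replaces $\log(1/\delta)$ by $\log(1/\delta)+O(d\log(1/\zeta))$, still within $\widetilde O(\cdot)$" is where the argument fails: $\widetilde O$ absorbs polylogarithmic factors, not an extra multiplicative $d$. As written, your proof establishes the lemma only with $\widetilde O(d^2\log(1/\delta)/((1-2\eta)^2\beps))$ samples, which would also destroy the near-optimal $\widetilde O(d/\eps)$ term in \Cref{thm:main}.

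The way to avoid paying the dimension twice — and what the paper's proof (\Cref{app:lem:uniform-convergence-of-g}) does — is to scalarize \emph{before} the union bound. Since $\hatg(\w)$ and $\E_{\D(\w,\tht)}[\g(\w)]$ are both orthogonal to $\w$, it suffices to control $(\hatg'(\w)-\E[\g'(\w)])\cdot\w'$ over unit $\w'\perp\w$, where $\g'$ is the unnormalized gradient including the band indicator. For a fixed pair $(\w,\w')$ this scalar is a bounded coefficient times $(\w'\cdot\x)$ times $\1\{\w\cdot\x\in[-\tht,-\tht+\gammah]\}$; its $k$-th moments are at most $\pr[\evt(\w,\tht)]\,C^{k-2}k!$, so scalar Bernstein (\Cref{app:fct:bernstein}) with $\sigma^2=\pr[\evt(\w,\tht)]=\Theta(\beps)$ and scale $L=O(1)$ gives the required relative deviation with failure probability $\exp(-\Omega(N_2(1-2\eta)^2\beps))$ — with no $d$ in the exponent. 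A single cover of the pairs $(\w,\w')$ of size $(1/\beps)^{O(d)}$ then brings in the dimension exactly once, yielding $N_2=\widetilde O\big((d\log(1/\beps)+\log(1/\delta))/((1-2\eta)^2\beps)\big)$, and dividing by $P(\tht,\gammah)$ recovers the stated normalized bound; your argument can be repaired by making exactly this move (net over pairs $(\w,\vec v)$ with $\vec v\perp\w$ and use the scalar, variance-$\Theta(\beps)$ Bernstein bound per pair, rather than the vector bound per $\w$). Your one-sentence Rademacher fallback does not obviously rescue the rate either: after truncation the class envelope is $\widetilde O(\sqrt d/\beps)$, so the boundedness term in a uniform Bernstein/Talagrand bound again costs extra powers of $d$ unless the envelope is reduced, e.g.\ by the same scalarization. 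On the positive side, your careful treatment of the discontinuously moving band when comparing $\hatg(\w)$ to $\hatg(\w')$ (via the symmetric-difference mass $O(\zeta R)$) addresses a point the paper's covering step leaves terse, and a rigorous writeup along the paper's route would still want that ingredient.
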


{
We are now ready to present and prove our main algorithm-related result. A short roadmap for our proof is as follows. 
Since \Cref{alg:high-level} constructs a grid with grid-width $\beps^2/8$ that covers all possible values of the true threshold $t$, there exists at least one guess $\tht$ that is $\beps^2$-close to the true threshold $t$. 
We first show that to get a halfspace with error at most $\beps$, 
it suffices to use this $\tht$ as the threshold and find a weight vector $\w$ 
such that the angle $\theta(\w,\w^*)$ is of the order $\beps$, which is exactly what 
\Cref{alg:optimization} does. 
The connection between $\theta(\w,\wstar)$ and the error is conveyed 
by {the} {inequality  $
        \pr[\sign(\w\cdot\x + t)\neq \sign(\wstar\cdot\x + t)]\leq (\theta(\w,\wstar)/{\pi})\exp(-t^2/2)$; see \Cref{app:sec:algorithm}.}
Let $\w_k$ be the parameter generated by \Cref{alg:optimization} at iteration $k$ for  threshold $\tht$. We show that $\theta(\w_k,\w^*)$ converges to zero at a linear rate.
To this end, we prove that under our carefully devised step size $\mu_k$, there exists an upper bound on $\|\w_k - \w^*\|_2$, which contracts at each iteration. Note that since both $\w_k$ and $\w^*$ are on the unit sphere, we have $\|\w_k - \w^*\|_2 = 2\sin(\theta(\w_k, \w^*)/2)$. 
Essentially, this implies that \Cref{alg:optimization} produces a sequence of parameters $\w_k$ such that $\theta(\w_k, \w^*)$ converges to 0 linearly, under this threshold $\tht$. Thus, we can conclude that there exists a halfspace among all halfspaces generated by \Cref{alg:high-level} that achieves $\beps$ error with high probability.
}

\begin{restatable}{theorem}{optimizationMainThm}\label{thm:revised-GD}
 Consider the learning problem from \Cref{def:RCN-LTF}. Fix any unit vector $\w_0\in\R^d$ such that $\theta(\w_0,\,\wstar)\leq \min(1/(5t),\,\pi/2)$. 
 Fix any $\eps,\delta>0$. Let $\tht > 0$ be a threshold such that 
 $|\tht - t|\leq \eps^2/(8(1 - 2\eta)^2)$, and let $\gammah = \eps/({2(1 - 2\eta)})\exp(\tht^2/2)$. Then \Cref{alg:optimization} uses $N_2=\widetilde{O}\big({d}/{((1-2\eta)\eps)}\log(1/\delta)\big)$ samples from $\D$, has runtime $\widetilde{O}(N_2d)$, 
 and outputs a weight vector $\w$ such that $h(\x) = \sign(\w\cdot\x + \tht)$ satisfies $\pr[h(\x)\neq y]\leq \eta+\eps$ with probability at least $1-\delta$.

\end{restatable}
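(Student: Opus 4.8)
## Proof Proposal for Theorem~\ref{thm:revised-GD}

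The plan is to track the quantity $r_k := \|\w_k - \wstar\|_2 = 2\sin(\theta(\w_k,\wstar)/2)$ and show it contracts geometrically until it falls below the target threshold, then translate a small angle into a small $0$-$1$ error. First I would set up the invariant that for all iterations $k$ we have $\theta_k := \theta(\w_k,\wstar) \le \min(1/(5t), \pi/2)$ (so that Lemmas~\ref{lem:-g(w)*w-lowerbound}, \ref{lem:||Eg(w)||-upper-bound}, \ref{cor:up-bound-||hatg-E[g]||} all apply), and argue by induction: either $\theta_k$ has already dropped below $\beps\exp(t^2/2)$ (with $\beps = \eps/(1-2\eta)$), in which case we are done — the "converged" case — or $\beps\exp(t^2/2) \le \theta_k \le 1/(5t)$, in which case the three lemmas give us (a) $\E[\g(\w_k)\cdot\wstar] \le -(1-2\eta)\sin\theta_k/(2\sqrt{2\pi})$, (b) $\|\E[\g(\w_k)]\|_2 \le (1-2\eta)/\sqrt{2\pi}$, and (c) $\|\hatg(\w_k) - \E[\g(\w_k)]\|_2 \le (1/4)\|\E[\g(\w_k)]\|_2 \le (1-2\eta)/(4\sqrt{2\pi})$, the last holding with probability $1-\delta/\mathrm{poly}$ via a union bound over the $K = O(\log(1/\eps))$ iterations (absorbing the $\log K$ into the $\widetilde{O}$). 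Note the hypotheses of the theorem ($|\tht-t|\le\eps^2/(8(1-2\eta)^2)$ and $\gammah = (\eps/(2(1-2\eta)))\exp(\tht^2/2)$) are exactly the lemma hypotheses after substituting $\beps = \eps/(1-2\eta)$.

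The core computation is the contraction step. Write $\hat{\w}_{k+1} = \w_k - \mu_k\hatg(\w_k)$ (before normalization) and expand
\begin{align*}
\|\hat{\w}_{k+1} - \wstar\|_2^2 &= \|\w_k - \wstar\|_2^2 - 2\mu_k \hatg(\w_k)\cdot(\w_k - \wstar) + \mu_k^2\|\hatg(\w_k)\|_2^2 \\
&= r_k^2 + 2\mu_k \hatg(\w_k)\cdot\wstar + \mu_k^2\|\hatg(\w_k)\|_2^2,
\end{align*}
using $\hatg(\w_k)\cdot\w_k = 0$ (the projection $\proj_{\w_k^\perp}$ in Line~\ref{line:gw} kills the $\w_k$ component). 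Then I bound $\hatg(\w_k)\cdot\wstar \le \E[\g(\w_k)]\cdot\wstar + \|\hatg(\w_k)-\E[\g(\w_k)]\|_2 \le -(1-2\eta)\sin\theta_k/(2\sqrt{2\pi}) + (1-2\eta)/(4\sqrt{2\pi})$ — hmm, this is where I need to be careful, since the $1/4$-error term is a constant and the signal $\sin\theta_k$ can be tiny, so the naive bound does not give contraction for small $\theta_k$. The resolution must be that the error in Corollary~\ref{cor:up-bound-||hatg-E[g]||} scales \emph{relative to} $\|\E[\g(\w_k)]\|_2$, and one needs the \emph{expected} gradient norm itself to scale like $\sin\theta_k$ (not the crude $(1-2\eta)/\sqrt{2\pi}$ bound of Lemma~\ref{lem:||Eg(w)||-upper-bound}, which is only used for the step-size choice) — so I would either sharpen the projected-correlation estimate or, more likely following the paper's hint about Claim~\ref{claim:contraction-sin theta_k/2}, decompose $\hatg(\w_k)$ into its component along the "good" direction in $\mathrm{span}(\w_k,\wstar)$ and an orthogonal part, showing the good component has size $\Theta((1-2\eta)\sin\theta_k)$ and dominates. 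With the step size $\mu_k = \mu_0(1-\rho)^k$ where $\mu_0 = (1-4\rho)\sqrt{2\pi}/(16(1-2\eta))$, one shows $\|\hat{\w}_{k+1}-\wstar\|_2^2 \le (1 - c\rho)r_k^2$ for an absolute constant $c$ (the decaying step size is needed because as $\theta_k$ shrinks the "radius" of the region where the negative correlation holds shrinks proportionally, so we must shrink $\mu_k$ to stay inside it), and since normalizing a vector onto the sphere only moves it closer to another unit vector, $r_{k+1} = \|\w_{k+1}-\wstar\|_2 \le \|\hat{\w}_{k+1}-\wstar\|_2$. This also re-establishes the invariant $\theta_{k+1}\le\theta_k\le 1/(5t)$.

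Iterating the contraction, after $K = O(\log(1/\eps))$ steps we reach $r_K \le \beps\exp(t^2/2)$, i.e. $\theta(\w_K,\wstar) = O(\beps\exp(t^2/2))$. Finally I invoke the stated inequality $\pr[\sign(\w\cdot\x+t)\ne\sign(\wstar\cdot\x+t)] \le (\theta(\w,\wstar)/\pi)\exp(-t^2/2)$, together with the fact that replacing the true threshold $t$ by $\tht$ with $|\tht - t|\le\eps^2/(8(1-2\eta)^2)$ changes the disagreement region by at most $O(\eps^2) \cdot (\text{local Gaussian density})$, which is lower-order. This gives $\pr[\sign(\w_K\cdot\x + \tht)\ne\sign(\wstar\cdot\x+t)] \le O(\eps/(1-2\eta))\cdot\exp(t^2/2)\cdot\exp(-t^2/2) + O(\eps^2) = O(\eps/(1-2\eta))$; wait — I need to double-check the constants, because the target is $\eta+\eps$, not $\eta + O(\eps)$. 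The factor $\exp(\pm t^2/2)$ cancels precisely because $\beps$ was defined with the $\exp(t^2/2)$ scaling baked into $\gammah$ and $t_0$, so the net bound on $\pr[h(\x)\ne f(\x)]$ is $\le \eps/(1-2\eta)$ up to the absolute constant from the contraction count, and choosing $K$ large enough (still $O(\log(1/\eps))$) makes the constant $1$; then $\pr[h(\x)\ne y] \le \eta + (1-2\eta)\pr[h(\x)\ne f(\x)] \le \eta + \eps$. The sample complexity $N_2 = \widetilde{O}(d/((1-2\eta)\eps)\log(1/\delta))$ comes directly from Corollary~\ref{cor:up-bound-||hatg-E[g]||} (the per-iteration requirement, with $\log K$ and $\log(1/\delta)$ folded in), and runtime $\widetilde{O}(N_2 d)$ from $K$ iterations each processing $N_2$ samples in $\R^d$.

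The main obstacle is the contraction step's handling of the estimation error: getting the relative-error bound of Corollary~\ref{cor:up-bound-||hatg-E[g]||} to interact correctly with a signal $\|\E[\g(\w_k)]\|_2$ that must itself be shown to scale like $(1-2\eta)\sin\theta_k$, and verifying that the decaying step size $\mu_k$ keeps the iterate inside the band $\beps\exp(t^2/2)\le\theta_k$ where all the lemmas are valid — this is precisely the "novel inductive argument" the authors flag, and the rest is bookkeeping.
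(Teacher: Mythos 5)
Your high-level skeleton (expand $\|\w_{k+1}-\wstar\|_2^2$, use orthogonality of $\hatg(\w_k)$ to $\w_k$, invoke the three lemmas, translate a small angle into error $\eta+\eps$ via the $(1-2\eta)$ rescaling) matches the paper, but the heart of the convergence argument has a genuine gap. You claim a per-step multiplicative contraction $\|\w_{k+1}-\wstar\|_2^2\le(1-c\rho)\|\w_k-\wstar\|_2^2$ under the schedule $\mu_k=\mu_0(1-\rho)^k$, and that this re-establishes $\theta_{k+1}\le\theta_k$. This is false in general: the one-step inequality (the paper's \Cref{claim:express-||w_k+1-w*||}) is $\|\w_{k+1}-\wstar\|_2^2\le\|\w_k-\wstar\|_2^2-(C_1/2)\mu_k\sin\theta_k+4C_1^2\mu_k^2$, and when $\sin\theta_k$ happens to be much smaller than the current step-size scale $\phi_k$ (which the algorithm cannot detect, since $\theta_k$ is unknown), the additive term $4C_1^2\mu_k^2=\Theta(\phi_k^2)$ dominates the decrease term $\Theta(\mu_k\sin\theta_k)$ and the distance to $\wstar$ can \emph{increase} by a large factor in that step. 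The paper's resolution — the ``novel inductive argument'' of \Cref{claim:contraction-sin theta_k/2} — is exactly what is missing from your writeup: one proves by induction that $\sin(\theta_k/2)\le\phi_k:=(1-\rho)^k$, splitting into the case $\tfrac34\phi_k\le\sin(\theta_k/2)\le\phi_k$ (where the descent inequality gives enough decrease) and the case $\sin(\theta_k/2)\le\tfrac34\phi_k$ (where one only uses $\|\w_{k+1}-\w_k\|_2\le 2\mu_kC_1\approx\phi_k/8$ to show the iterate cannot overshoot the envelope $\phi_{k+1}$, even though $\|\w_k-\wstar\|_2$ itself may grow). Only the envelope contracts, not the distance; your ``converged case, we are done'' framing likewise ignores that the algorithm keeps iterating after $\theta_k$ drops below $\beps\exp(t^2/2)$ and must be prevented from drifting back up, which is what the second case of the induction does.

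Your diagnosis of the estimation-error issue is also resolved incorrectly. You suggest the fix ``must be'' that $\|\E[\g(\w_k)]\|_2$ itself scales like $\sin\theta_k$; it does not — because of the band restriction, a constant fraction of the conditional mass is misclassified, and the paper shows $\|\E_{\D(\w,\tht)}[\g(\w_k)]\|_2=\Theta((1-2\eta))$ (lower bound $(1-2\eta)/(2\sqrt{2\pi})$ in the proof of the uniform-convergence lemma, upper bound \Cref{lem:||Eg(w)||-upper-bound}). The actual mechanism is simpler: both $\hatg(\w_k)$ and $\E[\g(\w_k)]$ lie in $\w_k^\perp$, so their inner product with $\wstar$ only sees $\proj_{\w_k^\perp}(\wstar)$, whose norm is $\sin\theta_k$; hence $(\hatg(\w_k)-\E[\g(\w_k)])\cdot\wstar\le\sin\theta_k\,\|\hatg(\w_k)-\E[\g(\w_k)]\|_2\le(\sin\theta_k/4)\|\E[\g(\w_k)]\|_2\le(C_1/4)\sin\theta_k$, so the error term inherits the same $\sin\theta_k$ factor as the signal and the relative-error guarantee of \Cref{cor:up-bound-||hatg-E[g]||} suffices as stated. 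Your alternative guess (the component of $\hatg$ in $\mathrm{span}(\w_k,\wstar)\cap\w_k^\perp$ has size $\Theta((1-2\eta)\sin\theta_k)$) is quantitatively wrong for the same reason — that component has constant size — though projecting onto $\wstar$ does supply the needed $\sin\theta_k$. With these two points repaired, the rest of your outline (sample complexity from the uniform convergence lemma, the $O(\log(1/\beps))$ iteration count, and the final error translation) goes through as in the paper.
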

\begin{proof}
Let $\beps = ({\eps}/{1-2\eta}),$ 
  and denote by $\w_k$ the vector  produced by the algorithm at $k^{\mathrm{th}}$ iteration for  threshold $\tht$. For any unit vector $\w$ and $|\tht-t|\leq \beps^2/8$, {it holds} $
    \pr[\sign(\w\cdot\x + \tht)\neq \sgn(\wstar\cdot\x + t)]\leq {\beps^2}/({4\sqrt{2\pi}}) + {\theta(\w,\wstar)}/{\pi}\exp(-t^2/2)$ (see \Cref{app:sec:optimization} for more details).
Therefore, it suffices to find a parameter $\w$ such that $\theta(\w,\wstar)\leq\pi\beps\exp(t^2/2)$. Note that since both $\w$ and $\wstar$ are unit vectors, we have $\|\w - \wstar\|_2 = 2\sin(\theta/2)$, indicating that it suffices to minimize $\|\w - \wstar\|_2$ efficiently. 
As proved in \Cref{sec:initialization}, we can start with an initial vector $\w_0$ such that $\theta(\w_0,\wstar)\leq 1/(5t)$ by calling \Cref{alg:initialization} (in \Cref{app:sec:initialization}). 
Denote $\theta_k = \theta(\w_k,\wstar)$ and consider the case when $\theta_k\geq \beps\exp(t^2/2)$. 
{We establish the following claim:}
\begin{restatable}{claim}{upperBoundDiffBetweenIter}\label{claim:express-||w_k+1-w*||}
   Let $C_1 := ({1 - 2\eta})/{\sqrt{2\pi}}$.  Drawing $N_2 = \widetilde{O}(d\log(1/\delta)/((1-2\eta)^2\beps))$ samples from distribution $\D$, we have that if $\theta_{k}\geq \beps\exp(t^2/2)$ 
then with probability at least $1-\delta$:
$
 \|\w_{k+1} - \wstar\|_2^2\leq \|\w - \wstar\|_2^2 - ({C_1}/{2})\mu_k\sin\theta_k + 4C_1^2\mu_k^2.$
\end{restatable}

It remains to choose the step size $\mu_k$ properly to get linear convergence.
By carefully designing a shrinking step size, we are able to construct an upper bound $\phi_k$ on the distance of $\|\w_{k+1} - \w_k\|_2$ using \Cref{claim:express-||w_k+1-w*||}. Importantly, by exploiting the property that both $\w$ and $\w^*$ are on the unit sphere, we show that the upper bound is contracting at each step, even though the distance $\|\w_{k+1} - \w_k\|_2$ could be increasing. Concretely, we have the following lemma.
\begin{restatable}{lemma}{stepSizeContraction}\label{claim:contraction-sin theta_k/2}
  Let $\rho = 0.00098$ and $\phi_k = (1 - \rho)^k$. Then, setting $\mu_k = (1 - 4\rho)\phi_k/(16C_1)$ it holds $\sin(\theta_k/2)\leq \phi_k$ for $k = 1,\cdots,K$.
\end{restatable}
\begin{proof}
Let $\phi_k = (1 - \rho)^k$ where $\rho  = 0.00098.$ This choice of $\rho$ ensures that $32\rho^2 + 1020\rho - 1 \leq 0$. We show by induction that choosing $\mu_k = (1 - 4\rho)\phi_k/(16C_1) = (1 - \rho)^k(1 - 4\rho)/(16C_1)$, it holds $\sin(\theta_k/2)\leq \phi_k$. The condition certainly holds for $k=1$ since $\theta_1\in[0,\pi/2]$. 
Now suppose that $\sin(\theta_k/2)\leq \phi_k$ for some $k \geq 1$. We discuss the following 2 cases: $\phi_k\geq \sin(\theta_k/2)\geq \frac{3}{4}\phi_k$ and $\sin(\theta_k/2)\leq \frac{3}{4}\phi_k$. 
First, suppose $\phi_k\geq \sin(\theta_k/2)\geq \frac{3}{4}\phi_k$. Since $\sin(\theta_k/2)\leq \sin\theta_k$, it also holds $\sin\theta_k\geq \frac{3}{4}\phi_k$. Bringing in the fact that $\|\w_{k+1} - \wstar\|_2 = 2\sin(\theta_{k+1}/2)$ and $\|\w_k - \wstar\|_2 = 2\sin(\theta_{k}/2)$, as well as the definition of $\mu_k$, {the conclusion of \Cref{claim:express-||w_k+1-w*||}} becomes:
\begin{align*}
    (2\sin(\theta_{k+1}/2))^2 &\leq (2\sin(\theta_k/2))^2 - {(C_1/2)}\mu_k\sin\theta_k + 4C_1^2{(1 - 4\rho)}\phi_k\mu_k/(16C_1)\\
    &\leq 4\phi_k^2 - {3C_1}\mu_k\phi_k/8 + {C_1(1 - 4\rho)}\mu_k\phi_k/4
=4\phi_k^2(1 - {(1 + 8\rho)(1 - 4\rho)}/{512}),
\end{align*}
where in the second inequality we used $\sin \theta_k \geq \frac{3}{4}\phi_k$ and in the last equality we used the definition of $\mu_k$ by which $\mu_k = (1 - 4\rho)\phi_k/(16C_1)$. 
Since $\rho$ is chosen so that $32\rho^2 + 1020\rho - 1 \leq 0$, we have:
\begin{align*}
    \sin(\theta_{k+1}/2)
    &\leq \phi_k\sqrt{1 - {(1 + 8\rho)(1 - 4\rho)}/{512}}
\leq (1 - \rho)\phi_k = (1 - \rho)^{k+1},
\end{align*}
as desired. 
Next, consider $\sin(\theta_k/2)\leq (3/4)\phi_k$. Recall that $\w_{k+1} = \proj_\B(\w_k - \mu_k\hatg(\w_k))$ and $\w_k\in\B$, where $\B$ is the unit ball\footnote{This is true because $\hatg(\w_k)$ is orthogonal to $\w_k$, and thus $\|\w_k - \mu_k\hatg(\w_k)\|_2 > 1,$ meaning that projections onto the unit ball and the unit sphere are the same in this case.};  therefore,  $\|\w_{k+1} - \w_k\|_2 \leq \|\w_k - \mu_k\hatg(\w) - \w_k\|_2  = \mu_k\|\hatg(\w_k)\|_2$ by the non-expansiveness of the projection operator. 
Furthermore, applying \Cref{cor:up-bound-||hatg-E[g]||} and \Cref{lem:||Eg(w)||-upper-bound}, it holds that
$
    \|\hatg(\w_k)\|_2\leq (5/4)\|\E_{(\x,y)\sim\D(\w,\tht)}[\g(\w)]\|_2\leq {2(1 - 2\eta)}/{\sqrt{2\pi}},$
i.e., we have $\|\hatg(\w_k)\|_2\leq 2C_1$; therefore, $\|\w_{k+1} - \w_k\|_2\leq 2\mu_k C_1$, which indicates that:
\begin{equation*}
    2(\sin(\theta_{k+1}/2) - \sin(\theta_k/2)) = \|\w_{k+1} - \wstar\|_2 - \|\w_k - \wstar\|_2 \leq \|\w_{k+1} - \w_k\|_2\leq 2\mu_k C_1. 
\end{equation*}
Since we have assumed $\sin(\theta_k/2)\leq (3/4)\phi_k$, then it holds:
\begin{equation*}
    \phi_{k+1} - \sin(\theta_{k+1}/2)\geq (1-\rho)\phi_k  - \phi_k + \phi_k - \sin(\theta_k/2) - {(1-4\rho)}\phi_k/16 \geq {3(1 - 4\rho)}\phi_k/{16}>0,
\end{equation*}
since we have chosen $\mu_k = (1 - 4\rho)\phi_k/(16C_1)$. Hence, it also holds that $\sin(\theta_{k+1}/2)\leq \phi_{k+1}$.
\end{proof}

{
\Cref{claim:contraction-sin theta_k/2} shows that $\sin(\theta_k/2)$ converges to 0 linearly. Therefore, using $N_2 = \widetilde{O}(d\log(1/\delta)/((1 - 2\eta)^2\beps))$ samples, after $K = O(({1}/{\rho})\log(1/(\exp(t^2/2)\beps)) = O(\log(1/\beps)$ iterations, we get a $\w_{K}$ such that $\theta_{K}\leq 2\sin(\theta_K/2)\leq \beps\exp(t^2/2)$. Let $h(\x):=\sign(\w_K\cdot\x + \tht)$. {Then it holds that the disagreement of $h(\x)$ and $f(\x)$ is bounded by $
    \pr[h(\x)\neq f(\x)] \leq \beps$ (see \Cref{app:sec:algorithm} for more details).}
Finally, since 
$
\mathrm{err}^{\D}_{0-1}(h) = \pr_{(\x,y)\sim\D}[h(\x)\neq y]=\eta+(1-2\eta)\pr_{\x\sim\D_\x}[h(\x)\neq \sign(\wstar\cdot\x+t)]$,  for any  $h:\R^d\mapsto \{\pm1 \}$, 
to get misclassification error at most $\eta + \eps$ (with respect to the $y$), 
it suffices to use $\beps = \eps/(1 - 2\eta)$. Therefore, 
we get 
$\pr_{(\x,y)\sim\D}[\sign(\vec w_K\cdot\x +\tht)\neq y]\leq \eta+\eps$, 
using $N_2 = \widetilde{O}(d\log(1/\delta)/((1 - 2\eta)\eps))$ samples.  
Since the algorithm runs for $O(\log(1/\eps))$ iterations, 
the overall runtime is $\widetilde{O}(N_2 d)$. This completes the proof of \Cref{thm:revised-GD}.}
\end{proof}

\vspace{-0.2cm}

\begin{proof}[Proof Sketch of \Cref{thm:main}]
From \Cref{lem:approx-exp(-t^2/2)}, we get that with $\widetilde{O}(d/((1-2\eta)^2p^2))$ samples {our} Initialization {procedure} (\Cref{alg:initialization}) produces a unit vector $\vec w_0$ so that $\theta(\vec w_0,\wstar)\leq \min(1/(5t),\pi/2)$  with high probability. 
{We construct a grid of $(\eps^2/(8(1 - 2\eta)^2)$-separated values, containing all the possible values of the threshold $t$ of size roughly $\sim 1/\eps^2$. We run \Cref{alg:optimization} for each possible choice of the threshold $t$. Conditioned on the choice of $\tht$ and $\w_0$ that satisfies the assumptions  of \Cref{thm:revised-GD}, \Cref{alg:optimization} outputs a weight vector $\widehat{\vec w}$ so that $\pr_{(\x,y)\sim\D}[\sign(\widehat{\vec w}\cdot \x+\tht)\neq y]\leq \eta+\eps$. Using standard concentration facts, we have that with a sample size of order $\widetilde{O}(d/((1-2\eta)\eps))$ from $\D$, 
we can output the hypothesis with the minimum empirical error with high probability.  }
\end{proof}

\section{SQ Lower Bound for Learning Gaussian Halfspaces with RCN} \label{sec:sq}

To state our SQ lower bound theorem, we require the following standard definition.
\begin{definition}[Decision/Testing Problem over Distributions]\label{def:decision}
Let $D$ be a distribution and $\setDis$ be a family of distributions over $\R^d$. 
We denote by $\mathcal{B}(\setDis,D)$ the decision (or hypothesis testing) 
problem in which the input distribution $D'$ is promised to satisfy either 
(a) $D'=D$ or (b) $D'\in\setDis$, and the goal of the algorithm 
is to distinguish between these two cases.
\end{definition}

\begin{theorem}[SQ Lower Bound for Testing RCN Halfspaces]\label{thm:sq-theorem} 
Fix $c\in(0,1/2)$ and let $d\in \N$ be sufficiently large. 
For any $p\geq 2^{-O(d^c)}$, any SQ algorithm that learns the class of (at most) $p$-biased 
Gaussian halfspaces on $\R^d$ in the presence of RCN with $\eta = 1/3$ 
to  error less than $\eta+p/3$ either 
requires queries to $\mathrm{VSTAT}(\widetilde{O}(d^{1/2-c}/p^2))$, 
or needs to make at least $2^{\Omega(d^{c})}$ statistical queries.
\end{theorem}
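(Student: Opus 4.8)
The plan is to establish \Cref{thm:sq-theorem} via the standard SQ-dimension framework, reducing it to a statement about pairwise correlations of a large family of hard instances, exactly along the lines sketched in \Cref{ssec:techniques}. First I would fix the construction: for a unit vector $\bv$, let $t_\bv = t$ be the threshold with $\pr_{\bx\sim\normal}[\bv\cdot\bx \geq t] = p$ (so $t \sim \sqrt{2\log(1/p)}$ by \Cref{fact:bound-on-p}), define $f_\bv(\bx) = \sgn(\bv\cdot\bx - t)$, and let $D_\bv$ be the distribution on $(\bx,y)\in\R^d\times\{\pm1\}$ where $\bx\sim\normal$ and $y = f_\bv(\bx)$ with probability $2/3$, $y = -f_\bv(\bx)$ otherwise (so $\eta = 1/3$). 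The reference (``null'') distribution $D$ has $\bx\sim\normal$ and $y$ uniform on $\{\pm1\}$ independent of $\bx$; note $D = D_\bv$ averaged over the label, so any learner achieving error $<\eta + p/3 = 1/3 + p/3$ can distinguish $D_\bv$ from $D$, since under $D$ every hypothesis has error exactly $1/2 > 1/3 + p/3$ for $p$ small. This reduces learning to the testing problem $\mathcal{B}(\setDis, D)$ with $\setDis = \{D_\bv\}$.

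The core quantitative step is bounding the pairwise chi-squared-type correlation $\chi_D(D_\bv, D_{\bv'}) = \E_{(\bx,y)\sim D}\big[(D_\bv(\bx,y)/D(\bx,y)-1)(D_{\bv'}(\bx,y)/D(\bx,y)-1)\big]$ in terms of $\theta = \theta(\bv,\bv')$. Writing the conditional label densities and expanding, this correlation factors through the function $g(z) = \E[y \mid \bv\cdot\bx = z] = \frac{1}{3}\,\sgn(z - t)$ (a one-dimensional, mean-$(\frac13(1-2p))$ function), and the correlation becomes a Hermite-series computation: $\chi_D(D_\bv,D_{\bv'})$ equals (up to constants) $\sum_{k\geq 0}\widehat{g}_k^2\,(\cos\theta)^k$ where $\widehat g_k$ are the Hermite coefficients of $g$ with respect to $\normal$. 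The $k=0$ term is $\Theta(p^2)$ (it is $(\frac13(1-2p))^2$, which is a constant — but this term is \emph{exactly} cancelled by the $-1$ shifts / corresponds to the matched zeroth moment, or must be handled by centering, which is where \Cref{lem:correlation} does its work), and the key estimate — this is \Cref{lem:correlation} — is that the remaining tail $\sum_{k\geq 1}\widehat g_k^2 (\cos\theta)^k = \widetilde O(p^2\cos\theta)$, using that $|\widehat g_k| = \widetilde O(p/\sqrt{k})$ coming from the fact that $g$ is a constant-height step at a threshold $t\sim\sqrt{2\log(1/p)}$ whose Hermite mass decays like $\He_{k-1}(t)\phi(t) \sim p \cdot (\text{poly}(k,t))$. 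So after subtracting off matching low moments (the construction is designed so that $D_\bv$ and $D$ agree on the $\bx$-marginal exactly, and the label correlation is the only thing that matters), we get $|\chi_D(D_\bv,D_{\bv'})| = \widetilde O(p^2 |\cos\theta|)$ for $\bv\neq\bv'$, and $\chi_D(D_\bv,D_\bv) = \widetilde O(p^2)$.

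Next I would invoke a packing argument: there is a set $\mathcal{V}$ of $2^{\Omega(d^c)}$ unit vectors in $\R^d$ with pairwise $|\langle \bv,\bv'\rangle| \leq d^{c - 1/2}$ (standard; take random unit vectors, the inner product concentrates at scale $d^{-1/2}$, and a union bound over $2^{\Theta(d^c)}$ pairs survives since $\Pr[|\langle\bv,\bv'\rangle| > d^{c-1/2}] \leq \exp(-\Omega(d^{2c}))$). For this packing, every cross-correlation is bounded by $\widetilde O(p^2 d^{c-1/2})$. Then the generic SQ lower bound machinery of~\cite{FeldmanGRVX17} (the statistical-query dimension / average-correlation bound) says: if $\setDis$ has $|\setDis| = m$ members with $|\chi_D(D_i,D_j)|\leq \gamma$ for $i\neq j$ and $|\chi_D(D_i,D_i)|\leq \beta$, then any SQ algorithm solving $\mathcal{B}(\setDis,D)$ needs either a query of tolerance $O(\sqrt{\gamma + \beta/m})$ — equivalently $\mathrm{VSTAT}$ of parameter $\Omega(1/(\gamma + \beta/m))$ — or $\Omega(m\gamma/\beta)$ queries. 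Plugging $m = 2^{\Omega(d^c)}$, $\gamma = \widetilde O(p^2 d^{c-1/2})$, $\beta = \widetilde O(p^2)$ gives $\beta/m$ negligible, so the tolerance threshold is $\widetilde O(p \cdot d^{c/2-1/4})$, i.e.\ $\mathrm{VSTAT}(\widetilde\Omega(d^{1/2-c}/p^2))$, and the query-count threshold is $2^{\Omega(d^c)}$, which is exactly the claimed statement.

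The main obstacle I anticipate is the Hermite-analytic bound \Cref{lem:correlation}: one must carefully track the Hermite coefficients of the step function $\frac13\sgn(z-t)$ at the \emph{large} threshold $t\sim\sqrt{2\log(1/p)}$ and show the series $\sum_{k\geq1}\widehat g_k^2(\cos\theta)^k$ is $\widetilde O(p^2|\cos\theta|)$ rather than merely $O(p^2)$ — the gain of the extra factor $|\cos\theta|$ (versus a trivial bound that just pulls out $|\cos\theta|$ from the $k=1$ term but leaves an $O(p^2)$-size tail) is precisely what upgrades the lower bound from $\widetilde\Omega(1/p^2)$ samples to $\widetilde\Omega(\sqrt d/p^2)$, and it requires that the higher Hermite coefficients not only are individually small but sum favorably. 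Concretely, $\widehat g_k \propto p\cdot \He_{k-1}(t)/\sqrt{k!}$-type expressions, and one needs tail bounds on $\sum_k \He_{k-1}(t)^2 (\cos\theta)^k/k!$; handling the competition between the growth of $\He_{k-1}(t)$ for $k$ up to $\sim t^2$ and the decay $(\cos\theta)^k$ is the delicate part. The reduction from learning to testing, the packing bound, and the final invocation of~\cite{FeldmanGRVX17} are all routine by comparison. A secondary technical point worth stating carefully is why error $<\eta + p/3$ suffices to distinguish from the null (one must check that no hypothesis — in particular no constant hypothesis — beats error $\eta + p/3$ against $D$ once $p$ is small, and that a successful learner run on $D_\bv$ certifies membership in $\setDis$ with enough confidence to feed into the SQ framework).
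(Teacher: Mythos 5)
Your overall architecture matches the paper's: the same hard family (biased halfspaces along nearly orthogonal directions with RCN), a Hermite computation of the pairwise correlation as a power series in $\cos\theta$, the packing of $2^{\Omega(d^c)}$ vectors with inner products $\leq d^{c-1/2}$, and the generic bound of \cite{FeldmanGRVX17} (\Cref{lem:sq-from-pairwise}). However, there is a genuine gap in your choice of the reference distribution. You take the null $D$ to have $y$ \emph{uniform} and independent of $\x$, while every planted distribution $D_{\v}$ has label marginal $\pr[y=1]=\eta+(1-2\eta)p\approx 1/3$. Relative to your null, the likelihood ratios $\bar D_{\v}-1$ all share a common component of constant size along the function $(\x,y)\mapsto y$: concretely, with uniform labels one has $\bar D_{\v}(\x,y)-1=(1-2\eta)\,y\,f_{\v}(\x)$, so $\chi_{D}(D_{\v},D_{\u})=(1-2\eta)^2\E_{\x\sim\normal}[f_{\v}(\x)f_{\u}(\x)]$, whose degree-zero term $(1-2\eta)^2(1-2p)^2\approx 1/9$ is a constant for \emph{every} pair $\v\neq\u$. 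The ``$-1$ shift'' you invoke only removes the component along the constant function, not the component along $y$ (which is orthogonal to constants under the uniform-label null), so the $k=0$ term is not cancelled, the pairwise correlations are $\Theta(1)$, and \Cref{lem:sq-from-pairwise} then yields only a vacuous $\mathrm{VSTAT}(O(1))$ bound. The paper avoids this precisely by matching the label marginal of the null to that of the planted distributions (the product distribution $D_0$ in \Cref{lem:chidiv-bounds}, equivalently working with the conditionals $A_{\v},B_{\v}$ relative to the Gaussian), which is what reduces $\chi_{D_0}(D_{\v},D_{\u})$ to the \emph{covariance} $\E[f_{\v}f_{\u}]-\E[f_{\v}]\E[f_{\u}]$. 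Note this change also alters your learning-to-testing reduction: under the matched null the best attainable error is $\eta+(1-2\eta)p$, not $1/2$, and that is exactly why error below $\eta+p/3$ suffices to distinguish (\Cref{lem:testing-to-learning}); your ``every hypothesis has error $1/2$'' argument is tied to the uniform null that must be abandoned.

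Two further points. First, the quantitative core of the proof --- showing $\sum_{k\geq 1}c_k^2\cos^k\theta=\widetilde O(p^2|\cos\theta|)$ despite the growth of $\He_{k-1}(t)$ for $k$ up to $\sim t^2$ --- is exactly what you flag as the anticipated obstacle but do not prove; the paper closes it in \Cref{lem:correlation} by summing the series in closed form with the Mehler formula (\Cref{fct:Mehler}), giving $4|\cot\theta|e^{-t^2}e^{|\cos\theta|t^2}$. Since this extra factor of $|\cos\theta|\le d^{c-1/2}$ is what produces the $\sqrt d$ in the final bound, the argument is incomplete without it. Second, a minor slip: the diagonal term is $\chi^2(D_{\v},D_0)=\widetilde\Theta(p)$, not $\widetilde O(p^2)$ (the variance of a $p$-biased indicator scales like $p$); this does not change the conclusion given $p\geq 2^{-O(d^c)}$, but your accounting of $\beta$ in the $s\gamma/\beta$ query bound should use $\beta=\widetilde\Theta(p)$.
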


We note that our SQ lower bound applies to a natural testing version of our learning problem. 
By a standard reduction (see \Cref{lem:testing-to-learning}), it follows that 
any learning algorithm for the problem requires either $2^{\Omega(d^c)}$ many queries 
or at least one query to $\mathrm{VSTAT}(\widetilde{O}(d^{2c-1/2}/p^2))$. We also note
that the established bound is tight for the corresponding testing problem (see \Cref{sec:testing-ub}).

\begin{proof}[Proof of \Cref{thm:sq-theorem}]
For any unit vector $\vec v\in \R^d$, we define the LTF 
$f_{\vec v}(\x)=\sign(\vec v\cdot\x-t)$,
where $t>0$ and denote $p=\pr_{\x\sim \normal}[f_{\vec v}(\x)=1]$ 
Let 
$D_{\vec v}$ be the distribution on $(\x,y)$ with respect to $f_{\vec v}$ as follows: the random variable $y$ supported on $\{\pm 1\}$ as follows: 
$\pr[y=f_{\vec v}(\x)\mid \x]=1-\eta$ and $\x$ is distributed as standard normal.
Denote by $A_{\vec v}$ the distribution $D_{\vec v}$ conditioned on $y=1$ 
and by $B_{\vec v}$ the distribution $D_{\vec v}$ conditioned on $y=-1$. 
It is easy to see that 
$$A_{\vec v}(\x)=G(\x)(\eta +(1-2\eta)\1\{f_{\vec v}(\x)>0\})/(\eta+(1-2\eta)p)$$
and $$B_{\vec v}(\x)=G(\x)(1-\eta -(1-2\eta)\1\{f_{\vec v}(\x)>0\})/(1-\eta-(1-2\eta)p) \;.$$

Fix unit vectors $\vec v,\vec u\in\R^d$ and let $\theta$ be the angle between them. 
We bound from above the correlation between $f_{\vec v}(\x)$ and $f_{\vec u}(\x)$. 
Our main technical lemma is the following:
\begin{lemma}\label{lem:correlation}
    Let $f_{\vec v}(\x)$ and $f_{\vec u}(\x)$ defined as above. Then it holds
   \[
      \big|\E_{\x\sim \normal}[f_\v(\x)f_\u(\x)]-\E_{\x\sim\normal}[f_{\v}(\x)]\E_{\x\sim\normal}[f_{\u}(\x)]\big|\leq 4|\cot(\theta) | \exp(-t^2)\exp\left(|\cos(\theta)|  t^2\right)\;.
    \]
\end{lemma}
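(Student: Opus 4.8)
The plan is to expand both halfspace indicator functions in the Hermite basis and exploit the fact that $\E_{\x\sim\normal}[\He_j(\v\cdot\x)\He_k(\u\cdot\x)]$ vanishes unless $j=k$, in which case it equals $\cos(\theta)^j$. Concretely, write $f_{\v}(\x)=\sum_{j\geq 0}\hat f_j\,\He_j(\v\cdot\x)$ and $f_{\u}(\x)=\sum_{k\geq 0}\hat f_k\,\He_k(\u\cdot\x)$, where the Hermite coefficients $\hat f_j$ are those of the one-dimensional threshold function $z\mapsto\sgn(z-t)$ and do not depend on the direction. Since the zeroth coefficient is exactly $\hat f_0=\E_{\x\sim\normal}[f_{\v}(\x)]=\E_{z\sim\normal}[\sgn(z-t)]$, the quantity we must bound is precisely the ``non-constant part'' of the correlation:
\[
\E_{\x\sim\normal}[f_\v(\x)f_\u(\x)]-\hat f_0^2 \;=\; \sum_{j\geq 1}\hat f_j^{\,2}\,\cos(\theta)^j.
\]
So the lemma reduces to the one-dimensional estimate $\big|\sum_{j\geq 1}\hat f_j^{\,2}\cos(\theta)^j\big|\leq 4|\cot(\theta)|\exp(-t^2)\exp(|\cos(\theta)|t^2)$.

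Next I would compute the Hermite coefficients of the threshold function. Using the standard identity $\frac{d}{dt}\E_{z\sim\normal}[\sgn(z-t)] = -2\,G(t)$ where $G$ is the standard Gaussian density, and more generally the recursion for derivatives of $\He_j$, one gets that for $j\geq 1$, $\hat f_j = \frac{\sqrt 2}{\sqrt{\pi}}\cdot\frac{1}{\sqrt{j!}}\,\He_{j-1}(t)\,e^{-t^2/2}$ (up to constants and signs that do not matter for an absolute-value bound). Hence
\[
\sum_{j\geq 1}\hat f_j^{\,2}\cos(\theta)^j \;=\; \frac{2}{\pi}e^{-t^2}\sum_{j\geq 1}\frac{\He_{j-1}(t)^2}{j!}\cos(\theta)^j \;=\; \frac{2}{\pi}e^{-t^2}\cos(\theta)\sum_{m\geq 0}\frac{\He_{m}(t)^2}{(m+1)!}\cos(\theta)^m.
\]
The key analytic input is then the Mehler-type generating-function bound: $\sum_{m\geq 0}\frac{\He_m(t)^2}{m!}\,\rho^m = \frac{1}{\sqrt{1-\rho^2}}\exp\!\big(\frac{\rho\, t^2}{1+\rho}\big)$ for $|\rho|<1$. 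Since $(m+1)!\geq m!$, replacing $1/(m+1)!$ by $1/m!$ only increases the (nonnegative, after pulling out one factor of $|\cos\theta|$) sum, giving an upper bound of $\frac{2}{\pi}e^{-t^2}|\cos\theta|\cdot\frac{1}{\sqrt{1-\cos^2\theta}}\exp\!\big(\frac{|\cos\theta|\,t^2}{1+|\cos\theta|}\big)$; bounding $\frac{|\cos\theta|t^2}{1+|\cos\theta|}\leq|\cos\theta|t^2$ and noting $\frac{|\cos\theta|}{\sqrt{1-\cos^2\theta}}=|\cot\theta|$, together with $\frac{2}{\pi}\leq 1$, yields a bound even stronger than $4|\cot\theta|e^{-t^2}e^{|\cos\theta|t^2}$. (The factor $4$ is slack that absorbs being careless about the sign/constant in $\hat f_j$ and about the $\rho<0$ case, where one replaces $\rho$ by $|\rho|$ throughout.)

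The main obstacle I expect is getting the Hermite coefficients of the biased threshold function right — in particular tracking the $e^{-t^2/2}$ factor and the $\He_{j-1}(t)$ dependence cleanly through the normalization convention (probabilists' vs.\ physicists' Hermite polynomials, and whether the $\He_j$ are taken orthonormal or only orthogonal), since the whole point of the lemma is the sharp $e^{-t^2}$ prefactor that ties the correlation to $p\sim (1/t)e^{-t^2/2}$. A secondary technical point is justifying the interchange of summation and expectation / convergence of the Hermite series for the (bounded, hence $L^2$) function $f_\v$, which is routine. Once the coefficients are pinned down, the Mehler kernel bound does all the work, and the slack constant $4$ means no delicate estimates are needed at the end.
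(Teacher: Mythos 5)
Your proposal is correct and follows essentially the same route as the paper: reduce via the Hermite expansion to the one-dimensional sum $\sum_{j\geq 1}c_j^2\cos^j\theta$, compute the coefficients of $\sgn(z-t)$ as $\He_{j-1}(t)e^{-t^2/2}$ (up to normalization), and bound the sum by dropping the $1/j$-type factor and applying the Mehler kernel at $x=y=t$, $\rho=\cos\theta$, which produces exactly the $|\cot\theta|\,e^{-t^2}e^{|\cos\theta|t^2}$ bound. The only cosmetic difference is that you invoke the diagonal Mehler identity with the (correct) exponent $\rho t^2/(1+\rho)$ while the paper's stated version has $1+\rho^2$ in the denominator; both dominate by $e^{|\cos\theta|t^2}$, so the conclusion is unaffected.
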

\begin{proof}[Proof of \Cref{lem:correlation}]
    We start by calculating the Hermite coefficients of the univariate function $\sign(z-t)$. 
    We will use the fact that  
    $\E_{z\sim \normal}[\sign(z-t)\He_i(z)]= 2i^{-1/2}\He_{i-1}(t)\exp(-t^2/2)$ (see \Cref{app:clm:hermite-coeff}). 
    Let $c_i$ be the Hermite coefficient of degree $i$.
    Without loss of generality (due to the rotational invariance of the Gaussian distribution), we can assume that $\v=\vec e_1$ and $\u=\cos\theta  \vec e_1+\sin\theta \vec e_2$.
    Using standard algebraic manipulations and orthogonality arguments (see \Cref{app:missing-claim1} for more details), we have that
    \begin{align*}
        \E_{\x\sim \normal}[f_\v(\x)f_\u(\x)]&=\E_{\x_1,\x_2\sim \normal}[\sign(\x_1-t)\sign(\cos\theta  \x_1+\sin\theta \x_2-t)]=\littlesum_{i\geq 0} \cos^i\theta \,  c_i^2\;.
    \end{align*}
    Note that $\He_0(\x)=1$, therefore $c_0=\E_{\x\sim \normal}[f_{\v}(\x)]$. Therefore, we have that  
    \[
      \E_{\x\sim \normal}[f_\v(\x)f_\u(\x)]=\littlesum_{i\geq 1} \cos^i\theta c_i^2+\E_{\x\sim\normal}[f_{\v}(\x)]\E_{\x\sim\normal}[f_{\u}(\x)]\;.
    \]
    Let $J=\sum_{i\geq 1} \cos^i\theta c_i^2$. To complete the proof, it remains to bound the $|J|$.
    We show the following:
    \begin{claim}\label{clm:coeff-bounds}
        It holds that $|J|\leq  4|\cot(\theta)| \exp(-t^2) \exp\left(|\cos(\theta)| t^2\right)$.
    \end{claim}
    \begin{proof}[Proof of \Cref{clm:coeff-bounds}]
        Note that from \Cref{app:clm:hermite-coeff}, we have that $c_i=2\He_{i-1}(t)\exp(-t^2/2)/\sqrt{i}$, hence, it holds that $J= 4\cos(\theta) \exp(-t^2)\sum_{i=1}^\infty i^{-1}\He_{i-1}^2(t)\cos^{i-1}\theta$. We use the following fact.
        \begin{fact}[Mehler Formula, see, e.g.~\cite{MehlerForm}]\label{fct:Mehler} 
        For $|\rho|<1$ and $x,y\in \R$, it holds that
        \[
        \exp\big(-\frac{1}{2}\frac{\rho}{1-\rho^2}(x-y)^2\big)
        =\sqrt{1-\rho^2} \littlesum_{k\geq 0} \rho^k\He_k(x)\He_{k}(y)\exp\big(-\frac{1}{2}\frac{\rho}{1+\rho^2}(x^2+y^2)\big)\;.
        \]  
        \end{fact}
 Applying \Cref{fct:Mehler} for $\rho=\cos(\theta) $ and $x=y=t$, we get that
 \begin{align*}
  |J|=   4\big|\cos\theta \exp(-t^2)\littlesum_{i\geq 1} i^{-1}\He_{i-1}^2(t)\cos^{i-1}\theta\big|&\leq 4|\cos\theta| \exp(-t^2)\littlesum_{i\geq 0} \He_{i}^2(t)|\cos\theta|^{i}
  \\&=4|\cot\theta |\exp(-t^2)\exp\bigg(\frac{|\cos(\theta)|  t^2}{(1+\cos^2\theta)}\bigg)\;. 
 \end{align*}
 This completes the proof of \Cref{clm:coeff-bounds}.
    \end{proof}
    Using \Cref{clm:coeff-bounds}, we get that 
    \[
      \big|\E_{\x\sim \normal}[f_\v(\x)f_\u(\x)]-\E_{\x\sim\normal}[f_{\v}(\x)]\E_{\x\sim\normal}[f_{\u}(\x)]\big|\leq 4|\cot(\theta) | \exp(-t^2)\exp\left(|\cos(\theta)|  t^2\right),\] 
    completing the proof of \Cref{lem:correlation}.
\end{proof}
We associate each $\vec v$ and $\vec u$ to a distribution $D_{\vec v}$ and $D_{\vec u}$, constructed as above. The following lemma provides explicit bounds on the correlation between the distributions $D_{\vec v}$ and $D_{\vec u}$.
Recall that the pairwise correlation of two distributions with cdfs
$D_1, D_2$ with respect to a distribution with cdf $D$ 
is defined as $\chi_{D}(D_1, D_2) + 1 \eqdef \int_{x\in\mathcal{X} } D_1(x) D_2(x)/D(x)$ (see \Cref{def:pc}). We have the following lemma (see \Cref{app:lem:chidiv-bounds} for its proof):
\begin{lemma}\label{lem:chidiv-bounds}
Let $D_0$ be a product distribution distributed as 
$\normal\times \{\pm 1\}$, 
where $\pr_{(\x,y)\sim D_0}[y=1]=\pr_{(\x,y)\sim D_{\vec v}}[y=1]=p$.  
We have $\chi_{D_0}(D_{\vec v},D_{\vec u})\leq 2(1-2\eta)(\E[f_{\vec v}(\x)f_{\vec u}(\x)]-\E[f_{\vec v}(\x)]\E[f_{\vec u}(\x)])$ and  
$\chi^2(D_{\vec v},D_0)\leq (1-2\eta)(\E[f_{\vec v}(\x)]-\E[f_{\vec v}(\x)]^2)$.
\end{lemma}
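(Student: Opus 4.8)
The plan is to evaluate $\chi_{D_0}(D_{\vec v},D_{\vec u})$ and $\chi^2(D_{\vec v},D_0)$ essentially in closed form by substituting the explicit densities of $A_{\vec v},B_{\vec v}$ recorded above (equivalently, of the joint law $D_{\vec v}(\x,y)$), so that both quantities collapse to the Gaussian covariance, resp.\ variance, of the $0/1$ halfspace indicators $\1\{f_{\vec v}(\x)>0\}$; \Cref{lem:correlation} itself is not used here. First I would set $q:=\pr_{(\x,y)\sim D_{\vec v}}[y=1]=\eta+(1-2\eta)p$, so $D_0$ has $\x$-marginal $\normal$ and $y$-marginal $(q,1-q)$, and I would record the factorization $D_{\vec v}(\x,y)=D_0(\x,y)\big(1+\phi_{\vec v}(\x,y)\big)$, where (from the $A_{\vec v},B_{\vec v}$ formulas) $\phi_{\vec v}(\x,1)=\tfrac{1-2\eta}{q}\big(\1\{f_{\vec v}(\x)>0\}-p\big)$ and $\phi_{\vec v}(\x,-1)=-\tfrac{1-2\eta}{1-q}\big(\1\{f_{\vec v}(\x)>0\}-p\big)$. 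The structural fact I will lean on is that for each fixed value of $y$, the $\x$-conditional mean of $\phi_{\vec v}$ under $D_0$ is $0$, since $\E_{\x\sim\normal}[\1\{f_{\vec v}(\x)>0\}]=p$.

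Next I would expand the pairwise correlation from its definition: $\chi_{D_0}(D_{\vec v},D_{\vec u})+1=\E_{(\x,y)\sim D_0}\big[(1+\phi_{\vec v})(1+\phi_{\vec u})\big]=1+\E_{D_0}[\phi_{\vec v}]+\E_{D_0}[\phi_{\vec u}]+\E_{D_0}[\phi_{\vec v}\phi_{\vec u}]$, where the two linear terms vanish by the zero-mean property. Splitting the cross term over $y\in\{\pm1\}$ and using $D_0(\x,1)=q\,G(\x)$, $D_0(\x,-1)=(1-q)G(\x)$, a one-line computation gives
\[
\chi_{D_0}(D_{\vec v},D_{\vec u})=\frac{(1-2\eta)^2}{q(1-q)}\,\mathrm{Cov}_{\normal}\!\big(\1\{f_{\vec v}(\x)>0\},\1\{f_{\vec u}(\x)>0\}\big)=\frac{(1-2\eta)^2}{4\,q(1-q)}\Big(\E_{\normal}[f_{\vec v}f_{\vec u}]-\E_{\normal}[f_{\vec v}]\E_{\normal}[f_{\vec u}]\Big),
\]
using $\1\{f>0\}=(f+1)/2$ in the last step. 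Taking $\vec u=\vec v$ and noting $\chi^2(D_{\vec v},D_0)=\chi_{D_0}(D_{\vec v},D_{\vec v})$ then gives $\chi^2(D_{\vec v},D_0)=\frac{(1-2\eta)^2}{q(1-q)}\,\mathrm{Var}_{\normal}\!\big(\1\{f_{\vec v}(\x)>0\}\big)=\frac{(1-2\eta)^2}{q(1-q)}(p-p^2)$.

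Finally I would bound the scalar prefactor. Because $p\le 1/2$ we have $q=\eta+(1-2\eta)p\ge\eta$ and $1-q=(1-\eta)-(1-2\eta)p\ge 1/2$, so $q(1-q)$ is bounded below by a positive absolute constant (recall $\eta=1/3$ in the relevant regime); substituting this bound makes $\frac{(1-2\eta)^2}{4q(1-q)}$ and $\frac{(1-2\eta)^2}{q(1-q)}$ at most an absolute-constant multiple of $(1-2\eta)$, which — together with $\mathrm{Var}_{\normal}(\1\{f_{\vec v}>0\})=p-p^2$ and, for the first bound, reading the covariance/correlation term in absolute value as in \Cref{lem:correlation} — yields the two stated inequalities. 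I expect the only mildly delicate part to be this constant-tracking step, plus the routine-but-essential check that the linear terms in the expansion truly cancel, which is precisely where the choice of $D_0$ (same $y$-marginal as $D_{\vec v}$, Gaussian $\x$-marginal) is used; everything else is straightforward algebra.
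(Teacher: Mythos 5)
Your proof is correct and follows essentially the same route as the paper's: plug in the explicit densities, split the correlation integral over $y\in\{\pm 1\}$, and reduce both quantities to the Gaussian covariance/variance of the halfspace indicators --- the paper merely packages this as $\chi_{\normal}(A_{\vec v},A_{\vec u})$ and $\chi_{\normal}(B_{\vec v},B_{\vec u})$ rather than your likelihood-ratio expansion $D_{\vec v}/D_0=1+\phi_{\vec v}$. If anything you are more careful than the paper: your exact prefactor $\tfrac{(1-2\eta)^2}{4q(1-q)}$ together with the check that it is at most a constant times $(1-2\eta)$ (valid since $\eta=1/3$ here) is precisely the step the paper elides, and your variance form $p-p^2$ is the clearly intended reading of the second bound, whose literal right-hand side $\E[f_{\vec v}(\x)]-\E[f_{\vec v}(\x)]^2$ is negative for a biased halfspace.
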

For any  $c \in (0, 1/2)$, there exists a set $\mathcal S$ of $2^{\Omega(d^c)}$ 
unit vectors in $\R^d$ such that for any pair $\vec v \neq \vec u\in\cal{S}$
satisfies $|\vec v\cdot \vec u|<d^{-1/2+c}$ (\Cref{fact:near-orth-vec-disc}).
 We associate each $\vec v\in \mathcal S$ with $f_\v$ and a distribution $D_\v$ and denote $\setDis=\{D_\v,\vec v\in \mathcal S\}$. 
 By the definition of $\mathcal S$ and \Cref{lem:correlation}, 
 for any $\v,\u\in \mathcal S$, we have that 
 $\E_{\x\sim \normal}[f_\v(\x)f_\u(\x)]\leq 4d^{-1/2+c}\exp(-t^2) \exp\left( (t/d^{1/4-c/2})^2\right)+\E_{\x\sim\normal}[f_{\v}(\x)]\E_{\x\sim\normal}[f_{\u}(\x)]$. 
 Since $t/d^{1/4-c/2}\leq 1/2$ by assumption, 
 we get that 
$|\E_{\x\sim \normal}[f_\v(\x)f_\u(\x)]-\E_{\x\sim\normal}[f_{\v}(\x)]\E_{\x\sim\normal}[f_{\u}(\x)]|\leq d^{-1/2+c}\exp(-t^2)\;.$
By \Cref{lem:chidiv-bounds}, it follows that  
$\chi_{D_0}(D_{\vec v},D_{\vec u})\leq C (1-2\eta) \exp(-t^2)d^{-1/2+c}$ 
and $\chi^2(D_{\vec v},D_0)\leq  C(1-2\eta) \exp(-t^2/2)$, where $C>0$ is an absolute constant. 
From standard SQ machinery (see, e.g., \Cref{lem:sq-from-pairwise}), 
we have that any SQ algorithm that solves the decision problem $\mathcal B(\setDis,D_0)$, 
requires either $2^{\Omega(d^c)}$ queries, or at least one query to 
$\mathrm{VSTAT}(\exp(t^2)d^{1/2+c})$. 
Noting that $p=O(\exp(-t^2/2)/t)$ (by \Cref{fact:bound-on-p}) completes the proof of
\Cref{thm:sq-theorem}.
\end{proof} 
\section{{Conclusion}}

{Our work establihes a potentially surprising information-computation 
trade-off for learning general Gaussian halfspaces in the presence of  
random classification noise. In particular, we provide a computationally 
efficient algorithm whose runtime is near-linear in the product of the 
ambient dimension and the sample complexity. The sample complexity of our 
algorithm is $\Tilde{O}(d/\epsilon + d/\max\{p, \epsilon\}^2),$ where $p$ 
is equal to the lower probability of occurrence among the two possible 
labels. We also provide evidence that the quadratic dependence on 
$1/\max\{p, \epsilon\}$ in the sample complexity is necessary for 
computationally efficient algorithms, by proving an 
$\Omega(\sqrt{d}/\max\{p, \epsilon\}^2)$ sample complexity lower bound 
for the class of efficient Statistical Query algorithms 
and low-degree polynomial tests.}

{
A number of interesting direction for future work remain. 
An immediate concrete open problem is to close the $\sqrt{d}$ gap between 
the computational sample complexity upper and lower bounds. Is it possible 
to obtain matching (reduction-based) computational hardness, 
e.g., under plausible cryptographic assumptions? We note that such 
a reduction was recently obtained for the (harder) problem of 
learning halfspaces with Massart (aka bounded) noise~\cite{DKMR22}, 
building on an SQ lower bound construction~\cite{DK20-SQ-Massart}. 
More broadly, it would be interesting to understand 
how information-computation tradeoffs manifest 
in other supervised learning settings with random label noise.}

\bibliography{clean2}
\bibliographystyle{alpha}

\newpage

\appendix
\section*{Appendix}

\paragraph{Organization}
 The appendix is organized as follows: 
 In \Cref{app:margin-comparison}, we provide additional 
 comparison to prior work. In \Cref{app:sec:algorithm}, 
 we present the full version of \Cref{sec:algorithm}, 
 completing proofs and providing supplementary lemmas 
 omitted in the main body. In \Cref{app:sq}, we start with background on Hermite polynomials and the SQ model, 
 followed by omitted proofs from \Cref{sec:sq}. 
 Finally, in \Cref{app:low-degree}, we prove our lower bound 
 for low-degree polynomial testing.

\section{Comparison with Previous Work}\label{app:margin-comparison}
Here we provide a more detailed comparison with the very recent work of \cite{DDKWZ23},
which gives an algorithm and an SQ lower bound for  learning 
$\gamma$-margin halfspaces 
in the presence of RCN. We start by noting that the SQ-hard
instance of \cite{DDKWZ23} is based on a discrete distribution on the hypercube.
Consequently, neither the construction nor its analysis  have any implications
on the Gaussian setting studied here. More generally, the margin assumption 
intuitively captures a much more general family of distributions than the Gaussian distribution
(though, formally speaking, the two assumptions are incomparable, as the Gaussian only exhibits
an approximate margin property). On the positive side, \cite{DDKWZ23} gives an efficient algorithm
for learning margin halfspaces in the presence of RCN with sample complexity 
$\widetilde{O}(1/(\gamma^2 \eps^2))$. It is important to note that the homogeneity 
(i.e., origin-centered) assumption provably does not help 
for the margin case --- i.e., the case of homogeneous halfspaces with a margin 
is as hard as the case of general halfspaces with a margin. 
In sharp contrast, our algorithm in this work has sample complexity that
crucially depends on the unknown bias $p$ of the target halfspace --- interpolating
between $\widetilde{O}(d/\eps)$ and $\widetilde{O}(d/\eps^2)$.

\section{Full Version of \Cref{sec:algorithm}}\label{app:sec:algorithm}

{Throughout this paper, we frequently use the following fact.} For $\x$ drawn from the standard normal distribution, the threshold $t$ in the definition of the halfspace can be related to the bias of $f(\x) = \sgn(\w^*\cdot\x + t)$, as follows. 

\begin{fact}[Komatsu's Inequality]\label{fact:bound-on-p}
    For any $t\in \R$, the bias $p$ of a halfspace described by $f(\x) = \sgn(\w^*\cdot\x + t)$ can be bounded as:
    \[\sqrt{\frac{2}{\pi}}\frac{\exp(-t^2/2)}{t+\sqrt{t^2+4}} \leq p\leq \sqrt{\frac{2}{\pi}}\frac{\exp(-t^2/2)}{t+\sqrt{t^2+2}}.\]
\end{fact}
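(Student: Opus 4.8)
The plan is to reduce \Cref{fact:bound-on-p} to a sharp two–sided estimate of the Gaussian Mills ratio and then establish that estimate by an elementary comparison argument for the first–order ODE it satisfies. First I would normalize: since rescaling $\w^*$ and $t$ together leaves $f$ unchanged, we may assume $\|\w^*\|_2=1$, so that $\w^*\cdot\x\sim\normal(0,1)$. For $t\ge 0$ one then reads off from \Cref{def:bias} that $p=\pr_{z\sim\normal(0,1)}[z\ge t]=\frac{e^{-t^2/2}}{\sqrt{2\pi}}\,g(t)$, where $g(t):=e^{t^2/2}\int_t^\infty e^{-x^2/2}\,dx$ is the (unnormalized) Mills ratio. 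Using $\frac{2}{t+\sqrt{t^2+c}}=\frac{\sqrt{t^2+c}-t}{c/2}$, dividing the target inequalities through by $\frac{e^{-t^2/2}}{\sqrt{2\pi}}$ shows they are equivalent to
\[
\tfrac12\big(\sqrt{t^2+4}-t\big)\;\le\;g(t)\;\le\;\sqrt{t^2+2}-t,\qquad t\ge 0 .
\]

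Next I would exploit the Riccati identity $g'(t)=t\,g(t)-1$ (obtained by differentiating the defining integral), together with the boundary data $g(t)=\Theta(1/t)$ as $t\to\infty$. For any smooth candidate $\phi$, set $h:=g-\phi$; then $h'=t\,h+(t\phi-1-\phi')$, hence $(e^{-t^2/2}h)'=e^{-t^2/2}\big(t\phi-1-\phi'\big)$. So if $\phi$ is a \emph{supersolution}, i.e.\ $\phi'\le t\phi-1$ on $[0,\infty)$, then $e^{-t^2/2}h$ is nondecreasing; since $h(t)=O(1/t)$ this product tends to $0$ at infinity, so it is $\le 0$ everywhere, giving $g\le\phi$. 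Symmetrically, a \emph{subsolution} $\phi'\ge t\phi-1$ yields $g\ge\phi$. It then remains to check that $u(t):=\sqrt{t^2+2}-t$ is a supersolution and $\ell(t):=\tfrac12(\sqrt{t^2+4}-t)$ is a subsolution; plugging these back in gives exactly the two bounds above (note $u$ and $\ell$ coincide with the right-hand fractions in the statement).

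The verification of these two differential inequalities is the only computational part, and it is routine: differentiating and clearing the $-1$, inequality (i) $u'\le tu-1$ becomes, for $t>0$, $t\sqrt{t^2+2}\le t^2+1$, which upon squaring is $0\le 1$; inequality (ii) $\ell'\ge t\ell-1$ becomes $\sqrt{t^2+4}\,(t^2+1)\ge t(t^2+3)$, which upon squaring is $(t^2+4)(t^2+1)^2\ge t^2(t^2+3)^2$, i.e.\ $4\ge 0$. The main (mild) obstacle is bookkeeping: one must confirm that all quantities are nonnegative on $t\ge 0$ so that squaring preserves the inequality direction, and one must check the boundary behavior of $e^{-t^2/2}h$ at $t=0$ (where the two differential inequalities hold with equality, which is harmless) and as $t\to\infty$ (where the $O(1/t)$ decay of $h$ is what makes the monotonicity argument conclude). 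The regime $t<0$ is not used elsewhere in the paper and I would simply remark that for $t\ge0$ the displayed bounds are precisely Komatsu's inequality.
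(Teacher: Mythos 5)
Your proof is correct, and it is worth noting that the paper itself offers no argument here: \Cref{fact:bound-on-p} is invoked as a known, classical estimate (Komatsu's inequality for the Gaussian Mills ratio, as in It\^o--McKean), so there is no internal proof to compare against. Your route --- normalizing so that $\w^*\cdot\x\sim\normal(0,1)$, rewriting the two bounds as $\tfrac12\big(\sqrt{t^2+4}-t\big)\le g(t)\le\sqrt{t^2+2}-t$ for the Mills ratio $g$, and then using the Riccati identity $g'=tg-1$ together with the super/subsolution comparison via the monotonicity of $e^{-t^2/2}(g-\phi)$ and its vanishing at infinity --- is a standard and fully rigorous way to prove the classical fact from scratch; I checked the algebraic equivalence $2/(t+\sqrt{t^2+c})=2(\sqrt{t^2+c}-t)/c$, the derivative identity, and both differential inequalities (which indeed reduce after squaring to $0\le 1$ and $4\ge 0$), and they are all correct. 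Your restriction to $t\ge 0$ is also the right reading: the paper's ``for any $t\in\R$'' should be interpreted through \Cref{def:bias} (so effectively with $|t|$), since for $t<0$ the displayed lower bound with a signed $t$ would not hold for the minority-side bias; the paper anyway reduces to $t\ge 0$ by symmetry, so your remark suffices. The only practical difference between your write-up and the paper is that yours supplies a self-contained elementary proof where the paper relies on an external citation, which is a reasonable trade-off either way.
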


We begin by stating our main result, in \Cref{thm:main} below, and providing a high-level summary of the main algorithm, in \Cref{app:alg:high-level}. We note that both the assumption that $t \geq 0$ and that $t \leq \sqrt{2\log((1-2\eta)/\eps)}$ are without loss of generality. For the former, it is by the simple symmetry of the standard normal distribution that the entire argument translates into the case $t < 0,$ possibly by exchanging the meaning of `+1' and `-1' labels. For the latter, we note that when the bias is small, i.e., for $p \leq \eps/(2(1-2\eta)),$ a constant hypothesis suffices. 
Thus, only the cases covered by \Cref{thm:main} are of interest to us. {For the rest of the section, we assume that $\eta$ is known a priori; we show in \Cref{app:estimating-eta} an efficient way to estimate it without increasing the sample complexity.}

\ThmMain*
\begin{remark}
{\em The sample complexity stated in \Cref{thm:main} has two components: the first  depends on $\eps$, and the second depends on the bias $p$. As we demonstrate in \Cref{sec:sq}, the $1/p^2$ term is required for computationally efficient SQ algorithms and low-degree tests. Moreover, the term $\widetilde{O}(d\log(1/\delta)/((1-2\eta)\eps))$ is information-theoretically optimal, even when $p=1/2$, as shown, e.g., in~ \cite{HannekeY15}.} 
\end{remark}

\begin{algorithm}
    \caption{Main Algorithm}\label{app:alg:high-level}
    \begin{algorithmic}[1]
        \State\textbf{Input:} $\delta,$ $\eta,$ $\eps,$ sample access to distribution $\D$
        \State $[\w_0, \hat{p}] = \mathrm{Initialization}(\delta, \eta, {\eps})$; $\beps = \eps/(1-2\eta)$
        \State $t_0 = \sqrt{2\log(1/\hat{p})},$ $M = 8\big\lceil \frac{\sqrt{2(\log(4/\hat{p}))} - \sqrt{2\log(1/\hat{p})}}{(\beps)^2}\big\rceil + 1$
        \State Draw $N_2 = O(\frac{d\log(1/\delta)\log(1/\beps)}{(1 - 2\eta)^2\beps})$ samples $\{(\x\ith,y\ith)\}_{i=1}^{N_2}$ from $\D$
        \For{$m=1:M$}
        \State $t_m = t_0 + (m-1) \frac{(\beps)^2}{8}$, $\gamma_m = \frac{\beps}{2}\exp(t_m^2/2)$ \State $\wht_m = \mathrm{Optimization}(\w_0, t_m, \gamma_m, \eta, \{(\x\ith,y\ith)\}_{i=1}^{N_2})$
        \EndFor
        \State [$\wht_{\rm out},t_{\rm out}] = \mathrm{Testing}((\wht_1,t_1), (\wht_2,t_2), \dots, (\wht_M,t_M))$
        \State\Return $\wht_{\rm out}$, $t_{\rm out}$
    \end{algorithmic}
\end{algorithm}
At a high level, our main algorithm consists of three main 
subroutines, as summarized in \Cref{app:alg:high-level}. The 
subroutines are specified in the rest of the section, where we 
analyze the sample complexity and the runtime of our 
algorithm, and, as a consequence, prove \Cref{thm:main}. In more 
detail, the Initialization subroutine (specified in 
\Cref{alg:initialization}) ensures that we can choose a vector 
$\w_0$ from the unit sphere that forms a sufficiently small angle 
with the target vector $\wstar,$ using {$\widetilde{O}
(d\log(1/\delta)/((1 - 2\eta)^2p^2))$} samples. 
This can be seen as warm-start for the main optimization procedure (specified in \Cref{app:alg:optimization}). Crucially, the Initialization procedure does not require knowing the bias $p$ of the target halfspace; instead, it estimates this parameter to a constant factor.

The Optimization procedure is run for different guesses of the threshold $t$ ($O(\log(1/\eps)/\eps^2)$ of them). At a high level, it can be seen as a variant of Riemannian (sub)gradient descent on the unit sphere, applied to the empirical LeakyReLU loss{ --- defined as $\mathrm{LeakyReLU}_{\lambda}(u)=(1-\lambda)u\1\{u\geq 0\}+\lambda u \1\{u<0\}$ --- with parameter $\lambda$ set to $\eta$, $u = \w \cdot \x,$ and 
{\em with samples restricted to a band}, 
namely $a<|\w \cdot \x|<b$ --- 
with $a$ and $b$ chosen as functions of the guess for the threshold $t$.} The band restriction is key in avoiding $d/\eps^2$ dependence in the sample complexity, and instead only requiring order-$(d/\eps)$ samples to be drawn for the empirical LeakyReLU loss subgradient estimate. For a sufficiently accurate estimate $\hat{t}$ of $t$ (which is satisfied by at least one of the guesses for which the Optimization procedure is run), we argue that there is a sufficiently negative correlation between the empirical subgradient and the target weight vector $\wstar.$ This result, combined with the Initialization result, then enables us to inductively argue that the distance between the weight vector constructed by the Optimization procedure and the target vector $\wstar$ contracts and becomes smaller than $\eps$ within order-$\log(1/\eps)$ iterations. This result is quite surprising, since the LeakyReLU loss is nonsmooth (it is, in fact, piecewise linear) and we do not explicitly bound its growth outside the set of its minima (i.e., we do not prove a local error bound, which would typically be used to prove linear convergence). Thus, the result as ours is impossible using black-box results for nonsmooth optimization. Additionally, we never even explicitly use the LeakyReLU loss function or prove that it is minimized by $\wstar$; instead, we directly prove that the vectors $\w$ constructed by our procedure converge to the target vector $\wstar.$  On a technical level, our result is enabled by a novel inductive argument, which we believe may be of independent interest.
 
Finally, since each run of the Optimization subroutine returns a different hypothesis, with only the one(s) with the guess of $t$ being sufficiently accurate satisfying our theoretical guarantee, we need a principled approach to selecting a hypothesis with the target error guarantee. This is achieved in the Testing procedure, which simply draws a fresh sample and selects a hypothesis with the lowest test error. By a standard result due to \cite{Massart2006}, such a hypothesis satisfies our target error guarantee.

\subsection{Initialization Procedure}\label{app:sec:initialization}

We begin this section with \Cref{lem:initialization}, which shows that for any $\kappa > 0,$ given $N_1 = \widetilde{O}(d /(\kappa^4 p^2(1-2\eta)^2)\log(1/\delta))$ i.i.d.\ samples from $\D,$ we can construct a good initial point $\w_0$ that forms an angle at most $\kappa$ with the target weight vector $\wstar.$ For our purposes, $\kappa$ should be of the order $\frac{1}{t}.$ For $t \leq 2\sqrt{\log(1/\beps)},$ where $\beps = \eps/(1-2\eta),$ we can ensure that $N_1 = \widetilde{O}(d /(p^2(1-2\eta)^2)\log(1/\delta)).$ The downside of the lemma, however, is that the number of samples $N_1$ requires at least approximate knowledge of the bias parameter $p$ (or, more accurately, of $e^{-t^2/2}$). We address this challenge by arguing (in \Cref{lem:approx-exp(-t^2/2)}) that we can estimate $p$ using the procedure described in \Cref{alg:initialization}, without increasing the total number of drawn samples by a factor larger than order-$\log(1/\beps).$ 

\begin{algorithm}
    \caption{Initialization}\label{alg:initialization}
    \begin{algorithmic}[1]
    \State {\bfseries Input:} $\delta, \eta,  \eps>0$
\State Let $j = 0$
   \Repeat
   \State $j \gets j+1$
   \State $p_j \gets 1/2^j$
   \State Draw $n_j = \big\lceil \frac{32\pi d(\log(1/\delta) + \log\log((1 - 2\eta)/\eps))}{(1 - 2\eta)^2p_j^2}\big\rceil$ new samples from $\D$ 
\State $\bu_j \gets \frac{1}{n_j}\sum_{i=1}^{n_j} y\ith\x\ith$
   \Until{$\|\bu_j\|_2\geq \frac{3}{4}\sqrt{\frac{2}{\pi}}(1 - 2\eta)p_j$ {\bfseries or} $(1-2\eta) p_j \leq \eps$} \State Let $\hat{p} = 2 p_{j}$, $\kappa = 1/(5\sqrt{2(\log(4) + \log(1/\hat{p}))})$ 
   \State Draw $N_1 = \lceil \frac{64\pi d\log(2/\delta)}{\kappa^4(1 - 2\eta)^2\hat{p}^2}\rceil$ new samples from $\D$
   \State $\bu\gets \frac{1}{N_1}\sum_{i = 1}^{N_1} \x^{(i)} y^{(i)}$
   \State $\w_0 \gets \bu/\|\bu\|_2$
   \State \Return $\w_0$, $\hat{p}$
    \end{algorithmic}
\end{algorithm}

\ConceptInitial*
\begin{proof}
We start by showing that $\E_{(\x,y)\sim \D}[y\x]$ is parallel to $\wstar$ and has nontrivial magnitude, and then draw our conclusions from there. In particular, we prove that
\begin{equation}\label{app:eq:w-Chow-parameters}
    \E_{(\x,y)\sim \D}[y\x]=\sqrt{\frac{2}{\pi}}(1-2\eta) p_t \wstar. 
\end{equation}
To do so, observe first that for any vector $\v$ in the orthogonal complement of $\wstar$ (i.e., such that $\v \cdot \wstar = 0$), $\x \cdot \v$ (projection of $\x$ onto $\v$) is independent of $\x \cdot\wstar$, as $\x$ is drawn from $\normal.$ Thus,  $\E_{(\x,y)\sim \D}[y\x]\cdot\vec v=0$, which means that $\E_{(\x,y)\sim \D}[y\x]$ is either a zero vector or parallel to $\wstar.$ To determine the magnitude of this vector, we next look at the projection of $\E_{(\x,y)\sim \D}[y\x]$ onto $\wstar.$ Using that $\wstar \cdot \x$ is a one-dimensional standard normal random variable, we have
\begin{align*}
    \E_{\x\sim \normal}[\sign(\wstar\cdot\x+t)\wstar\cdot\x] =\; & \E_{z\sim \normal}[\1\{z \geq -t\}z - \1\{z < -t\}z]\\
    =\; &  \E_{z\sim \normal}[\1\{-t < z < t\}z] + 2\E_{z\sim \normal}[\1\{z \geq t\}z]\\
    =\; & 2\E_{z\sim \normal}[\1\{z \geq t\}z]\\
    =\;& \frac{2}{\sqrt{2\pi}}\int_t^{+\infty} z e^{-z^2/2}{\rm d} z = -\frac{2}{\sqrt{2\pi}}\int_t^{+\infty}{\rm d}\big(e^{-z^2/2}\big)\\
    =\; & \sqrt{\frac{2}{\pi}} e^{-t^2/2} = \sqrt{\frac{2}{\pi}} p_t. 
\end{align*}
Recalling the definition of $y$, we now obtain \Cref{app:eq:w-Chow-parameters}. {We then conclude from \Cref{app:eq:w-Chow-parameters} that $\|\E_{(\x,y)\sim \D}[y\x]\|_2 = \sqrt{\frac{2}{\pi}}(1-2\eta)p_t$, as $\|\wstar\|_2 = 1,$ by assumption.}

{ The next step is to show that $\bu = \frac{1}{N_1}\sum_{i=1}^{N_1} y\ith\x\ith$ concentrates near its expectation. Note that $y\x$ is $\sqrt{2}$-sub-Gaussian as $\x$ is standard normal, therefore, by the multiplicative Hoeffding bound, 
\begin{align*}
    \pr\bigg[\bigg\| \frac{1}{N_1}\sum_{i=1}^{N_1}y\ith\x\ith - \E_{(\x,y)\sim \D}[y\x] \bigg\|_2 \geq \frac{\kappa^2}{8} \|\E_{(\x,y)\sim \D}[y\x]\|_2 \bigg] &\leq 2 \exp\bigg(-\frac{N_1\kappa^4\|\E_{(\x,y)\sim \D}[y\x]\|_2^2}{128d}\bigg)\\
    &\leq 2\exp\bigg(-\frac{N_1\kappa^4(1 - 2\eta)^2{p_t}^2}{64\pi d}\bigg).
\end{align*}
Thus, choosing $N_1 = \lceil \frac{64\pi d\log(2/\delta)}{\kappa^4(1 - 2\eta)^2{p_t}^2}\rceil$ suffices to guarantee that $\|\bu - \E_{(\x,y)\sim \D}[y\x]\|_2\leq \frac{\kappa^2}{8}\|\E_{(\x,y)\sim \D}[y\x]\|_2$, with probability at least $1 - \delta$.}

{
It remains to bound $\theta(\w_0,\wstar)$, where $\w_0 = \bu/\|\bu\|_2$. Since $\wstar$ is a unit vector, we have that with probability at least $1- \delta,$
\begin{align*}
    \cos\theta(\w_0,\wstar) &= \frac{\bu\cdot\wstar}{\|\bu\|_2} \geq \frac{(\bu - \E_{(\x,y)\sim \D}[y\x])\cdot\wstar + \E_{(\x,y)\sim \D}[y\x]\cdot\wstar}{\|\bu - \E_{(\x,y)\sim \D}[y\x]\|_2 + \|\E_{(\x,y)\sim \D}[y\x]\|_2}\\
    &\geq \frac{(1-\kappa^2/8)\|\E_{(\x,y)\sim \D}[y\x]\|_2}{(1 + \kappa^2/8)\|\E_{(\x,y)\sim \D}[y\x]\|_2} = 1 - \frac{\kappa^2/4}{1 + \kappa^2/8},
\end{align*}
where we have used that $\E_{(\x,y)\sim \D}[y\x]\cdot\wstar = \|\E_{(\x,y)\sim \D}[y\x]\|_2$ (by \Cref{app:eq:w-Chow-parameters}) and $(\bu - \E_{(\x,y)\sim \D}[y\x])\cdot\wstar \geq - \sup_{\w: \|\w\|_2 = 1} (\bu - \E_{(\x,y)\sim \D}[y\x])\cdot\w,$ which is greater than or equal to $- \frac{\kappa^2}{8}\|\E_{(\x,y)\sim \D}[y\x]\|_2,$ by the concentration argument used above. 
}

{As $\cos\theta(\w_0,\wstar)\leq 1 - \theta(\w_0,\wstar)^2/4$, we further have
\begin{equation*}
    \theta(\w_0,\wstar)^2/4\leq \frac{\kappa^2/4}{1 + \kappa^2/8}\leq \kappa^2/4,
\end{equation*}
yielding the desired result that $\theta(\w_0,\wstar)\leq \kappa$.
}
\end{proof}

We now leverage \Cref{lem:initialization} to argue about correctness of \Cref{alg:initialization}, which provides an implementable initialization procedure.

\InitialAlgLem*
\begin{proof}
    Let $\beps = \eps/(1-2\eta)$ and $J=\lceil \log_2(1/\beps)\rceil$. As we have shown in the proof of \Cref{lem:initialization}, vector $\bu_j$ satisfies
    \begin{equation*}
        \pr\bigg[\|\bu_j - \E[y\x]\|_2\geq \frac{1}{4}\sqrt{\frac{2}{\pi}}(1 - 2\eta)p_j\bigg]\leq 2\exp\bigg(-\frac{n_{j}(1 - 2\eta)^2p_j^2}{32\pi d}\bigg).
    \end{equation*}
    Since the algorithm runs for at most $J = \lceil \log_2(1/\beps)\rceil$ iterations,  applying the union bound yields:
    \begin{equation*}
        \pr\bigg[\|\bu_j - \E[y\x]\|_2\geq \frac{1}{4}\sqrt{\frac{2}{\pi}}(1 - 2\eta)p_j,\,\forall j=1,\cdots,J\bigg]\leq 2J\exp\bigg(-\frac{n_{j}(1 - 2\eta)^2p_j^2}{32\pi d}\bigg)\leq \delta,
    \end{equation*}
    where we plugged in {$n_j = \big\lceil \frac{32\pi d(\log(1/\delta) + \log\log(1/\beps))}{(1 - 2\eta)^2p_j^2}\big\rceil$}. Furthermore, recall that we have proved in \Cref{lem:initialization} that $\|\E[y\x]\|_2  = \sqrt{2/\pi}(1 - 2\eta)\exp(-t^2/2)$. Hence, with probability at least $1-\delta$, we have that for all 
$j = 1,\cdots,J$, 
    \begin{equation}\label{app:eq:initialization-proof}
        \sqrt{\frac{2}{\pi}}(1 - 2\eta)(\exp(-t^2/2) - p_j/4)\leq \|\bu_j\|_2 \leq \sqrt{\frac{2}{\pi}}(1 - 2\eta)(\exp(-t^2/2) + p_j/4).
\end{equation}

    Since we have assumed $0 \leq t \leq \sqrt{2\log(1/\beps)}$, it must be $\exp(-t^2/2)\geq \beps$, thus the claimed bound on $\hat{p}$ holds if the algorithm loop ends because $j = \lceil \log_2(1/\beps) \rceil$. Consider now the case that the loop ends before reaching the upper bound on the number of iterations. Observe that when $p_j$ is still far away from $\exp(-t^2/2)$, i.e., when $p_j > 2\exp(-t^2/2)$, \Cref{app:eq:initialization-proof} shows that with probability at least $1 - \delta$, $\|\bu_j\|_2 < (3/4)\sqrt{2/\pi}(1 - 2\eta)p_{j}$; thus, the algorithm will continue to decrease our guess $p_j$. 
    On the other hand, if $p_j$ is already small, i.e., if $p_j\leq \exp(-t^2/2)$, \Cref{app:eq:initialization-proof} implies that $\|\bu_j\|_2\geq (3/4)\sqrt{2/\pi}(1 - 2\eta)p_{j}$, reaching the repeat-until loop termination condition.
    As any iteration of the algorithm reduces the value of $p_j$ by a factor of 2, we conclude that the loop ends with $p_j$ that satisfies $\frac{1}{2}\exp(-t^2/2) \leq p_j\leq 2\exp(-t^2/2)$, hence the bound on $\hat{p}$ follows as $\hat{p} = 2p_j$.
    
    The bound on the total number of samples drawn by the algorithm follows by observing that for each iteration $j$ of the repeat-until loop, $n_j = O(N_1),$ while there are $O(\log(1/\beps))$ total loop iterations. 

    To complete the proof, it remains to note that the bound on $\hat{p}$ implies
    \[
       \sqrt{2\log(1/\hat{p})} \leq t \leq {\sqrt{2(\log(4) + \log(1/\hat{p}))}}\]
    Hence $\kappa$ selected in the algorithm satisfies $\kappa \leq \frac{1}{5t}.$ {Furthermore, since the algorithm runs for at least one iteration, it holds that $\hat{p} \leq 2p_1 = 1$. Thus, our choice of $\kappa$ also guarantees that $\kappa\leq 1/(5\sqrt{2\log(4)})\leq \frac{\pi}{2}$.} 
   It remains to apply \Cref{lem:initialization}.
\end{proof}

\subsection{Optimization}\label{app:sec:optimization}

As discussed before, our Optimization procedure (\Cref{app:alg:optimization}) can be seen as Riemannian subgradient descent on the unit sphere. Crucial to our analysis is the use of subgradient estimates from \Cref{app:line:gw} and \Cref{app:line:subgrad-estimate}, where we condition on the event that the samples come from a thin band, defined in \Cref{app:line:band}. Without this conditioning, the algorithm would correspond to projected subgradient descent of the LeakyReLU loss on the unit sphere. The conditioning effectively changes the landscape of the loss function being optimized, which cannot be argued anymore to even be convex, as the definition of the band depends on the weight vector $\w$ at which the vector {$\hatg(\w)$} is evaluated. Nevertheless, as we argue in this section, the optimization procedure can be carried out very efficiently, even exhibiting a linear convergence rate. To simplify the notation, in this section  we  denote the conditioned distribution $\D|_{\evt(\w, \tht)}$ by $\D(\w, \tht)$. We carry out the analysis assuming the estimate $\tht$ is within additive $\eps^2$ of the true threshold value $t$; as argued before, this has to be true for at least one estimate $\tht$ for which the Optimization procedure is invoked. 

\begin{algorithm}[tb]
   \caption{Optimization}
   \label{app:alg:optimization}
\begin{algorithmic}[1]
   \State {\bfseries Input:} $\w_0, \hat{t}, \gammah$, {$\eta$}, $N_2$ i.i.d.\  samples $(\x^{(i)}, y^{(i)})$ from $\D$ \State $\mu_0 \gets \frac{(1 - 4\rho)\sqrt{2\pi}}{16(1 - 2\eta)};$ $\rho \gets 0.00098;$ {$P(\tht, \gammah) \gets \pr_{z\sim\normal}[-\tht\leq z\leq -\tht + \gammah]$}
\For{$k=0$ {\bfseries to} $K$}
\State Let $\evt(\w_{k}, \tht):=\{\x: -\tht\leq \w_{k}\cdot\x\leq -\tht + \gammah\}$ \label{app:line:band}
\State Let $\g(\w_{k}; \x\ith, y\ith) = \frac{1}{2}((1-\eta)\sign(\w_{k}\cdot\x\ith + \tht)-y\ith)\proj_{\w_{k}^\perp}(\x\ith)$ \label{app:line:gw}
   \State $
        \hatg(\w_{k}) \gets \frac{1}{N_2}\sum_{i=1}^{N_2} \g(\w_{k};\x\ith,y\ith)\frac{\1\{\x\ith\in\evt(\w_{k}, \tht)\}}{P(\tht,\gammah)}
   $ \label{app:line:subgrad-estimate} \State {$\mu_k\gets\mu_{k-1}(1 - \rho)$}
   \State $
       \w_{k+1} \gets \frac{\w_{k} - \mu_k\hatg(\w_{k})}{\|\w_{k} - \mu_k\hatg(\w_{k})\|_2}
   $
   \EndFor
\State \Return $\w_{K+1}$\end{algorithmic}
\end{algorithm}

In the following lemma, we show that if the angle between a weight vector $\w$ and the target vector $\wstar$ is from a certain range, we can guarantee that $\g(\w)$ is sufficiently negatively correlated with $\wstar.$ This condition is then used to argue about progress of our algorithm. The upper bound on $\theta$ will hold initially, by our initialization procedure, and we will inductively argue that it holds for most iterations. The lower bound, when violated, will imply that the distance between $\w$ and $\wstar$ is small, in which case we would have converged to a sufficiently good solution $\w.$

\NegativeCorrelation*
\begin{proof}
    To simplify the notation, in the following we write $\g(\w) = \g(\w; \x, y),$ as $(\x, y)$ is clear from the context. Using the definition of conditional expectations {as well as the definition of $\D(\w, \tht)$}, we have
\begin{equation}\label{app:eq:condition-expectation-g*w^*}
        \E_{(\x,y)\sim\D(\w, \tht)}[\g(\w)\cdot\wstar] = \E_{(\x,y)\sim\D}[\g(\w)\cdot\wstar|\evt(\w, \tht)] = \frac{\E_{(\x,y)\sim\D}[\g(\w)\cdot\wstar\1\{\evt(\w, \tht)\}]}{\pr[\evt(\w, \tht)]}.
    \end{equation}
We carry out the proof by bounding the numerator $\E_{(\x,y)\sim\D}[\g(\w)\cdot\wstar\1\{\evt(\w, \tht)\}]$. Recall that $\E_{(\x,y)\sim\D}[y|\x] = (1 - 2\eta)\sgn(\wstar\cdot\x + t)$. Furthermore, since the Gaussian distribution is rotationally invariant, we can assume without loss of generality that $\w = \e_1$ and $\wstar = \cos\theta\e_1 + \sin\theta\e_2$.
    Therefore, \begin{align}&\quad \E_{(\x,y)\sim\D}[\g(\w)\cdot\wstar\1\{\evt(\w, \tht)\}]\nonumber\\
        &=\frac{1-2\eta}{2}\E_{(\x,y)\sim\D}[(\sgn(\w\cdot\x + \tht) - \sgn(\wstar\cdot\x + t))\pwperp(\x)\cdot\wstar\1\{\evt(\w, \tht)\}]\nonumber\\
        &=\frac{1-2\eta}{2}\E_{\x\sim\D_\x}\big[(\sgn(\x_1 + \tht) - \sgn(\cos\theta\x_1 + \sin\theta\x_2 + t))(\x_2\e_2)\notag\\
        &\hspace{1in}\cdot(\cos\theta\e_1 + \sin\theta\e_2)\1\{\evt(\w, \tht)\}\big]\nonumber\\
        &={(1-2\eta)\sin\theta}\E_{\x\sim\D_\x}\big[\1\{\evt(\w, \tht), \x_2\sin\theta\leq -t - \x_1\cos\theta\}\x_2\big], \label{app:eq:correlation-band-bound}
    \end{align}
where in the final equality, we used the fact that under the event $\evt(\w,\tht)$, $\x_1 = \w\cdot\x$ is greater than or equal to $-\tht$ and consequently, the expression $\sign(\x_1 + \tht) - \sgn(\cos\theta\x_1 + \sin\theta\x_2 + t)$ equals $2$ when $\cos\theta\x_1 + \sin\theta\x_2 + t$ is less than or equal to zero, and it is zero in all other cases.

    Recall that under the assumptions of the lemma, $|t - \tht|\leq \beps^2/8$. To bound the expectation in \Cref{app:eq:correlation-band-bound}, we consider two possible cases: $t\geq 1$ and $t\leq 1$. The reason that we discuss these two cases is that when $t$ is large, under the condition that $-\tht\leq \x_1\leq -\tht+\gammah$ and $\x_2\sin\theta\leq -t-\x_1\cos\theta$, it is guaranteed that $\x_2\leq 0$. On the other hand, when $t$ is small, it is possible that $\x_2\geq 0$. However, we can show that even though $\x_2\geq 0$ in some area of the band, the expectation $\E[\x_2\1\{\evt(\w, \tht), 0\leq \x_2\sin\theta\leq -t-\x_1\cos\theta\}]$ is small since $t$ is very small. This is handled in the following two claims, under the same assumptions as in the statement of the lemma.

    \begin{claim}\label{app:claim:t>c}
        If $t\geq 1,$ then \begin{equation*}
            \E_{\x\sim\D_\x}[\x_2\1\{\evt(\w, \tht), \x_2\sin\theta\leq -t-\x_1\cos\theta\}]\leq -\frac{2\pr[\evt(\w, \tht)]}{3\sqrt{2\pi}}.
        \end{equation*}
    \end{claim}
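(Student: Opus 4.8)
The plan is to reduce the two‑dimensional expectation to a one‑dimensional integral over the band variable $\x_1$, and then bound the inner conditional expectation of $\x_2$ from above by a negative constant uniformly over the band. First I would observe that since $\x_1$ and $\x_2$ are independent standard Gaussians, we can write
\[
\E_{\x\sim\D_\x}[\x_2\1\{\evt(\w,\tht),\,\x_2\sin\theta\leq -t-\x_1\cos\theta\}]
=\int_{-\tht}^{-\tht+\gammah}\!\!\!\phi(x_1)\,\E_{\x_2\sim\normal}[\x_2\1\{\x_2\leq (-t-x_1\cos\theta)/\sin\theta\}]\,\mathrm{d}x_1,
\]
where $\phi$ is the standard Gaussian density. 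The key point is that for $x_1$ in the band, $x_1\geq -\tht\geq -t-\beps^2/8$, and since $t\geq 1$ while $\theta\leq 1/(5t)$ (so $\cos\theta$ is close to $1$ and $\sin\theta$ is a small positive number of order $\theta$), the threshold $(-t-x_1\cos\theta)/\sin\theta$ is strongly negative — one computes $-t-x_1\cos\theta \le -t+(t+\beps^2/8)\cos\theta \le -t(1-\cos\theta)+\beps^2/8 \le 0$ using $t\le\sqrt{2\log(1/\beps)}$ and the crude bound $1-\cos\theta\le\theta^2/2\le 1/(50t^2)$, so $(-t-x_1\cos\theta)/\sin\theta \le 0$, hence $\x_2$ is restricted to the nonpositive half‑line on the entire band. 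This is exactly the ``$\x_2\le 0$ is guaranteed'' phenomenon flagged in the surrounding text.

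Given this, the inner expectation $\E_{\x_2\sim\normal}[\x_2\1\{\x_2\leq s\}]=-\phi(s)$ for the relevant threshold $s\le 0$, and I want to lower bound $\phi(s)$; equivalently, I need $s$ bounded away from $-\infty$. Here I would use that on the band $x_1\le -\tht+\gammah$, and $\gammah=(\beps/2)\exp(\tht^2/2)$ is small relative to what matters, while $\sin\theta$ is not too small; more carefully, I would show the threshold satisfies $s=(-t-x_1\cos\theta)/\sin\theta \ge -c_0$ for an absolute constant $c_0$, so that $\phi(s)\ge \phi(c_0)=\tfrac{1}{\sqrt{2\pi}}e^{-c_0^2/2}$. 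Pulling this through the integral and using $\pr[\evt(\w,\tht)]=\int_{-\tht}^{-\tht+\gammah}\phi(x_1)\mathrm{d}x_1$, we get
\[
\E[\x_2\1\{\cdots\}] \le -\phi(c_0)\,\pr[\evt(\w,\tht)],
\]
and it remains to check that the constant $\phi(c_0)$ one obtains from the explicit parameter choices is at least $\tfrac{2}{3\sqrt{2\pi}}$, i.e. $e^{-c_0^2/2}\ge 2/3$, which holds as long as $c_0\le\sqrt{2\log(3/2)}\approx 0.9$.

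The main obstacle I anticipate is the bookkeeping needed to verify that the threshold $s$ stays in $[-c_0,0]$ with $c_0$ small enough to give the constant $2/3$: this requires carefully combining the band width $\gammah$, the constraint $\theta\in[\beps\exp(t^2/2),1/(5t)]$, the relation $\sin\theta\geq \theta\cdot(2/\pi)$ on $[0,\pi/2]$, and $|t-\tht|\le\beps^2/8$, and checking the worst case over both $x_1$ in the band and $\theta$ in its allowed range. In particular one must rule out the regime where $\sin\theta$ is as small as $\beps\exp(t^2/2)$ making $s$ blow up — but there $\gammah$ and $|{-t-x_1\cos\theta}|$ are correspondingly tiny, so the ratio stays controlled; making this trade‑off quantitatively tight enough to land below the $0.9$ threshold is the delicate computational step. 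Everything else is elementary one‑dimensional Gaussian estimation.
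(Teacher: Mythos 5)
Your overall route is the same as the paper's: reduce to the one-dimensional partial Gaussian expectation $\E[\x_2\1\{\x_2\le s\}]=-\tfrac{1}{\sqrt{2\pi}}e^{-s^2/2}$ at the worst-case point of the band $x_1=-\tht+\gammah$, then show the threshold $s$ lies in $[-c_0,0]$ with $e^{-c_0^2/2}\ge 2/3$. However, two steps as you state them do not go through. First, your justification that $s\le 0$ is backwards: from $-t-x_1\cos\theta\le -t(1-\cos\theta)+\beps^2/8$ you need a \emph{lower} bound on $t(1-\cos\theta)$ to beat the $\beps^2/8$ slack coming from $|t-\tht|$, and the facts you cite ($1-\cos\theta\le\theta^2/2\le 1/(50t^2)$ and $t\le\sqrt{2\log(1/\beps)}$) only bound it from above. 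The correct ingredient is the hypothesis $\theta\ge\beps\exp(t^2/2)$: then $t(1-\cos\theta)\ge 2t\sin^2(\theta/2)\ge t\theta^2/8\ge \tfrac{\beps^2}{8}e^{t^2}\ge\tfrac{\beps^2}{8}$ for $t\ge 1$, which is exactly how the paper argues nonpositivity.

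Second, the decisive quantitative step is only promised, and the one tool you name, $\sin\theta\ge(2/\pi)\theta$, is too lossy to reach the constant $2/3$. The worst-case threshold satisfies $|s|\le\big(t(1-\cos\theta)+\beps^2/8+\gammah\big)/\sin\theta$, whose dominant term is $\gammah/\sin\theta$; with the $2/\pi$ bound this term alone is about $\tfrac{\pi}{4}e^{(\tht^2-t^2)/2}\approx 0.79$, and the total comes out around $1.05$--$1.15$, giving $e^{-s^2/2}\approx 0.55<2/3$, so the claimed constant is not obtained. The paper closes the gap with the sharper elementary bounds $\tan(\theta/2)\le\theta/2$ and $\cot\theta\le 1/\theta$ (equivalently $\sin\theta\ge\theta-\theta^3/6$, which is essentially $\theta$ since $\theta\le 1/5$), arriving at $|s|\le \tfrac{t\theta}{2}+\tfrac{\beps^2}{8\theta}+\tfrac{\gammah}{\theta}\le\tfrac{1}{10}+\tfrac18+\tfrac12 e^{(\tht^2-t^2)/2}\le\tfrac78$ and hence $e^{-(7/8)^2/2}\ge 2/3$. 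So the plan is salvageable and structurally identical to the paper's proof, but as written the nonpositivity argument would fail and the constant-chasing, which is the actual content of the claim, is both unexecuted and set up with an inequality that cannot deliver the stated bound.
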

    \begin{proof}
    When $t\geq 1$, under event $\evt(\w,\tht)$, we have 
    \begin{align*}
    \x_2\sin\theta &\leq -t-\x_1\cos\theta\leq -t + \tht\cos\theta\\
    &\leq -t + t\cos(\theta) + \beps^2\cos(\theta)/8 \\
    &\leq -2\sin^2(\theta/2)t + \beps^2/8.
    \end{align*}
    Since we have assumed $t\geq 1$ and $\theta\geq \beps\exp(t^2/2)$, we further have 
    \[
    \x_2\sin\theta \leq -2\sin^2(\theta/2)t + \beps^2/8 \leq -\frac{\beps^2}{8}(\exp(t^2) - 1)\leq 0,
    \] 
where we used that $\sin(\theta/2)\geq \theta/4$, which holds for any $\theta \leq \pi$. Therefore, $\x_2\leq 0$ when $\1\{\evt(\w, \tht), \x_2\sin\theta\leq -t-\x_1\cos\theta\} = 1$.

    Note that conditioning on $\evt(\w, \tht)$ we have $\x_1\leq -\tht + \gammah$, hence it holds $\1\{\evt(\w,\tht), \x_2\sin\theta\leq -t-\x_1\cos\theta\}\geq \1\{\evt(\w,\tht),\x_2\sin\theta \leq -t - (-\tht + \gammah)\cos\theta\}$. 
    In addition, since $\tht\geq t - \beps^2/8$, we have $\1\{\evt(\w,\tht), \x_2\sin\theta\leq -t-\x_1\cos\theta\} \geq \1\{\evt(\w,\tht), \x_2\sin\theta \leq -(1 - \cos\theta)t - (\beps^2/8 + \gammah)\cos\theta\}$. 
    Therefore, we have the following upper bound:
    \begin{equation}\label{app:eq:upperbound-g(x)w*}
    \begin{aligned}
       &\; \E_{\x\sim\D_\x}[\x_2\1\{\evt(\w, \tht), \x_2\sin\theta\leq -t-\x_1\cos\theta\}]\\
       \leq\; & \E_{\x\sim\D_\x}[\x_2\,\1\{\evt(\w, \tht), \x_2\leq -t\tan(\theta/2) - (\beps^2/8 + \gammah)\cot\theta\}],
    \end{aligned}
    \end{equation}
    where we used the trigonometric identity $(1 - \cos\theta)/\sin\theta = \tan(\theta/2)$. Since $\x_1$ and $\x_2$ are independent standard normal random variables, the expectation on the right-hand side of \Cref{app:eq:upperbound-g(x)w*} has the following closed form expression:
    \begin{align}\label{app:eq:expectation-x_2*1{evt, w*x+t<=0}}
        &\quad\E_{\x\sim\D_\x}[\x_2\,\1\{\evt(\w, \tht), \x_2\leq -t\tan(\theta/2) - (\beps^2/8 + \gammah)\cot\theta\}]\nonumber\\
        & = \pr[\evt(\w, \tht)]\int_{-\infty}^{-t\tan(\theta/2) - (\beps^2/8 + \gammah)\cot\theta} \frac{x}{\sqrt{2\pi}}\exp(-x^2/2) \diff{x}\nonumber\\
        & = -\frac{\pr[\evt(\w, \tht)]}{\sqrt{2\pi}}\exp\bigg(-\frac{1}{2}\bigg(t\tan\bigg(\frac{\theta}{2}\bigg)  + \frac{\beps^2}{8}\cot\theta+ \gammah\cot\theta\bigg)^2\bigg).
    \end{align}

    Let us now bound $t\tan\big(\frac{\theta}{2}\big)  + \frac{\beps^2}{8}\cot\theta+ \gammah\cot\theta.$ Using the trigonometric inequalities  $\tan(\theta/2)\leq \theta/2$ and $\cot\theta\leq 1/\theta$, which hold for $\theta\in[0,\pi/2]$, and recalling that $\beps\exp(t^2/2)\leq \theta \leq 1/(5t)$, we have
\begin{align}
        t \tan\Big(\frac{\theta}{2}\Big)  + \frac{\beps^2}{8}\cot\theta+ \gammah\cot\theta & \leq \frac{t\theta}{2} + \frac{\beps^2}{8 \theta} + \frac{\gammah}{\theta}\notag\\
        &\leq \frac{1}{10} + \frac{1}{8} + \frac{1}{2}e^{\frac{\tht^2 - t^2}{2}}\notag\\
        &\leq \frac{9}{40} +\frac{1}{2} e^{\frac{\beps^2}{8}\sqrt{2\log(1/\beps)}}\leq \frac{7}{8}, \label{app:eq:exp-arg-bnd}
    \end{align}
    where the second inequality is by the definition of $\gammah$ and the bounds on $\theta$ and the third inequality is by $\tht - t \leq \frac{\beps^2}{8}$ and $\tht, t \leq \sqrt{2 \log(1/\beps)}.$ Hence, combining \Cref{app:eq:upperbound-g(x)w*}--\Cref{app:eq:exp-arg-bnd}, we get
\begin{align}
       \E_{\x\sim\D_\x}[\x_2\1\{\evt(\w, \tht), \x_2\sin\theta\leq -t-\x_1\cos\theta\}]
&\leq -\frac{\pr[\evt(\w, \tht)]}{\sqrt{2\pi}}\exp(-(7/8)^2/2)\nonumber\\
        &\leq -\frac{2\pr[\evt(\w, \tht)]}{3\sqrt{2\pi}}, \label{app:eq:final-bound-Ex2-t>1}
    \end{align}
    as claimed. \end{proof}

We now proceed to the case where $t < 1$.

\begin{claim}\label{app:claim:t<c}
        If $t< 1,$ then \begin{equation*}
            \E_{\x\sim\D_\x}[\x_2\1\{\evt(\w, \tht), \x_2\sin\theta\leq -t-\x_1\cos\theta\}]\leq -\frac{\pr[\evt(\w, \tht)]}{2\sqrt{2\pi}}.
        \end{equation*}
    \end{claim}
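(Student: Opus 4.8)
The plan is to condition on $\x_1$, evaluate the $\x_2$-integral in closed form, and thereby reduce the claim to a pointwise lower bound on a Gaussian density inside the band. As in the proof of the lemma, we pass to the rotated frame $\w=\e_1$, $\wstar=\cos\theta\,\e_1+\sin\theta\,\e_2$, so that $\evt(\w,\tht)=\{-\tht\le\x_1\le-\tht+\gammah\}$ and $\x_1,\x_2$ are independent standard Gaussians. Then
\[
\E_{\x\sim\D_\x}\big[\x_2\1\{\evt(\w,\tht),\ \x_2\sin\theta\le-t-\x_1\cos\theta\}\big]
=\E_{\x_1}\Big[\1\{\evt(\w,\tht)\}\cdot\E_{\x_2}\big[\x_2\1\{\x_2\le a(\x_1)\}\big]\Big],\qquad a(\x_1):=\tfrac{-t-\x_1\cos\theta}{\sin\theta}.
\]
Since $\int_{-\infty}^{a}x\,e^{-x^2/2}\,\diff{x}=-e^{-a^2/2}$ for \emph{every} real $a$, the inner expectation equals $-\tfrac{1}{\sqrt{2\pi}}e^{-a(\x_1)^2/2}$, so the quantity in the claim equals $-\tfrac{1}{\sqrt{2\pi}}\,\E_{\x_1}\big[\1\{\evt(\w,\tht)\}\,e^{-a(\x_1)^2/2}\big]$. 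It therefore suffices to prove the pointwise estimate $e^{-a(\x_1)^2/2}\ge\tfrac12$, i.e.\ $a(\x_1)^2\le 2\ln 2$, for all $\x_1$ in the band; integrating it against $\1\{\evt(\w,\tht)\}$ gives the claimed bound $\le-\pr[\evt(\w,\tht)]/(2\sqrt{2\pi})$. This is where the hypothesis $t<1$ comes in: unlike the case $t\ge1$ of \Cref{app:claim:t>c}, the threshold $a(\x_1)$ need not be negative on the band (equivalently, $\x_2$ may be positive on part of the relevant region), but the identity $\E_{\x_2}[\x_2\1\{\x_2\le a\}]=-\tfrac{1}{\sqrt{2\pi}}e^{-a^2/2}$ holds regardless of the sign of $a$, so no separate treatment of $\x_2\ge0$ versus $\x_2<0$ is required.

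It remains to bound $|a(\x_1)|$ for $\x_1\in[-\tht,-\tht+\gammah]$. Writing $\x_1=-\tht+s$ with $s\in[0,\gammah]$ and using $\cos\theta\ge0$, $1-\cos\theta=2\sin^2(\theta/2)$ and $|\tht-t|\le\beps^2/8$, one gets $-t-\x_1\cos\theta=(\tht-t)-2\tht\sin^2(\theta/2)-s\cos\theta\in[-\tfrac{\beps^2}{8}-2\tht\sin^2(\tfrac\theta2)-\gammah\cos\theta,\ \tfrac{\beps^2}{8}]$; dividing the absolute value by $\sin\theta$ and using $2\sin^2(\theta/2)/\sin\theta=\tan(\theta/2)$ and $\cos\theta/\sin\theta=\cot\theta$ gives
\[
|a(\x_1)|\ \le\ \frac{\beps^2/8}{\sin\theta}\ +\ \tht\tan(\theta/2)\ +\ \gammah\cot\theta .
\]
I would then bound the three terms using $\beps e^{t^2/2}\le\theta\le\min\{1/(5t),\pi/2\}$, $\gammah=\tfrac12\beps e^{\tht^2/2}$, $\beps<1$ and $t<1$: by $\sin\theta\ge\tfrac2\pi\theta\ge\tfrac2\pi\beps$ the first term is $\le\tfrac{\pi\beps}{16}<\tfrac\pi{16}$; by $\tan(\theta/2)\le\tfrac{2\theta}{\pi}$ on $[0,\tfrac\pi2]$ together with $t\theta\le\tfrac15$, $\theta\le\tfrac\pi2$ and $\tht\le t+\beps^2/8$ the second term is $\le\tfrac2\pi\tht\theta\le\tfrac2\pi\big(\tfrac15+\tfrac{\pi\beps^2}{16}\big)$; and by $\cot\theta\le\tfrac1\theta$ and $\theta\ge\beps e^{t^2/2}$ the third term is $\le\tfrac{\gammah}{\theta}\le\tfrac12 e^{(\tht^2-t^2)/2}$, where $\tht^2-t^2=(\tht-t)(\tht+t)\le\tfrac{\beps^2}{8}\big(2t+\tfrac{\beps^2}{8}\big)<\tfrac13$ (here $t<1$ is used), so this term is $<\tfrac12 e^{1/6}$. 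Summing gives $|a(\x_1)|<\tfrac\pi{16}+\tfrac2\pi\big(\tfrac15+\tfrac{\pi}{16}\big)+\tfrac12 e^{1/6}<1.05<\sqrt{2\ln2}$, hence $a(\x_1)^2<2\ln2$, completing the argument.

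The bulk of the work — and essentially the only obstacle — is the constant-chasing in the last step: each of the three terms is close to its extreme value in some corner of the admissible parameter range, so the constants have to be tracked carefully to keep the total safely below $\sqrt{2\ln2}$ (the target constant $\tfrac12$ in the claim leaves only a modest margin, just as the constant $\tfrac78$ did in \Cref{app:claim:t>c}). Structurally the computation parallels that of \Cref{app:claim:t>c}; the one difference is that here $\sin\theta$ can be of order $1$ while the numerator $-t-\x_1\cos\theta$ is of order $t<1$ plus lower-order band corrections, so the dominant contribution to $|a(\x_1)|$ is the $\gammah\cot\theta$ term rather than the $t\tan(\theta/2)$ term.
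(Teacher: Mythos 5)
Your proof is correct, and it takes a genuinely different route from the paper's. The paper handles the case $t<1$ by splitting the event $\{\evt(\w,\tht),\ \x_2\sin\theta\le -t-\x_1\cos\theta\}$ into three sub-events $\evt_1,\evt_2,\evt_3$: it bounds the negative contribution of $\evt_1$ by $-\tfrac{2}{3\sqrt{2\pi}}\pr[\evt(\w,\tht)]$ via the same closed-form Gaussian integral as in \Cref{app:claim:t>c}, discards $\evt_3$ (where $\x_2\le 0$), and shows the potentially positive contribution of $\evt_2$ (the portion of the band where $\x_2\ge 0$) is at most $\tfrac{1}{12\sqrt{2\pi}}\pr[\evt(\w,\tht)]$, summing to $-\tfrac{7}{12\sqrt{2\pi}}\pr[\evt(\w,\tht)]$. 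You instead condition on $\x_1$ and use the exact identity $\E_{\x_2}[\x_2\1\{\x_2\le a\}]=-e^{-a^2/2}/\sqrt{2\pi}$, valid for every real $a$, which collapses the whole claim to the single pointwise estimate $a(\x_1)^2\le 2\ln 2$ on the band; the sign ambiguity of $\x_2$ that forces the paper's three-way case analysis disappears automatically. Your constant-chasing checks out: using $\theta\le\pi/2$ (an assumption you state explicitly; the paper relies on it too, since its trigonometric inequalities are invoked for $\theta\in[0,\pi/2]$ and the initialization guarantees it), the three terms are bounded by roughly $0.20$, $0.26$, and $0.59$, summing to about $1.04<\sqrt{2\ln 2}\approx 1.18$, and the remaining steps (independence of $\x_1,\x_2$ in the rotated frame, $\sin\theta>0$, $\tht\ge 0$, $t\theta\le 1/5$, $\tht^2-t^2\le\tfrac{\beps^2}{8}(2t+\tfrac{\beps^2}{8})<\tfrac13$ using $t<1$) are all justified. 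What each approach buys: yours is shorter and more unified, treating both signs of the threshold at once, at the price of a tighter numerical margin in the final sum; the paper's decomposition is looser term by term and mirrors the structure of the $t\ge 1$ case, so the two claims read uniformly.
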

\begin{proof}
    In this case, it is possible that $\x_2\geq 0$ when $\1\{\evt(\w), \x_2\sin\theta\leq -t-\x_1\cos\theta\} = 1$. However, since $\gammah = \beps\exp(\tht^2/2)/2\geq \beps^2/8$, it must be $-\tht+\gammah\geq -t-\beps^2/8+\gammah\geq -t$, indicating that $-t - (-\tht + \gammah)\cos\theta\leq -t + t\cos\theta\leq 0$, hence $\x_2\leq 0$ when $\1\{\evt(\w,\tht), \x_2\leq -t/\sin\theta - (-\tht + \gammah)\cot\theta\} = 1$. Therefore, we split the indicator $\1\{\evt(\w,\tht), \x_2\sin\theta\leq -t-\x_1\cos\theta\}$ into the indicators of three sub-events:
    \begin{align*}
         \1\{\evt(\w,\tht), \x_2\leq -t/\sin\theta - \x_1\cot\theta\}
&=\1\{\evt_1\} + \1\{\evt_2\} + \1\{\evt_3\},
    \end{align*}
    where
    \begin{equation}\notag
        \begin{aligned}
            \evt_1 &:= \evt(\w,\tht) \cap \{\x_2\leq -t/\sin\theta - (-\tht + \gammah)\cot\theta\}\\
            \evt_2 &:= \evt(\w,\tht)\cap \{0\leq \x_2\leq -t/\sin\theta - \x_1\cot\theta\}\\
            \evt_3 &:= \evt(\w,\tht)\cap \{-t/\sin\theta - (-\tht + \gammah)\cot\theta\leq \x_2\leq \min\{-t/\sin\theta - \x_1\cot\theta, \, 0\}\}.
        \end{aligned}
    \end{equation}
For $\E_{\x\sim\D_\x}[\x_2\1\{\evt_1\}]$, observe first that 
\begin{align*}
        \1\{\evt_1\} &= \1\{\evt(\w,\tht), \x_2\leq -t/\sin\theta + \tht\cot\theta - \gammah\cot\theta\}\\
        &\geq \1\{\evt(\w,\tht), \x_2\leq -t/\sin\theta + (t - \beps^2/8)\cot\theta - \gammah\cot\theta\}
    \end{align*}
    Since $\x_2\leq 0$ under $\evt_1$, it then holds
    \begin{align*}
        \E_{\x\sim\D_\x}[\x_2\1\{\evt_1\}] &\leq  \E_{\x\sim\D_\x}[\x_2\1\{\evt(\w, \tht), \x_2\leq -t/\sin\theta + (t - \beps^2/8)\cot\theta - \gammah\cot\theta\}]\\
        &=  \E_{\x\sim\D_\x}[\x_2\1\{\evt(\w, \tht), \x_2\leq -t \tan(\theta/2)  - \beps^2/8\cot\theta - \gammah\cot\theta\}]\\
        & = -\frac{\pr[\evt(\w,\tht)]}{\sqrt{2\pi}}\exp\bigg(-\frac{1}{2}\bigg(t\tan\bigg(\frac{\theta}{2}\bigg)  + \frac{\beps^2}{8}\cot\theta+ \gammah\cot\theta\bigg)^2\bigg),
    \end{align*}
    following similar steps as in \Cref{app:claim:t>c}, \Cref{app:eq:upperbound-g(x)w*}--\Cref{app:eq:expectation-x_2*1{evt, w*x+t<=0}}. Again, note that we have assumed $\beps\exp(t^2/2)\leq \theta \leq 1/(5t)$ and have chosen $\gammah = \beps\exp(t^2/2)/2$, thus, using the fact that $\tan(\theta/2)\leq \theta/2$, $\cot\theta\leq 1/\theta$, we further get:
    \[\E_{\x\sim\D_\x}[\x_2\1\{\evt_1\}]\leq -\frac{\pr[\evt(\w,\tht)]}{\sqrt{2\pi}}\exp\bigg(-\frac{1}{2}\bigg(\frac{t\theta}{2} + \frac{\beps^2}{8 \theta} + \frac{\gammah}{\theta}\bigg)^2\bigg) \leq -\frac{2\pr[\evt(\w,\tht)]}{3\sqrt{2\pi}},\]
    using the same arguments as in \Cref{app:eq:exp-arg-bnd}--\Cref{app:eq:final-bound-Ex2-t>1}.
    For $\E_{\x\sim\D_\x}[\x_2\1\{\evt_3\}]$, note that $\x_2\leq 0$ under $\evt_3$, hence $\E_{\x\sim\D_\x}[\x_2\1\{\evt_3\}]\leq 0$.
    
    We now study $\E_{\x\sim\D_\x}[\x_2\1\{\evt_2\}]$. Observe that 
    \begin{align*}
        \1\{\evt_2\}& = \1\{\evt(\w,\tht),   0\leq \x_2\leq -t/\sin\theta - \x_1\cot\theta\}\\
        &\leq \1\{\evt(\w,\tht), 0\leq \x_2\leq -t/\sin\theta + \tht\cot\theta\}\\
        &\leq \1\{\evt(\w,\tht), 0\leq \x_2\leq |-t(1 - \cos\theta)/\sin\theta + (\beps^2/8)\cot\theta|\},
    \end{align*}
    where the first inequality results from the condition that $\x_1 \geq -\tht$. 
    Hence, the expectation $\E_{\x\sim\D_\x}[\x_2\1\{\evt_2\}]$ can be upper-bounded by
    \begin{align*}
        \E_{\x\sim\D_\x}[\x_2\1\{\evt_2\}]&\leq \E_{\x\sim\D_\x}[\x_2\1\{\evt(\w,\tht), 0\leq \x_2\leq |-t\tan(\theta/2) + (\beps^2/8)\cot\theta|\}]\\
        & = \pr[\evt(\w,\tht)]\int_0^{|-t\tan(\theta/2) + (\beps^2/8)\cot\theta|}\frac{x}{\sqrt{2\pi}}\exp(-x^2/2)\diff{x}\\
        &\leq \pr[\evt(\w,\tht)]\int_0^{|-t\tan(\theta/2) + (\beps^2/8)\cot\theta|}\frac{x}{\sqrt{2\pi}}\diff{x}\\
        &\leq \frac{\pr[\evt(\w,\tht)]}{2\sqrt{2\pi}}(t\tan(\theta/2) - (\beps^2/8)\cot\theta)^2.
    \end{align*}
    We again use the fact that $\tan(\theta/2)\leq \theta$ and $\cot\theta\leq 1/\theta$, then recall that $\beps\exp(t^2/2)\leq \theta\leq 1/(5t)$, $\beps/\theta\leq 1$, thus, we get
    \begin{equation*}
        \E_{\x\sim\D_\x}[\x_2\1\{\evt_2\}]\leq \frac{\pr[\evt(\w,\tht)]}{\sqrt{2\pi}}((t\theta)^2 + (\beps^2/(8\theta))^2)\leq \frac{\pr[\evt(\w,\tht)]}{\sqrt{2\pi}}(1/25 + \beps^2/64)\leq \frac{\pr[\evt(\w,t')]}{12\sqrt{2\pi}}.
    \end{equation*}
    Combining with the derived upper bounds on $\E_{\x\sim\D_\x}[\x_2\1\{\evt_1\}]$ and $\E_{\x\sim\D_\x}[\x_2\1\{\evt_3\}]$ completes the proof.
    \end{proof}

Combining \Cref{app:claim:t<c} and \Cref{app:claim:t>c}, we have $\E_{\x\sim\D_\x}[\x_2\1\{\evt(\w, \tht), \x_2\sin\theta\leq -t-\x_1\cos\theta\}]\leq -\frac{\pr[\evt(\w,\tht)]}{2\sqrt{2\pi}}$. Thus, plugging this result back to \Cref{app:eq:correlation-band-bound} and then combining with \Cref{app:eq:condition-expectation-g*w^*}, we complete the proof of the lemma.
\end{proof}

Since, by construction, $\g(\w)$ is orthogonal to $\w$ (see \Cref{app:line:gw} in \Cref{app:alg:optimization}), we can bound the norm of the expected gradient vector by bounding $\g(\w)\cdot\bu$ for some unit vectors $\bu$ that are orthogonal to $\w$ using similar techniques as in \Cref{lem:-g(w)*w-lowerbound}. 
To be specific, we have the following lemma.
\GradientNormUpperBound*
\begin{proof}
First, we show that for any vector that is orthogonal to both $\w$ and $\wstar$, the expected gradient of $\g(\vec w) = \g(\w; \x, y)$ is zero. As a consequence, $\E[\g(\w)]$ must lie in the 2-dimensional space spanned by $\w$ and $\wstar.$ To see that, observe first that
    \begin{align*}
        \E_{(\x,y)\sim\D(\w,\tht)}[\g(\w)\cdot\bv] &= \frac{1 - 2\eta}{\pr[\evt(\w,\tht)]}\E_{\x\sim\D_\x}[\1\{\evt(\w,\tht), \wstar\cdot\x + t\leq 0\}\bv\cdot\pwperp(\x)]\\
        &= \frac{1 - 2\eta}{\pr[\evt(\w,\tht)]}\E_{\x\sim\D_\x}[\1\{\evt(\w,\tht), \wstar\cdot\x + t\leq 0\}]\E_{\x\sim\D_\x}[\bv\cdot\x] = 0,
    \end{align*}
    where we used $\pwperp(\x) = \x - (\x \cdot \w)\w$ and $\v\cdot \w = 0,$ the fact that $\bv\cdot\x$ is independent of $\w\cdot\x$ and $\wstar\cdot\x$, and that $\E_{\x\sim\D_\x}[\bv\cdot\x] = 0$. Furthermore, by construction, $\g(\vec w)$ is orthogonal to $\vec w$. This indicates that $\Ey[\g(\w)]$ is parallel to $\wstar$, as both $\w$ and $\wstar$ are unit vectors.
    Since $\x\sim\D_\x$ is rotation invariant, we can assume $\w = \e_1$ and $\wstar = \cos\theta\e_1 + \sin\theta\e_2$.
    We thus only need to bound $|\E_{(\x,y)\sim\D(\w,\tht)}[\g(\w)]\cdot\e_2|$. 
    Recall that $\D(\w,\tht)$ is the distribution conditioned on the band $\evt(\w,\tht) = \{\x:\, -\tht\leq \w\cdot\x\leq -\tht + \gammah\}$; hence, by the definition of $\g(\w)$, we have
    \begin{align}\label{app:eq:||E[g]||-upperbound-intermediate}
        |\E_{(\x,y)\sim\D(\w,\tht)}[\g(\w)\cdot\e_2]| &= \frac{|\Ey[\g(\w)\cdot\e_2\1\{\evt(\w,\tht)\}]|}{\pr[\evt(\w,\tht)]}\nonumber\\
        &= \frac{(1-2\eta)}{\pr[\evt(\w,\tht)]}|\E_{\x\sim\D_\x}[\x_2\1\{\evt(\w,\tht),\,\sin\theta\x_2\leq -t - \cos\theta\x_1\}]|.
    \end{align}

    To proceed, we discuss the cases where $t\leq 1$ and $t\geq 1$, following similar steps as in \Cref{lem:-g(w)*w-lowerbound}.

\begin{claim}
    Under the assumptions of \Cref{lem:-g(w)*w-lowerbound}, if $t\geq 1$, then
    \begin{equation*}
        \big\|\E_{(\x,y)\sim\D(\w,\tht)}[\g(\w; \x, y)]\big\|_2 \leq \frac{1-2\eta}{\sqrt{2\pi}}.
    \end{equation*}
\end{claim}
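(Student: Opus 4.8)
The plan is to continue directly from the identity \Cref{app:eq:||E[g]||-upperbound-intermediate}, where it has already been established that $\E_{(\x,y)\sim\D(\w,\tht)}[\g(\w)]$ is a scalar multiple of $\e_2$, so that $\|\E_{(\x,y)\sim\D(\w,\tht)}[\g(\w)]\|_2=|\E_{(\x,y)\sim\D(\w,\tht)}[\g(\w)\cdot\e_2]|$. By \Cref{app:eq:||E[g]||-upperbound-intermediate} it therefore suffices to show
$$\big|\E_{\x\sim\D_\x}\big[\x_2\,\1\{\evt(\w,\tht),\ \sin\theta\,\x_2\le -t-\cos\theta\,\x_1\}\big]\big|\ \le\ \frac{\pr[\evt(\w,\tht)]}{\sqrt{2\pi}}.$$

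First I would record, exactly as in the opening steps of the proof of \Cref{app:claim:t>c}, that when $t\ge 1$ the hypotheses $|\tht-t|\le\beps^2/8$ and $\beps e^{t^2/2}\le\theta\le 1/(5t)$ force $\cos(\theta)\,\tht\le t$; consequently, writing $c(\x_1):=(-t-\cos\theta\,\x_1)/\sin\theta$, the band constraint $\x_1\ge -\tht$ together with the monotonicity of $c$ gives $c(\x_1)\le (-t+\cos\theta\,\tht)/\sin\theta\le 0$ for every $\x_1$ in the support of $\evt(\w,\tht)$. In particular, on the event $\{\sin\theta\,\x_2\le -t-\cos\theta\,\x_1\}$ we have $\x_2\le c(\x_1)\le 0$, so the quantity above equals $\E_{\x\sim\D_\x}\big[|\x_2|\,\1\{\evt(\w,\tht),\ \x_2\le c(\x_1)\}\big]$.

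Next I would condition on $\x_1$, using that $\x_1$ and $\x_2$ are independent standard Gaussians. For each fixed $\x_1$ with $c(\x_1)\le 0$, the inner expectation has the closed form $\E_{\x_2}[|\x_2|\,\1\{\x_2\le c(\x_1)\}]=\int_{-\infty}^{c(\x_1)}(-x)\tfrac{1}{\sqrt{2\pi}}e^{-x^2/2}\,\mathrm{d}x=\tfrac{1}{\sqrt{2\pi}}e^{-c(\x_1)^2/2}\le\tfrac{1}{\sqrt{2\pi}}$. Integrating this constant bound over $\x_1\in[-\tht,-\tht+\gammah]$ yields $\E_{\x\sim\D_\x}\big[|\x_2|\,\1\{\evt(\w,\tht),\ \x_2\le c(\x_1)\}\big]\le \tfrac{1}{\sqrt{2\pi}}\pr[\evt(\w,\tht)]$, which is precisely the required inequality; substituting back into \Cref{app:eq:||E[g]||-upperbound-intermediate} gives $\|\E_{(\x,y)\sim\D(\w,\tht)}[\g(\w)]\|_2\le (1-2\eta)/\sqrt{2\pi}$.

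The computation is essentially bookkeeping, and the only delicate point — and the reason the regime $t\ge1$ is isolated here — is maintaining the sign condition $c(\x_1)\le0$ uniformly over the band. Without it the exact truncated first absolute moment of $\x_2$ would read $\tfrac{1}{\sqrt{2\pi}}(2-e^{-c^2/2})$ instead of $\tfrac{1}{\sqrt{2\pi}}e^{-c^2/2}$, and the estimate would degrade by a factor of roughly $2$, matching the naive bound $|\E[\g(\w)\cdot\e_2]|\le(1-2\eta)\,\E_{\x_2}[|\x_2|]=2(1-2\eta)/\sqrt{2\pi}$; the exact Gaussian tail identity for $c\le0$ is exactly what recovers the stated constant. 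The complementary regime $t<1$, where $\x_2$ can be positive on the event, is handled by a separate case analysis analogous to \Cref{app:claim:t<c}.
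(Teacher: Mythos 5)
Your proof is correct and follows essentially the same route as the paper: both rest on the sign observation from \Cref{app:claim:t>c} (that $\x_2\le 0$ on the event when $t\ge 1$) and then reduce to a Gaussian half-moment bound of $1/\sqrt{2\pi}$. The only cosmetic difference is that the paper relaxes the constraint to $\1\{\evt(\w,\tht),\x_2\le 0\}$ and uses independence of $\x_1$ and $\x_2$ directly, whereas you keep the constraint, compute the conditional expectation $\tfrac{1}{\sqrt{2\pi}}e^{-c(\x_1)^2/2}$ given $\x_1$, and bound it by $\tfrac{1}{\sqrt{2\pi}}$ --- equivalent bookkeeping.
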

\begin{proof}
    As shown in the proof of \Cref{app:claim:t>c}, when $t\geq 1$ the condition $\x_2\leq -t/\sin\theta - \x_1\cot\theta$, $-\tht\leq \x_1\leq -\tht + \gammah$ implies that $\x_2\leq 0$. Hence, in this case we have 
\[\1\{\evt(\w,\tht), \x_2\leq -t/\sin\theta - \x_1\cot\theta\}\leq \1\{\evt(\w,\tht), \x_2\leq 0\},\] 
and we can further conclude that 
\begin{align*}
        \big|\E_{\x\sim\D_\x}[\x_2\1\{\evt(\w,\tht),\,\x_2\sin\theta\leq -t - \x_1\cos\theta\}]\big| &= \E_{\x\sim\D_\x}[-\x_2\1\{\evt(\w,\tht), \x_2\sin\theta\leq -t - \x_1\cos\theta\}]\\
        &\leq \E_{\x\sim\D_\x}[-\x_2\1\{\evt(\w,\tht), \x_2\leq 0\}]\\
        &= \frac{\pr[\evt(\w,\tht)]}{\sqrt{2\pi}}.
    \end{align*}
Plugging this back into \Cref{app:eq:||E[g]||-upperbound-intermediate} yields the claimed result.
    \end{proof}

When $t\leq 1$, we use a slightly different decomposition 
\[
\1\{\evt(\w,\tht),\, \x_2\leq -\tht/\sin\theta -\x_1\cot\theta\} = \1\{\evt_1'\} + \1\{\evt_2'\},
\]
where 
\begin{align*}
    \evt_1' &= \evt(\w,\tht) \cap \{\x_2\leq -t/\sin\theta - \x_1\cot\theta, \x_2\leq 0\},\\
    \evt_2' &= \evt(\w,\tht) \cap  \{ 0\leq \x_2\leq -t/\sin\theta - \x_1\cot\theta\}.
\end{align*}
By the definitions of these two events, we have 
 \[
 \E_{\D_\x}\big[\x_2\1\{\evt_1'\}\big]\leq 0, \;\;\; \E_{\D_\x}\big[\x_2\1\{\evt_2'\}\big]\geq 0.
 \]
Since in the proof of \Cref{lem:-g(w)*w-lowerbound} we have shown that 
 \[
 \E_{\D_\x}[\x_2\1\{\evt(\w,\tht),\, \x_2\leq -\tht/\sin\theta -\x_1\cot\theta\}]\leq 0,
 \]
 it must hold that
\begin{align*}
    \big|\E_{\x\sim\D_\x}[\x_2\1\{\evt(\w,\tht), \x_2\leq -\tht/\sin\theta -\x_1\cot\theta\}]\big| &= \big|\E_{\x\sim\D_\x}[\x_2\1\{\evt_1'\}] + \E_{\x\sim\D_\x}[\x_2\1\{\evt_2'\}]\big|\\
    &\leq -\E_{\x\sim\D_\x}[\x_2\1\{\evt_1'\}].
\end{align*}
Since $\1\{\evt_1'\}\leq \1\{\evt(\w,\tht),\x_2\leq 0\}$, we thus have
\begin{equation*}
    \big|\E_{\x\sim\D_\x}[\x_2\1\{\evt(\w,\tht), \x_2\leq -\tht/\sin\theta -\x_1\cot\theta\}]\big|\leq \frac{\pr[\evt(\w,\tht)]}{\sqrt{2\pi}}.
\end{equation*}
Plugging this back into \Cref{app:eq:||E[g]||-upperbound-intermediate} completes the proof.
\end{proof}

The following lemma establishes a uniform convergence result for the empirical subgradient. 
\begin{lemma}\label{app:lem:uniform-convergence-of-g}
    Consider the learning problem defined in \Cref{def:RCN-LTF}. Fix $\beps,\delta\in(0,1/2)$, and let $\alpha$ be any absolute constant in $(0,1)$. Consider the following class of functions for $(\x, y)\sim \D:$
\begin{align*}
    &\mathcal F = \big\{\g'(\vec w): \vec w\in \R^d, \|\w\|_2 = 1, \beps\exp\big({t^2}/{2}\big)\leq \theta(\w,\w^*)\leq {1}/{(5t)}\big\},\;\; \text{ where}\\
    &\g'(\vec w;\x, y) = \frac{1}{2}\big((1-2\eta)\sign(\vec w\cdot\x + \tht) - y\big)\pwperp(\x)\1\big\{\vec w\cdot \x \in [-\tht,-\tht + \gammah]\big\},
    \end{align*}
and where $\tht$ satisfies $|\tht - t|\leq\beps^2/8$ and $\gammah=({1}/{2})\beps\exp(\tht^2/2)$.
Then, using $N=\widetilde{O}(\frac{d\log(1/(\delta))}{(1-2\eta)^2\beps})$ samples from $\D$ with probability at least $1-\delta$, for any $\g'(\w, \tht)\in \mathcal F$ it holds
    \[    \Big\|\frac 1N\sum_{i=1}^N\g'(\w;\x^{(i)}, y^{(i)})-\E_{(\x,y)\sim\D}[\g'(\w;\x, y)]\Big\|_2\leq \alpha\|\E_{(\x,y)\sim\D}[\g'(\w;\x, y)]\|_2\;. 
    \]
\end{lemma}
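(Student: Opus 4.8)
The plan is to reduce this vector-valued concentration statement to a scalar empirical-process bound over a suitable function class, and then to exploit two structural features of that class --- that its members are supported on a \emph{band} of Gaussian probability $\widetilde{\Theta}(\beps)$ (hence have second moment $\widetilde{O}(\beps)$), and that it is a VC-subgraph class of dimension $O(d)$ --- so that a localized, variance-sensitive uniform-convergence bound delivers exactly $N=\widetilde{O}(d\log(1/\delta)/((1-2\eta)^2\beps))$ samples.

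First I would set $P:=\pr_{z\sim\normal}[z\in[-\tht,-\tht+\gammah]]$ (note this does not depend on $\w$) and check, using $\gammah=\tfrac12\beps\exp(\tht^2/2)$ and $\tht\le\sqrt{2\log(1/\beps)}$ (so that $\gammah e^{-\tht^2/2}=\beps/2$ and $\gammah\le\tfrac12$), that the Gaussian density on the band is $\Theta(e^{-\tht^2/2})$ up to lower-order factors, whence $P=\widetilde{\Theta}(\beps)$. Since $\g'(\w;\x,y)=\g(\w;\x,y)\,\1\{\x\in\evt(\w,\tht)\}$ gives $\E_{(\x,y)\sim\D}[\g'(\w;\x,y)]=P\,\E_{\D(\w,\tht)}[\g(\w)]$, \Cref{lem:-g(w)*w-lowerbound} and \Cref{lem:||Eg(w)||-upper-bound} imply $\|\E_{(\x,y)\sim\D}[\g'(\w;\x,y)]\|_2=\Theta((1-2\eta)P)=\widetilde{\Theta}((1-2\eta)\beps)$ uniformly over the admissible $\w$; for the lower bound one uses that $\E_{\D(\w,\tht)}[\g(\w)]\perp\w$, so $|\E_{\D(\w,\tht)}[\g(\w)]\cdot\wstar|=\sin\theta\,|\E_{\D(\w,\tht)}[\g(\w)]\cdot\e_2|$ with $\e_2$ the unit vector along $\proj_{\w^\perp}(\wstar)$, which removes the $\sin\theta$ factor present in \Cref{lem:-g(w)*w-lowerbound}. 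Hence it suffices to prove that, with probability at least $1-\delta$, $\sup_{\w}\big\|\tfrac1N\sum_{i=1}^N\g'(\w;\x^{(i)},y^{(i)})-\E_{(\x,y)\sim\D}[\g'(\w;\x,y)]\big\|_2\le\alpha'(1-2\eta)\beps$ for a constant $\alpha'=\Theta(\alpha)$; dualizing the $\ell_2$ norm, the left-hand side equals $\sup_{\w,\,\|\u\|_2=1}(\hatE-\E)[h_{\w,\u}]$ over the function class $\mathcal{H}=\{h_{\w,\u}:(\x,y)\mapsto\langle\g'(\w;\x,y),\u\rangle\}$.

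Next I would verify the two properties of $\mathcal{H}$. For the second moment: writing $\langle\proj_{\w^\perp}(\x),\u\rangle=\langle\x,\proj_{\w^\perp}(\u)\rangle$ with $\proj_{\w^\perp}(\u)\perp\w$ of squared norm at most $1$ (so this is a mean-zero Gaussian of variance $\le1$, independent of $\w\cdot\x$), and noting that the scalar coefficient $\tfrac12((1-2\eta)\sgn(\w\cdot\x+\tht)-y)$ has absolute value $\le1-\eta$ and that $h_{\w,\u}$ vanishes off $\evt(\w,\tht)$, we get $\E[h_{\w,\u}^2]\le(1-\eta)^2P=\widetilde{O}(\beps)$ for every $h_{\w,\u}\in\mathcal{H}$. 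For the complexity: each $h_{\w,\u}$ is a fixed, bounded arithmetic combination of the two halfspace indicators cutting out the band $\evt(\w,\tht)$, the sign $\sgn(\w\cdot\x+\tht)$, the label $y$, and the linear functional $\x\mapsto\langle\x,\proj_{\w^\perp}(\u)\rangle$; each of these indexed families is VC-subgraph of dimension $O(d)$, hence so is $\mathcal{H}$. Finally I would condition on the truncation event $\mathcal{T}:=\{\max_i\|\x^{(i)}\|_2\le R\}$ with $R=\Theta(\sqrt{d\log(Nd/\delta)})$, which fails with probability $\le\delta/2$ and affects expectations only negligibly; on $\mathcal{T}$, the envelope of $\mathcal{H}$ is at most $R$.

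With these in place I would apply Talagrand's concentration inequality for $\sup_{\mathcal{H}}(\hatE-\E)$ in its Bernstein form, together with a Dudley entropy bound for VC-subgraph classes: because the controlling quantity is the second moment $\widetilde{O}(\beps)$ rather than the envelope $R$, the (localized) Rademacher complexity is $\widetilde{O}(\sqrt{\beps d/N})$, and therefore, with probability at least $1-\delta/2$,
\[
\sup_{\w,\u}(\hatE-\E)[h_{\w,\u}]=\widetilde{O}\!\Big(\sqrt{\tfrac{\beps d}{N}}\Big)+\widetilde{O}\!\Big(\sqrt{\tfrac{\beps\log(1/\delta)}{N}}\Big)+\widetilde{O}\!\Big(\tfrac{R\log(1/\delta)}{N}\Big).
\]
Forcing the right-hand side below $\alpha'(1-2\eta)\beps$ --- the first term is the binding one, while the envelope term needs only $N=\widetilde{\Omega}(\sqrt{d}\log(1/\delta)/((1-2\eta)\beps))$, which it dominates --- gives $N=\widetilde{O}(d\log(1/\delta)/((1-2\eta)^2\beps))$, and combining with the reduction of the second paragraph finishes the proof. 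The step I expect to be the main obstacle is exactly this localization. A brute-force argument --- covering the admissible $\w$'s by a net, which must be at scale $\sim\beps/R$ for the band indicator to be perturbation-stable and hence has $e^{\Theta(d)}$ points, then union-bounding --- inflates $\log(1/\delta)$ by $\Theta(d)$ and yields only the weaker $N=\widetilde{O}(d^2/((1-2\eta)^2\beps))$; obtaining the claimed linear-in-$d$ dependence genuinely requires the variance-sensitive (chaining/Talagrand) argument, i.e., using that the band-restricted subgradients have second moment $\widetilde{O}(\beps)$. The remaining ingredients --- the VC-subgraph bound for the product class, the Gaussian truncation, and checking $P=\widetilde{\Theta}(\beps)$ in the borderline regime $\tht\approx\sqrt{2\log(1/\beps)}$ (where $\gammah\to\tfrac12$ and the density variation across the band must be shown to cost only a lower-order factor) --- are routine but should be done with care.
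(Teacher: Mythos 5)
Your reduction to a scalar supremum, the second-moment bound $\widetilde{O}(\beps)$ coming from the band, and the uniform lower bound $\|\E[\g'(\w)]\|_2=\Theta((1-2\eta)\sigma^2)$ (obtained by projecting onto the component of $\wstar$ orthogonal to $\w$, exactly as in the paper) are all fine. The genuine problem is the claim on which your whole strategy rests: that a net-plus-union-bound argument is doomed to $\widetilde{O}(d^2/((1-2\eta)^2\beps))$ and that a variance-localized Talagrand/chaining bound is ``genuinely required.'' This is backwards, and it is contradicted by the paper's own proof. After dualizing, each scalar variable $\g'(\w)\cdot\w'$ is (a bounded coefficient times) a band indicator times a one-dimensional Gaussian $\x\cdot\proj_{\w^\perp}(\w')$, hence satisfies the Bernstein moment condition $\E|\g'(\w)\cdot\w'|^k\leq \sigma^2 C^{k-2}k!$ with scale $L=O(1)$ (not the envelope $R\sim\sqrt{d}$) and variance proxy $\sigma^2=\pr[\text{band}]\geq\beps/2$. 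Bernstein (\Cref{app:fct:bernstein}) then gives, for each fixed pair $(\w,\w')$, failure probability $\exp(-\Omega(N\alpha^2(1-2\eta)^2\sigma^2))$ with no dimension dependence at all; a union bound over a $(1/\beps)^{O(d)}$ net of pairs therefore costs only an additive $O(d\log(1/\beps))$ in the exponent, and solving $N(1-2\eta)^2\beps\gtrsim d\log(1/\beps)+\log(1/\delta)$ already yields $N=\widetilde{O}\big((d+\log(1/\delta))/((1-2\eta)^2\beps)\big)$, i.e., the claimed linear-in-$d$ bound. Your $d^2$ prediction arises from implicitly using a Hoeffding-type per-point bound with range $R$ (which would indeed force $1/\beps^2$ and then pick up an extra $d$ from the net), but the variance-sensitivity you correctly identify as essential only needs to be used \emph{per function}, via Bernstein, not uniformly over the class via Talagrand.

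Moreover, your alternative route is not established as written. For a VC-type class of dimension $O(d)$ with (truncated) envelope $R=\widetilde{\Theta}(\sqrt{d})$, the standard Talagrand/Dudley (Gin\'e--Koltchinskii) bound has an envelope term of order $dR/N=\widetilde{O}(d^{3/2}/N)$, not merely $R\log(1/\delta)/N$; forcing it below $\alpha'(1-2\eta)\beps$ would demand $N=\widetilde{\Omega}(d^{3/2}/((1-2\eta)\beps))$, worse than the target for constant $\eta$. To remove it you would have to exploit the $O(1)$ sub-exponential scale of the scalar projections inside the chaining (Bernstein--Orlicz chaining, or Adamczak's inequality together with a sharper bound on the expected supremum), none of which is done in the proposal; and the assertion that the product class $\mathcal{H}$ (band indicators and signs in $\w$ times unbounded linear functionals in $(\w,\u)$) is VC-subgraph of dimension $O(d)$ also needs an argument rather than a citation. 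So the proposal, as it stands, both misdiagnoses the difficulty and leaves its replacement step unproved; the elementary route it dismisses is precisely the paper's proof and is the one that cleanly gives the stated sample complexity.
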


\begin{proof}
     For simplicity, we will use $\g'(\w)$to denote $\g'(\w;\x,y)$ and we further define
     \begin{equation*}
         \hatg'(\w):=\frac{1}{N}\sum_{i=1}^N \g'(\w;\x\ith,y\ith).
     \end{equation*}

     By definition, $\g'(\w)$ is orthogonal to $\w$, hence so is $\hatg'(\w)$. As already argued in the proof of \Cref{lem:||Eg(w)||-upper-bound}, $\E_{(\x,y)\sim\D}[\g'(\w)]$ is also orthogonal to $\w$. Thus, $\|\hatg'(\w) - \E_{(\x,y)\sim\D}[\g'(\w)]\|_2$ is determined by $(\hatg'(\w) - \E_{(\x,y)\sim\D}[\g'(\w)])\cdot\w'$, where $\w'$ is a unit vector that is orthogonal to $\w$.
    
    Fix unit vectors $\vec w,\vec w'\in\R^d$ with $\w\cdot\w'=0$.
    We are going to make use of the following variant of Bernstein's inequality (see, e.g.,~\cite{Godwin1955}).
    \begin{fact}\label{app:fct:bernstein}
        Let $X_1,\ldots,X_N$ be zero mean i.i.d.\ random variables. Assume that for some positive reals $L,\sigma>0$, it holds that $\E[|X_i|^k]\leq (1/2)\sigma^2 L^{k-2}k!$. Then, for any $x \in (0,\, \sqrt{N\sigma^2}/(2L))$, 
        \[
        \pr\bigg[\bigg|\sum_{i=1}^N X_i\bigg|\geq 2x\sqrt{N\sigma^2}\bigg]\leq \exp(-x^2)\;.
        \]
    \end{fact}
    We show that the random variable $\g'(\vec w)\cdot\vec w'$ satisfies the assumptions of \Cref{app:fct:bernstein}.
    For any $k\geq 2$, using the definition of $\g'$ which enforces $\g'(\w; \x, y) = 0$ whenever $\1\{\vec w\cdot \x \in [-\tht, -\tht + \gammah]\} = 0,$ we have that
    \begin{align*}
           \E_{(\x,y)\sim\D}[|\g'(\vec w)\cdot\vec w'|^k]&\leq \E_{\x\sim\D_\x}[\1\{\vec w\cdot \x \in [-\tht, -\tht + \gammah]\}]\E_{\x\sim\D_\x}[|\w'\cdot\x|^k]
           \\&\leq  \pr[\vec w\cdot \x \in [-\tht, -\tht + \gammah]]C^{k-2}k!\;,
    \end{align*}
    where the last inequality comes from the fact that for a $\sqrt{2}$-sub-Gaussian variable $z$ (e.g., a Gaussian random variable), it holds $\E[|z|^k]\leq (\sqrt{2}e\sqrt{k})^k\leq C_1C_2^{k-2}k!$ for some absolute constants $C_1$ and $C_2$, and we choose $C$ to be a large enough multiple of $C_1$, $C_2$. Let $\sigma^2=\pr[\vec w\cdot \x \in [-\tht, -\tht+\gammah]]$. Note that since $\gammah = \frac{1}{2}\beps\exp(\tht^2/2)$, we have $\sigma^2\geq \gammah\exp(-\tht^2/2)=\frac{1}{2}\beps$. Then, the condition of \Cref{app:fct:bernstein} is satisfied with $\sigma^2 \geq \frac{1}{2}\beps$. 
    
     Next, we show that we can bound $\|\E_{(\x,y)\sim\D}[\g'(\w)]\|_2$ from below. In \Cref{lem:-g(w)*w-lowerbound} we showed that when $\beps\exp(t^2/2)\leq \theta(\w,\w^*)\leq 1/(5t)$, $|\tht - t|\leq \beps^2/8$ and $\gammah = \frac{1}{2}\beps\exp(\tht^2/2)$, it holds
    \begin{align}
        \E_{(\x,y)\sim\D(\w,\tht)}[\g(\w)\cdot\w^*] &= \frac{\E_{(\x,y)\sim\D}[\g(\w)\cdot\w^*\1\{\w\cdot\x\in[-\tht,-\tht+\gammah]\}]}{\pr[\w\cdot\x\in[-\tht,-\tht+\gammah]]}\notag\\
        &=\frac{\E_{(\x,y)\sim\D}[\g'(\w)\cdot\w^*\}]}{\sigma^2} \leq -\frac{(1 - 2\eta)\sin(\theta(\w,\w^*))}{2\sqrt{2\pi}}. \label{app:eq:gw-w^*-cor}
    \end{align}
    Let $\tilde{\w}$ be a unit vector orthogonal to $\w$ such that $\w^* = \w\cos(\theta(\w,\w^*)) + \tilde{\w}\sin(\theta(\w,\w^*))$. Then, recalling the fact that $\E_{(\x,y)\sim\D}[\g'(\w)]$ is also orthogonal to $\w$, we have:
\begin{align*}
        \|\E_{(\x,y)\sim\D}[\g'(\w)]\|_2 &= \sup_{\v \in \R^d: \|\v\|_2 = 1} \E_{(\x,y)\sim\D}[\g'(\w)] \cdot \v\\
        &\geq  -\E_{(\x,y)\sim\D}[\g'(\w)] \cdot \tilde{\w}\\
        &= -\frac{\E_{(\x,y)\sim\D}[\g'(\w)] \cdot \w^*}{\sin \theta(\w, \w^*)}\\
        & \geq \frac{1 - 2\eta}{2 \sqrt{2\pi}},
    \end{align*}
    where in the last line we used \Cref{app:eq:gw-w^*-cor}. Thus, as $\sigma^2 \leq 1$ (it is defined as a probability), we conclude that \begin{equation}\label{app:eq:Egw-norm-lb}
    \|\E_{(\x,y)\sim\D}[\g'(\w)]\|_2\geq \frac{(1 - 2\eta)}{2\sqrt{2\pi}}\sigma^2.
    \end{equation}
Applying \Cref{app:fct:bernstein}, we now get that for any $\w' \in \R^d,$ $\|\w'\|_2 = 1,$ 
    \begin{align*}
        \pr\left[\left|\hatg'(\w)\cdot\vec w'-\E_{(\x,y)\sim\D}[\g'(\vec w)\cdot\vec w']\right|\geq 2x\sqrt{\sigma^2/N}\right]\leq \exp(-x^2)\;.
    \end{align*}
Choosing $x=\frac{\alpha(1 - 2\eta)}{2\sqrt{2\pi}}\sqrt{\sigma^2 N}$, where $\alpha\in(0,1)$ is an absolute constant, yields 
    \begin{align*}
        \pr\bigg[\bigg|\hatg'(\w)&\cdot\vec w'-\E_{(\x,y)\sim\D}[\g'(\vec w)\cdot\vec w']\bigg|\geq \frac{\alpha(1 - 2\eta)}{2\sqrt{2\pi}}\sigma^2\bigg] \leq \exp(-N\alpha^2(1-2\eta)^2\sigma^2/(8\pi))\;.
    \end{align*}
In particular, since the above inequality holds for any unit $\w',$ using \Cref{app:eq:Egw-norm-lb}, it follows that
\begin{align}\label{app:eq:mul-concentration-g'}
        \pr\bigg[\bigg|\hatg'(\w)\cdot\vec w'-\E_{(\x,y)\sim\D}[\g'(\vec w)\cdot\vec w']&\bigg|\geq \alpha \|\E_{(\x,y)\sim\D}[\g'(\vec w)]\|_2\bigg] \nonumber\\
        &\leq \exp(-N\alpha^2(1-2\eta)^2\sigma^2/(8\pi))\;.
    \end{align}
    
    It remains to show that \Cref{app:eq:mul-concentration-g'} holds for all  functions in $\mathcal F$. To do that, we apply the union bound along all the directions $\vec w'$, all the hypothesis $\vec w$. A cover for these parameters will be of order $(1/\beps)^{O(d)}$. Hence, we have that for any unit vector $\vec w\in\R^d$ with $\beps\exp(t^2/2)\leq \theta(\w,\w^*)\leq 1/(5t)$, \begin{align*}
         &\quad \pr\bigg[\|\hatg'(\w) - \E_{(\x,y)\sim\D}[\g'(\w)]\|_2\geq \alpha\|\E_{(\x,y)\sim\D}[\g'(\w)]\|_2\bigg]\\
&\leq \exp(O(d\log(1/\beps)))\exp(-N\alpha^2(1-2\eta)^2\sigma^2/(8\pi)) \leq\delta\;,
    \end{align*}
    where in the last inequality, we used that $\sigma^2 \geq \frac{1}{2}\beps$, and $N\geq \widetilde{O}(d\log(1/\delta)/(\beps(1-2\eta)^2))$.
\end{proof}

Recall that for some fixed $\w$ and $\tht$, we have defined the empirical gradient vector in \Cref{app:line:subgrad-estimate} as:
\begin{equation*}
    \hatg(\w) = \frac{1}{N_2 P(\tht,\gammah)}\sum_{i=1}^{N_2}\g(\w;\x\ith,y\ith)\1\{\x\ith\in\evt(\w,\tht)\},
\end{equation*}
where $P(\tht,\gammah) = \pr_{z\sim\normal}[z\in[-\tht,-\tht+\gammah]] = \pr[\evt(\w,\tht)]$, since $\w$ is a unit vector and $\w\cdot\x$ follows standard Gaussian. Thus, 
$\hatg(\w) = {\hatg'(\w)}/{\pr[\evt(\w,\tht)]}$.
In addition, by definition we know that 
\[\E_{(\x,y)\sim\D(\w,\tht)}[\g(\w)] = \E_{(\x,y)\sim\D}[\g(\w)]/\pr[\evt(\w,\tht)],\]
and so \Cref{app:lem:uniform-convergence-of-g} immediately implies the following corollary.
\begin{corollary}
  \label{app:cor:up-bound-||hatg-E[g]||}
Consider the learning problem from \Cref{def:RCN-LTF}. Let $\beps,\delta,\tht,\gammah$ be parameters satisfying the condition of \Cref{app:lem:uniform-convergence-of-g} and choose $\alpha = 1/4$. Then  using $\widetilde{O}(d\log(1/(\delta))/((1-2\eta)^2\beps))$ samples to construct $\hatg$, for any unit vector $\w$ such that $\beps\exp(t^2/2)\leq \theta(\w,\w^*)\leq 1/(5t)$, it holds with probability at least $1-\delta$:
$
    \|\hatg(\w) - \E_{(\x,y)\sim\D(\w,\tht)}[\g(\w)]\|_2\leq (1/4)\|\E_{(\x,y)\sim\D(\w,\tht)}[\g(\w)]\|_2.
$  
\end{corollary}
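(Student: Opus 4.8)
The plan is to deduce the corollary directly from \Cref{app:lem:uniform-convergence-of-g} by observing that $\hatg(\w)$ and $\E_{(\x,y)\sim\D(\w,\tht)}[\g(\w)]$ differ from the two quantities controlled by that lemma only through multiplication by one and the same positive scalar. First I would record the two relevant identities. Comparing the definition in \Cref{app:line:subgrad-estimate} with the definition of $\g'$ in \Cref{app:lem:uniform-convergence-of-g}, and recalling that $\evt(\w,\tht)=\{\x:\,\w\cdot\x\in[-\tht,-\tht+\gammah]\}$, we have $\g(\w;\x,y)\1\{\x\in\evt(\w,\tht)\}=\g'(\w;\x,y)$, hence $\hatg(\w)=\hatg'(\w)/P(\tht,\gammah)$ with $\hatg'(\w)=\frac1{N}\sum_{i=1}^N\g'(\w;\x\ith,y\ith)$. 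On the expectation side, the definition of conditional expectation gives $\E_{(\x,y)\sim\D(\w,\tht)}[\g(\w)]=\E_{(\x,y)\sim\D}[\g(\w)\1\{\x\in\evt(\w,\tht)\}]/\pr[\evt(\w,\tht)]=\E_{(\x,y)\sim\D}[\g'(\w)]/\pr[\evt(\w,\tht)]$.

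The one point that deserves a (one-line) justification is that the deterministic scalar $P(\tht,\gammah)$ used inside $\hatg(\w)$ coincides with $\pr[\evt(\w,\tht)]$ for \emph{every} unit vector $\w$: since $\|\w\|_2=1$ and $\x\sim\normal$, the random variable $\w\cdot\x$ is a standard one-dimensional Gaussian, so $\pr[\evt(\w,\tht)]=\pr_{z\sim\normal}[z\in[-\tht,-\tht+\gammah]]=P(\tht,\gammah)$, independently of $\w$. Therefore both $\hatg(\w)$ and $\E_{(\x,y)\sim\D(\w,\tht)}[\g(\w)]$ are obtained from $\hatg'(\w)$ and $\E_{(\x,y)\sim\D}[\g'(\w)]$ by dividing by the same positive constant, and applying \Cref{app:lem:uniform-convergence-of-g} with $\alpha=1/4$ and dividing its conclusion through by $P(\tht,\gammah)$ yields, on the same $1-\delta$ event and simultaneously for all unit $\w$ with $\beps\exp(t^2/2)\le\theta(\w,\w^*)\le 1/(5t)$,
\[
\|\hatg(\w)-\E_{(\x,y)\sim\D(\w,\tht)}[\g(\w)]\|_2=\tfrac{1}{P(\tht,\gammah)}\|\hatg'(\w)-\E_{(\x,y)\sim\D}[\g'(\w)]\|_2\le\tfrac{1}{4P(\tht,\gammah)}\|\E_{(\x,y)\sim\D}[\g'(\w)]\|_2=\tfrac14\,\|\E_{(\x,y)\sim\D(\w,\tht)}[\g(\w)]\|_2 .
\]

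Finally I would note that the sample size and failure probability are inherited verbatim: \Cref{app:lem:uniform-convergence-of-g} already furnishes a guarantee uniform over the admissible $\w$ with $N=\widetilde O\big(d\log(1/\delta)/((1-2\eta)^2\beps)\big)$ samples, and the rescaling by $P(\tht,\gammah)$ affects neither $N$ nor $\delta$, so we may set $\alpha=1/4$ and conclude. There is no genuine obstacle here; the only thing one must be careful to check is the $\w$-independence of $\pr[\evt(\w,\tht)]$ (together with the fact that $\g$ restricted to the band equals $\g'$), which is precisely what turns the passage from the multiplicative concentration bound on $\hatg'$ to the one on $\hatg$ into a trivial rescaling rather than a fresh concentration argument.
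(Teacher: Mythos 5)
Your proposal is correct and follows essentially the same route as the paper: the paper likewise notes that $P(\tht,\gammah)=\pr[\evt(\w,\tht)]$ because $\w\cdot\x$ is standard Gaussian for unit $\w$, writes $\hatg(\w)=\hatg'(\w)/\pr[\evt(\w,\tht)]$ and $\E_{(\x,y)\sim\D(\w,\tht)}[\g(\w)]=\E_{(\x,y)\sim\D}[\g'(\w)]/\pr[\evt(\w,\tht)]$, and then invokes \Cref{app:lem:uniform-convergence-of-g} with $\alpha=1/4$, so the corollary is indeed just a common rescaling of that lemma's conclusion.
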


{
We are now ready to present and prove our main algorithm-related result. A short roadmap for our proof is as follows. 
Since \Cref{app:alg:high-level} constructs a grid with grid-width $\beps^2/8$ that covers all possible values of the true threshold $t$, there exists at least one guess $\tht$ that is $\beps^2$-close to the true threshold $t$. 
We first show that to get a halfspace with error at most $\beps$, it suffices to use this $\tht$ as the threshold and find a weight vector $\w$ such that the angle $\theta(\w,\w^*)$ is of the order $\beps$, which is exactly what \Cref{app:alg:optimization} does. 
The connection between $\theta(\w,\wstar)$ and the error is conveyed by the following fact: 
\begin{fact}[see, e.g.,~Lemma 4.2 of \cite{DKS18a}]\label{app:lem:angle-and-error}
    Under the standard normal distribution, it holds:
    \begin{equation*}
        \pr[\sign(\w\cdot\x + t)\neq \sign(\wstar\cdot\x + t)]\leq \frac{\theta(\w,\wstar)}{\pi}\exp(-t^2/2).
    \end{equation*}
\end{fact}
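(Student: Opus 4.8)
The plan is to prove the stronger \emph{pointwise} statement that the disagreement indicator is dominated by the number of times a rotating hyperplane passes through the point, and then integrate. First I would reduce to two dimensions: since $\w$ and $\wstar$ are unit vectors, the signs $\sign(\w\cdot\x+t)$ and $\sign(\wstar\cdot\x+t)$ depend only on the projection of $\x$ onto $\mathrm{span}(\w,\wstar)$, which (for $\x\sim\normal$) is a standard Gaussian in $\R^2$. So it suffices to take $\x=(\x_1,\x_2)\sim\normal$ in $\R^2$, $\w=\e_1$, and $\wstar=\cos\theta\,\e_1+\sin\theta\,\e_2$ with $\theta=\theta(\w,\wstar)\in[0,\pi]$.

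Next I would introduce the interpolating unit vectors $\w_\phi=\cos\phi\,\e_1+\sin\phi\,\e_2$ for $\phi\in[0,\theta]$, so that $\w_0=\w$ and $\w_\theta=\wstar$, and set $h_\phi(\x)=\w_\phi\cdot\x+t$. The key observation is that if $\sign(h_0(\x))\neq\sign(h_\theta(\x))$, then the continuous function $\phi\mapsto h_\phi(\x)$ must vanish at some $\phi\in(0,\theta)$. Hence, writing $N(\x)=\#\{\phi\in(0,\theta):h_\phi(\x)=0\}$, we get the pointwise inequality $\mathbf 1[\sign(\w\cdot\x+t)\neq\sign(\wstar\cdot\x+t)]\le N(\x)$, and therefore $\pr_{\x\sim\normal}[\sign(\w\cdot\x+t)\neq\sign(\wstar\cdot\x+t)]\le\E_\x[N(\x)]$.

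The crux is then evaluating $\E_\x[N(\x)]$. By a Kac--Rice / co-area computation — writing $N(\x)=\int_0^\theta\delta(h_\phi(\x))\,|h_\phi'(\x)|\,d\phi$ for a.e.\ $\x$, where $h_\phi'(\x)=\w_\phi^\perp\cdot\x$ with $\w_\phi^\perp=-\sin\phi\,\e_1+\cos\phi\,\e_2$ — and using that $\w_\phi\cdot\x$ and $\w_\phi^\perp\cdot\x$ are independent $N(0,1)$ random variables (since $\w_\phi\perp\w_\phi^\perp$), I obtain
\[
\E_\x[N(\x)]=\int_0^\theta\Big(\tfrac{1}{\sqrt{2\pi}}e^{-t^2/2}\Big)\cdot\E_{v\sim N(0,1)}|v|\;d\phi=\int_0^\theta\tfrac{1}{\sqrt{2\pi}}e^{-t^2/2}\cdot\sqrt{\tfrac{2}{\pi}}\;d\phi=\tfrac{\theta}{\pi}e^{-t^2/2},
\]
where the inner factor is the density of $\w_\phi\cdot\x$ at $-t$ times $\E|v|=\sqrt{2/\pi}$. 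This is exactly the claimed bound.

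The main obstacle is making the crossing-count step fully rigorous: justifying the co-area identity for $N(\x)$, the interchange of expectation and integral, and that the zeros of $\phi\mapsto h_\phi(\x)$ are a.s.\ simple. I would handle this either by the standard smoothing argument — replace $\delta$ by $\tfrac{1}{2\eps}\mathbf 1[|\cdot|\le\eps]$, lower-bound $\E_\x[N(\x)]$ by a $\liminf$ of the smoothed quantities via Fatou, and compute the limit directly from the explicit joint Gaussian density — or, more elementarily, by passing to polar coordinates $\x=(r\cos\psi,r\sin\psi)$, noting $N(\x)$ is the number of solutions of $\cos(\phi-\psi)=-t/r$ in $(0,\theta)$, and integrating this against the Gaussian density in $(r,\psi)$. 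Since the statement is cited from \cite{DKS18a}, a short self-contained version along these lines suffices; the $t=0$ case already recovers the familiar bound $\theta/\pi$ as the measure of two opposite $\theta$-wedges at the origin.
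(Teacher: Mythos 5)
Your proof is correct. Note that the paper does not prove this statement at all: it is imported verbatim as a Fact from an external reference (Lemma 4.2 of the cited work), so your argument is a genuinely self-contained alternative rather than a reproduction of the paper's route. The usual way such bounds are derived is to reduce to two dimensions and directly bound the Gaussian mass of the two wedge-shaped disagreement regions cut out by the hyperplanes $\w\cdot\x=-t$ and $\wstar\cdot\x=-t$; your crossing-count/Kac--Rice argument instead computes the expected number of sign changes of $\phi\mapsto \w_\phi\cdot\x+t$ exactly, $\E[N(\x)]=\tfrac{\theta}{\pi}e^{-t^2/2}$, and gets the bound from the pointwise inequality $\1\{\text{disagree}\}\le N(\x)$; this is clean, gives tightness at $t=0$, and the independence of $\w_\phi\cdot\x$ and $\w_\phi^{\perp}\cdot\x$ makes the Gaussian computation one line. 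Two small remarks on the rigorization. First, your Fatou step is phrased in the wrong direction ("lower-bound $\E[N]$ by a liminf of the smoothed quantities"); what Fatou actually gives, once the smoothed counts converge a.s.\ to $N(\x)$ (zeros are simple for a.e.\ $\x$, since a degenerate zero forces $\x=-t\w_\phi$, a null set), is $\E[N]\le\liminf_{\eps}$ of the smoothed expectations, which is exactly the inequality you need since each smoothed expectation equals $\theta\sqrt{2/\pi}\cdot\tfrac{1}{2\eps}\pr[|N(t,1)|\le\eps]\to\tfrac{\theta}{\pi}e^{-t^2/2}$. Second, your polar-coordinate alternative sidesteps this entirely and even yields equality: for fixed $r$, Fubini over $\psi$ gives $\int_0^{2\pi}N(r,\psi)\,d\psi=2\theta\,\1\{r>|t|\}$, so $\E[N]=\int_{|t|}^{\infty}2\theta\cdot\tfrac{1}{2\pi}re^{-r^2/2}\,dr=\tfrac{\theta}{\pi}e^{-t^2/2}$. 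Either patch makes the argument fully rigorous.
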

Let $\w_k$ be the parameter generated by \Cref{app:alg:optimization} at iteration $k$ for  threshold $\tht$. We show that $\theta(\w_k,\w^*)$ converges to zero at a linear rate.
To this end, we prove that under our carefully devised step size $\mu_k$, there exists an upper bound on $\|\w_k - \w^*\|_2$, which contracts at each iteration. Note that since both $\w_k$ and $\w^*$ are on the unit sphere, we have $\|\w_k - \w^*\|_2 = 2\sin(\theta(\w_k, \w^*)/2)$. 
Essentially, this implies that \Cref{app:alg:optimization} produces a sequence of parameters $\w_k$ such that $\theta(\w_k, \w^*)$ converges to 0 linearly, under this threshold $\tht$. Thus, we can conclude that there exists a halfspace among all halfspaces generated by \Cref{app:alg:high-level} that achieves $\beps$ error with high probability.
}

\optimizationMainThm*
\begin{proof}
Let $\beps = \frac{\eps}{1-2\eta},$ 
  and denote $\w_k$ as the parameter produced by the algorithm at $k^{\mathrm{th}}$ iteration under threshold $\tht$. Observe that for any unit vector $\w$:
\begin{align*}
    &\quad \pr[\sign(\w\cdot\x + \tht)\neq \sgn(\wstar\cdot\x + t)]\\
    &=\pr[\sign(\w\cdot\x + \tht)\neq \sgn(\w\cdot\x + t),\, \sgn(\w\cdot\x +t) = \sgn(\wstar\cdot\x+t)]\\
    &\quad+\pr[\sign(\w\cdot\x + \tht)= \sgn(\w\cdot\x + t),\, \sgn(\w\cdot\x +t) \neq \sgn(\wstar\cdot\x+t)]\\
    &\leq \pr[\sign(\w\cdot\x + \tht)\neq \sgn(\w\cdot\x + t)] + \pr[\sgn(\w\cdot\x +t) \neq \sgn(\wstar\cdot\x+t)].
\end{align*}
Since $|\tht-t|\leq \beps^2/8$, it holds 
\begin{align*}
    \pr[\sign(\w\cdot\x + \tht)\neq \sgn(\w\cdot\x + t)]&=\pr[-\max\{t, \tht\}\leq \w\cdot\x\leq -\min\{t, \tht\}]\\
    &\leq \frac{2|\tht-t|}{\sqrt{2\pi}}\leq \frac{\beps^2}{4\sqrt{2\pi}} \;.
\end{align*}
In addition, as shown in \Cref{app:lem:angle-and-error}, $\pr[\sign(\w\cdot\x + t)\neq \sign(\wstar\cdot\x + t)]\leq \frac{\theta(\w,\wstar)}{\pi}\exp(-t^2/2)$; thus,
\begin{equation}\label{app:eq:disagreement}
    \pr[\sign(\w\cdot\x + \tht)\neq \sgn(\wstar\cdot\x + t)]\leq \frac{\beps^2}{4\sqrt{2\pi}} + \frac{\theta(\w,\wstar)}{\pi}\exp(-t^2/2).
\end{equation}

Therefore, it suffices to find a parameter $\w$ such that $\theta(\w,\wstar)\leq\pi\beps\exp(t^2/2)$. Note that since both $\w$ and $\wstar$ are unit vectors, we have $\|\w - \wstar\|_2 = 2\sin(\theta/2)$, indicating that it suffices to minimize $\|\w - \wstar\|_2$ efficiently. 
As proved in \Cref{lem:initialization} and \Cref{lem:approx-exp(-t^2/2)}, we can start with an initial vector $\w_0$ such that $\theta(\w_0,\wstar)\leq 1/(5t)$ by calling \Cref{alg:initialization}. 
Starting from this $\w_0$, we show that $\|\w_k - \wstar\|_2$ contracts linearly whenever the angle between $\w_k$ and $\wstar$ is larger than $\beps\exp(t^2/2)$, thus we reach the required upper bound for this angle within a logarithmic number of steps. Denote $\theta_k = \theta(\w_k,\wstar)$ and consider the case when $\theta_k\geq \beps\exp(t^2/2)$.

\upperBoundDiffBetweenIter*
\begin{proof}
Observe first that since $\hatg(\w_k)$ is orthogonal to $\w_k$, we have $\|\w_k - \mu_k\hatg(\w_k)\|_2^2 = \|\w_k\|_2^2 + \mu_k^2\|\hatg(\w_k)\|_2^2\geq 1$, thus normalizing $\w_k - \mu\hatg(\w_k)$ is equivalent to projecting $\w_k - \mu_k\hatg(\w_k)$ to the unit ball $\B$. 
Since we have assumed $\wstar\in\B$, by the non-expansiveness of the projection operator and $\proj_\B(\wstar) = \wstar$, we have: 
\begin{align*}
    \|\w_{k+1} - \wstar\|_2^2 &= \bigg\|\frac{\w_k - \mu_k\hatg(\w_k)}{\|\w_k - \mu_k\hatg(\w_k)\|_2} - \wstar\bigg\|_2^2 = \|\proj_\B(\w_k - \mu_k\hatg(\w_k)) - \wstar\|_2^2 \\
    &\leq \|\w_k - \mu_k\hatg(\w_k) - \wstar\|_2^2.
\end{align*}
Thus, expanding the squared norm on the right-hand side yields:
\begin{align}
    \|\w_{k+1} - \wstar\|_2^2 &\leq \|\w_k - \wstar\|_2^2 - 2\mu_k\hatg(\w_k)\cdot(\w_k - \wstar) + \mu_k^2\|\hatg(\w_k)\|_2^2 \nonumber\\
    &= \|\w_k - \wstar\|_2^2 + 2\mu_k\E_{(\x,y)\sim\D(\w_k,\tht)}[\g(\w_k)\cdot\wstar] \\
    &\quad + 2\mu_k(\hatg(\w_k) - \E_{(\x,y)\sim\D(\w_k,\tht)}[\g(\w_k)])\cdot\wstar + \mu_k^2\|\hatg(\w_k)\|_2^2 \nonumber
\end{align}
where in the first equality we used the fact that $\hatg(\w_k)$ and $\E_{(\x,y)\sim\D(\w_k,\tht)}[\g(\w_k)]$ are both orthogonal to $\w_k$.
Without loss of generality (because of the rotational invariance), assume $\w_k = \e_1$ and $\wstar = \cos\theta\e_1 + \sin\theta\e_2$. Then, again by the fact that both $\hatg(\w_k)$ and $\E_{(\x,y)\sim\D(\w_k,\tht)}[\g(\w_k)]$ are orthogonal to $\w_k$, we have
\begin{align*}
    (\hatg(\w_k) - \E_{(\x,y)\sim\D(\w_k,\tht)}[\g(\w_k)])\cdot\wstar& = \sin\theta_k(\hatg(\w_k) - \E_{(\x,y)\sim\D(\w_k,\tht)}[\g(\w_k)])\cdot\e_2\\
    &\leq \sin\theta_k \big\| \hatg(\w_k) - \E_{(\x,y)\sim\D(\w_k,\tht)}[\g(\w_k)] \big\|_2.
\end{align*}
Thus, further invoking \Cref{lem:-g(w)*w-lowerbound}, we have:
\begin{align}\label{app:eq:up-||w-w^*||}
    \|\w_{k+1} - \wstar\|_2^2&\leq \|\w_k - \wstar\|_2^2 - 2\mu_k\frac{(1 - 2\eta)\sin\theta_k}{2\sqrt{2\pi}} \nonumber\\
&\quad + 2\mu_k\sin\theta_k\big\|\hatg(\w_k) - \E_{(\x,y)\sim\D(\w_k,\tht)}[\g(\w_k)]\big\|_2 + \mu_k^2\|\hatg(\w_k)\|_2^2.
\end{align}

\Cref{app:cor:up-bound-||hatg-E[g]||} (or \Cref{cor:up-bound-||hatg-E[g]||}) implies that with $N_2 = \widetilde{O}(d\log(1/\delta)/((1-2\eta)^2\beps))$ samples in total, for any unit vector $\w_k$ satisfying $\beps\exp(t^2/2)\leq\theta_k\leq 1/(5t)$ with probability at least $1 - \delta$, it holds:
\begin{equation}\label{app:eq:up-||g(w_k)-Eg(w_k)||}
    \big\|\hatg(\w_k) - \E_{(\x,y)\sim\D(\w_k,\tht)}[\g(\w_k)]\big\|_2\leq \frac{1}{4}\big\|\E_{(\x,y)\sim\D(\w_k,\tht)}[\g(\w_k)]\big\|_2.
\end{equation}
Recall that we have shown in the proof of \Cref{lem:||Eg(w)||-upper-bound} that $\|\E_{(\x,y)\sim\D(\w_k,\tht)}[\g(\w_k)]\|_2\leq \frac{1 - 2\eta}{\sqrt{2\pi}}$; therefore, \Cref{app:eq:up-||g(w_k)-Eg(w_k)||} further gives that with probability at least $1 - \delta$:
\begin{align}\label{app:eq:up-||hatg(w_k)||}
    \|\hatg(\w_k)\|_2&\leq \big\|\hatg(\w_k) - \E_{(\x,y)\sim\D(\w_k,\tht)}[\g(\w_k)]\big\|_2 + \big\|\E_{(\x,y)\sim\D(\w_k,\tht)}[\g(\w_k)]\big\|_2 \nonumber\\
    &\leq \frac{5}{4}\big\|\E_{(\x,y)\sim\D(\w_k,\tht)}[\g(\w_k)]\big\|_2\leq \frac{2(1 - 2\eta)}{\sqrt{2\pi}}.
\end{align}
Thus, plugging \Cref{app:eq:up-||g(w_k)-Eg(w_k)||} and \Cref{app:eq:up-||hatg(w_k)||} back into \Cref{app:eq:up-||w-w^*||}, we get that with probability at least $1 - \delta$,
\begin{align}\label{app:eq:upper-bound_||w_{k+1}-w^*||-mu}
    \|\w_{k+1} - \wstar\|_2^2&\leq \|\w_k - \wstar\|_2^2 - 2\mu_k\frac{1 - 2\eta}{2\sqrt{2\pi}}\sin\theta_k + 2\mu_k\frac{1 - 2\eta}{4\sqrt{2\pi}}\sin\theta_k + \mu_k^2\frac{2(1 - 2\eta)^2}{\pi}\nonumber\\
    & \leq \|\w_k - \wstar\|_2^2 - \mu_k\frac{1 - 2\eta}{2\sqrt{2\pi}}\sin\theta_k + \mu_k^2\frac{2(1 - 2\eta)^2}{\pi}.
\end{align}
Let $C_1 := \frac{1 - 2\eta}{\sqrt{2\pi}}$. Then \Cref{app:eq:upper-bound_||w_{k+1}-w^*||-mu} is simplified to:
\begin{equation}\label{app:eq:upper-bound-||w_k+1-w*||-mu-2}
    \|\w_{k+1} - \wstar\|_2^2\leq \|\w - \wstar\|_2^2 - \frac{C_1}{2}\mu_k\sin\theta_k + 4C_1^2\mu_k^2,
\end{equation}
completing the proof of this claim.
\end{proof}

It remains to choose the step size $\mu_k$ properly to get linear convergence. By carefully designing a shrinking step size, we are able to construct an upper-bound $\phi_k$ on the distance of $\|\w_{k+1} - \w_k\|_2$ using \Cref{claim:express-||w_k+1-w*||}. Importantly, by exploiting the property that both $\w$ and $\w^*$ are on the unit sphere, we show that the upper bound is contracting at each step, even though the distance $\|\w_{k+1} - \w_k\|_2$ could be increasing. Concretely, we have the following claim.

\stepSizeContraction*
\begin{proof}
Let $\phi_k = (1 - \rho)^k$ where $\rho  = 0.00098.$ This choice of $\rho$ ensures that $32\rho^2 + 1020\rho - 1 \leq 0$. We show by induction that choosing $\mu_k = (1 - 4\rho)\phi_k/(16C_1) = (1 - \rho)^k(1 - 4\rho)/(16C_1)$, it holds $\sin(\theta_k/2)\leq \phi_k$. The condition certainly holds for $k=1$ since $\theta_1\in[0,\pi/2]$. 
Now suppose that $\sin(\theta_k/2)\leq \phi_k$ for some $k \geq 1$. We discuss the following 2 cases: $\phi_k\geq \sin(\theta_k/2)\geq \frac{3}{4}\phi_k$ and $\sin(\theta_k/2)\leq \frac{3}{4}\phi_k$. 

First, suppose $\phi_k\geq \sin(\theta_k/2)\geq \frac{3}{4}\phi_k$. Since $\sin(\theta_k/2)\leq \sin\theta_k$, it also holds $\sin\theta_k\geq \frac{3}{4}\phi_k$. Bringing in the fact that $\|\w_{k+1} - \wstar\|_2 = 2\sin(\theta_{k+1}/2)$ and $\|\w_k - \wstar\|_2 = 2\sin(\theta_{k}/2)$, as well as the definition of $\mu_k$, \Cref{app:eq:upper-bound-||w_k+1-w*||-mu-2} becomes:
\begin{align*}
    (2\sin(\theta_{k+1}/2))^2 &\leq (2\sin(\theta_k/2))^2 - \frac{C_1}{2}\mu_k\sin\theta_k + 4C_1^2\frac{(1 - 4\rho)}{16C_1}\phi_k\mu_k\\
    &\leq 4\phi_k^2 - \frac{C_1}{2}\mu_k\frac{3}{4}\phi_k + \frac{C_1(1 - 4\rho)}{4}\mu_k\phi_k\\
    &= 4\phi_k^2 - \frac{C_1}{4}\bigg(\frac{1}{2} + 4\rho\bigg)\frac{1-4\rho}{16C_1}\phi_k^2\\
    &=4\phi_k^2\bigg(1 - \frac{(1 + 8\rho)(1 - 4\rho)}{512}\bigg),
\end{align*}
where in the second line we used $\sin \theta_k \geq \frac{3}{4}\phi_k$ and in the third line we used the definition of $\mu_k$ by which $\mu_k = (1 - 4\rho)\phi_k/(16C_1)$. 
Since $\rho$ is chosen so that $32\rho^2 + 1020\rho - 1 \leq 0$, we have:
\begin{align*}
    \sin(\theta_{k+1}/2)
    &\leq \phi_k\sqrt{1 - \frac{(1 + 8\rho)(1 - 4\rho)}{512}}\\
    &\leq \phi_k\bigg(1 - \frac{(1 + 8\rho)(1 - 4\rho)}{1024}\bigg)\leq (1 - \rho)\phi_k = (1 - \rho)^{k+1},
\end{align*}
as desired. 

Next, consider $\sin(\theta_k/2)\leq \frac{3}{4}\phi_k$. Recall that $\w_{k+1} = \proj_\B(\w_k - \mu_k\hatg(\w_k))$ and $\w_k\in\B$; therefore,  $\|\w_{k+1} - \w_k\|_2 \leq \|\w_k - \mu_k\hatg(\w) - \w_k\|_2  = \mu_k\|\hatg(\w_k)\|_2$ by the non-expansiveness of the projection operator. 
Furthermore, applying \Cref{app:eq:up-||hatg(w_k)||}, we have $\|\hatg(\w_k)\|_2\leq 2C_1$; therefore, $\|\w_{k+1} - \w_k\|_2\leq 2\mu_k C_1$, which indicates that:
\begin{equation*}
    2(\sin(\theta_{k+1}/2) - \sin(\theta_k/2)) = \|\w_{k+1} - \wstar\|_2 - \|\w_k - \wstar\|_2 \leq \|\w_{k+1} - \w_k\|_2\leq 2\mu_k C_1. 
\end{equation*}
Since we have assumed $\sin(\theta_k/2)\leq \frac{3}{4}\phi_k$, then it holds:
\begin{align*}
    \phi_{k+1} - \sin(\theta_{k+1}/2)&\geq (1 - \rho)\phi_k - \phi_k + \phi_k - \sin(\theta_k/2) - \mu_k C_1\\
    &\geq -\rho\phi_k + \frac{1}{4}\phi_k - \frac{1-4\rho}{16}\phi_k = \frac{3(1 - 4\rho)}{16}\phi_k>0,
\end{align*}
since we have chosen $\mu_k = (1 - 4\rho)\phi_k/(16C_1)$. Hence, it also holds that $\sin(\theta_{k+1}/2)\leq \phi_{k+1}$.
\end{proof}

{
\Cref{claim:contraction-sin theta_k/2} shows that $\sin(\theta_k/2)$ converges to 0 linearly. Therefore, using $N_2 = \widetilde{O}(d\log(1/\delta)/((1 - 2\eta)^2\beps))$ samples, after $K = O(\frac{1}{\rho}\log(1/(\exp(t^2/2)\beps)) = O(\log(1/(p\beps))$ iterations, we get a $\w_{K}$ such that $\theta_{K}\leq 2\sin(\theta_K/2)\leq \beps\exp(t^2/2)$. Let $h(\x):=\sign(\w_K\cdot\x + \tht)$. \Cref{app:eq:disagreement} then implies that the disagreement of $h(\x)$ and $f(\x)$ is bounded by:
\begin{equation*}
    \pr[h(\x)\neq f(\x)]\leq \frac{\beps^2}{4\sqrt{2\pi}} + \frac{\beps}{\pi} \leq \beps.
\end{equation*}

Furthermore, for any boolean function $h:\R^d\mapsto \{\pm1 \}$ it holds 
\[
\mathrm{err}^{\D}_{0-1}(h) = \pr_{(\x,y)\sim\D}[h(\x)\neq y]=\eta+(1-2\eta)\pr_{\x\sim\D_\x}[h(\x)\neq \sign(\wstar\cdot\x+t)].
\] 
Thus, to get misclassification error at most $\eta + \eps$ (with respect to the $y$), we only need to use $\beps = \eps/(1 - 2\eta)$, and we finally get that  
$\pr_{(\x,y)\sim\D}[\sign(\vec w_K\cdot\x +\tht)\neq y]\leq \eta+\eps$, using $N_2 = \widetilde{O}(d\log(1/\delta)/((1 - 2\eta)\eps))$ samples. Since the algorithm runs for $O(\log(1/\eps))$ iterations, 
the overall runtime is $\widetilde{O}(N_2 d)$. This completes the proof of \Cref{thm:revised-GD}.}
\end{proof}

\begin{proof}[Proof of \Cref{thm:main}]
From \Cref{lem:approx-exp(-t^2/2)}, we get that with $\widetilde{O}(d\log(1/\delta)/((1-2\eta)^2p^2))$ samples \Cref{alg:initialization} produces a unit vector $\vec w_0$ so that $\theta(\vec w_0,\wstar)\leq \min(1/(5t),\pi/2)$. 

{Since our guesses of the threshold $t_m$, $m\in[M]$ form a grid of $(\eps^2/(8(1 - 2\eta)^2)$-separated values on the interval $[\sqrt{2\log(1/\hat{p})},\sqrt{2\log(4/\hat{p})}]\ni t$, which covers all possible values of the true threshold $t$, there exists a $\Bar{m}\in[M]$ such that $|t_{\Bar{m}} - t|\leq \eps^2/(8(1 - 2\eta)^2)$. Thus, the condition of \Cref{thm:revised-GD} is satisfied by at least one input threshold. Given $\vec w_0$, let $\wht_{m}$, $m\in[M]$ be the weight vector produced by \Cref{app:alg:optimization} at call $m = 1,\cdots,M$.} From \Cref{thm:revised-GD}, we know that with $\widetilde{O}(d\log(1/\delta)/((1 - 2\eta)\eps))$ samples, with probability at least $1 - \delta$ we get a list of halfspaces {$\{h_m(\x):h_m(\x) = \sign(\wht_{m} + t_m), m=1,\cdots,M\}$} so that 
\[
\min_{h_m,m\in[M]}\pr_{(\x,y)\sim\D}[h_m(\x)\neq y]\leq \eta+\eps\;.
\]
    
Finally, to pick the optimal hypothesis from the list, we utilize the following fact. 
\begin{fact}[Equation (7) in \cite{Massart2006}]\label{app:fact:Massart2006}
     Let $\D$ be a distribution over $\R^d\times\{\pm 1\}$. Let $\mathcal F$ be a concept set of boolean functions with $\mathrm{VC}$ dimension at most $d$. Let $\widehat{\D}$ be the empirical distribution obtained by drawing $\widetilde{O}(d\log(1/\delta)/((1-2\eta)\eps))$ samples from $\D$. 
     Then it holds that
     \[
     \min_{f\in \mathcal F}\pr_{(\x,y)\sim \widehat{\D}}[f(\x)\neq y]\leq \min_{f\in \mathcal F}\pr_{(\x,y)\sim {\D}}[f(\x)\neq y]
     +\eps\;.\]
\end{fact}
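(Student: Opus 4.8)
The plan is to reduce the claim to a single-hypothesis concentration bound. Write $\mathrm{err}(f):=\pr_{(\x,y)\sim\D}[f(\x)\neq y]$ and $\widehat{\mathrm{err}}(f):=\pr_{(\x,y)\sim\widehat{\D}}[f(\x)\neq y]$, fix $f^\star\in\arg\min_{f\in\mathcal F}\mathrm{err}(f)$, and set $\mathrm{OPT}:=\mathrm{err}(f^\star)$. Since $f^\star\in\mathcal F$, we have the trivial inequality $\min_{f\in\mathcal F}\widehat{\mathrm{err}}(f)\le\widehat{\mathrm{err}}(f^\star)$, so it suffices to show $\widehat{\mathrm{err}}(f^\star)\le\mathrm{OPT}+\eps$ with probability at least $1-\delta$. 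Now $\widehat{\mathrm{err}}(f^\star)$ is the empirical average of $N$ i.i.d.\ $\{0,1\}$ random variables $\1\{f^\star(\x^{(i)})\neq y^{(i)}\}$ of mean $\mathrm{OPT}$ and variance $\mathrm{OPT}(1-\mathrm{OPT})\le\mathrm{OPT}$, so I would invoke a one-sided Bernstein (equivalently, multiplicative Chernoff) bound to get $\pr[\widehat{\mathrm{err}}(f^\star)\ge\mathrm{OPT}+\eps]\le\exp\!\big(-cN\eps^2/(\mathrm{OPT}+\eps)\big)$ for an absolute constant $c>0$; this is at most $\delta$ once $N\ge C\max\{\mathrm{OPT},\eps\}\log(1/\delta)/\eps^2$. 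For an arbitrary $\D$ this already gives the claim with $\widetilde O(\log(1/\delta)/\eps^2)$ samples; for the RCN distribution of \Cref{def:RCN-LTF} the Bayes rule $\x\mapsto\sgn(\wstar\cdot\x+t)$ lies in $\mathcal F$, so $\mathrm{OPT}=\eta<1/2$ and the requirement is covered by the stated $\widetilde O(d\log(1/\delta)/((1-2\eta)\eps))$ budget (and is in any case dominated by the total sample size used by \Cref{alg:high-level}).

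To recover the exact $\eta$-dependent sample size quoted from \cite{Massart2006} — the rate of Massart's relative-deviation analysis over the whole VC class, whose two-sided form is what also certifies the population error of the hypothesis finally selected — I would instead run a relative-deviation (localized) uniform-convergence argument over $\mathcal F$. The structural input supplied by the RCN model is a Bernstein/variance-versus-mean condition for the excess-loss variable $\ell_f(\x,y):=\1\{f(\x)\neq y\}-\1\{f^\star(\x)\neq y\}\in\{-1,0,1\}$: since $f^\star$ is Bayes-optimal, $\ell_f\neq0$ only on $\{f(\x)\neq f^\star(\x)\}$, on which event $\ell_f$ equals $+1$ with probability $1-\eta$ and $-1$ with probability $\eta$, so $\E_\D[\ell_f]=(1-2\eta)\,\pr_{\D_\x}[f(\x)\neq f^\star(\x)]$ and $\E_\D[\ell_f^2]=\pr_{\D_\x}[f(\x)\neq f^\star(\x)]$, giving $\E_\D[\ell_f^2]=\E_\D[\ell_f]/(1-2\eta)$. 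Feeding this into the standard peeling argument — decompose $\mathcal F$ into the $O(\log(1/\eps))$ shells $\mathcal F_j=\{f:\E_\D[\ell_f]\in(2^{-j-1},2^{-j}]\}$, apply Talagrand/Bousquet concentration inside each shell together with the Sauer–Shelah bound $|\{(\1\{f(\x^{(i)})\neq f^\star(\x^{(i)})\})_{i\le N}:f\in\mathcal F\}|\le(eN/d)^{d}$, and union-bound over shells — yields, with probability $\ge1-\delta$ and uniformly over $f\in\mathcal F$,
\[
\E_\D[\ell_f]-\widehat{\E}[\ell_f]\ \le\ \tfrac12\E_\D[\ell_f]\ +\ C\,\frac{d\log(N/d)+\log(1/\delta)}{(1-2\eta)\,N},
\]
together with the symmetric bound on $\widehat{\E}[\ell_f]-\E_\D[\ell_f]$. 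Instantiating at the empirical risk minimizer $\hat f$, where $\widehat{\E}[\ell_{\hat f}]\le0$, gives $\mathrm{err}(\hat f)-\mathrm{OPT}=\E_\D[\ell_{\hat f}]\le 2C(d\log(N/d)+\log(1/\delta))/((1-2\eta)N)\le\eps$ for $N=\widetilde O(d\log(1/\delta)/((1-2\eta)\eps))$, and $\min_{f\in\mathcal F}\widehat{\mathrm{err}}(f)=\widehat{\mathrm{err}}(\hat f)\le\widehat{\mathrm{err}}(f^\star)$ then finishes exactly as in the first paragraph.

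The hard part is the localization step: one cannot apply Bernstein uniformly over all of $\mathcal F$ in one shot, since the variance proxy $\E_\D[\ell_f^2]$ is not uniformly small, and it is the shells-plus-union-bound device that turns the $\sqrt{\mathrm{Var}/N}$ term into the self-bounding form $\tfrac12\E_\D[\ell_f]$ — precisely where the margin $1-2\eta$ lands in the denominator and the $1/\eps^2$ is upgraded to $1/\eps$. The remaining ingredients (Sauer–Shelah counting, a Bernstein/Bousquet inequality, and the reduction $\min_f\widehat{\mathrm{err}}(f)\le\widehat{\mathrm{err}}(f^\star)$) are routine, so I would cite \cite{Massart2006} for the peeling estimate and only spell out the variance condition above in full.
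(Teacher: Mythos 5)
This Fact is nowhere proved in the paper: it is imported as Equation (7) of \cite{Massart2006} and invoked as a black box in the proof of \Cref{thm:main}, so there is no internal argument to compare against; your proposal is a reconstruction of the cited result, and it is essentially sound. Your opening observation is a genuine simplification of the displayed inequality: since only $\min_{f\in\mathcal F}\pr_{(\x,y)\sim\widehat{\D}}[f(\x)\neq y]\leq \pr_{(\x,y)\sim\widehat{\D}}[f^\star(\x)\neq y]$ is needed, the literal statement follows from a single-hypothesis Bernstein/Chernoff bound with no VC or uniform-convergence machinery at all, at sample size $\Theta(\max\{\mathrm{OPT},\eps\}\log(1/\delta)/\eps^2)$. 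Your second part correctly isolates the real content of the Massart--N\'ed\'elec bound: the variance-to-mean condition $\E[\ell_f^2]=\E[\ell_f]/(1-2\eta)$, which holds here because under \Cref{def:RCN-LTF} the Bayes classifier $\sgn(\wstar\cdot\x+t)$ lies in the class and the flip probability is exactly $\eta$ (note that the $(1-2\eta)$ in the Fact's sample size is meaningless for a truly arbitrary $\D$, so this structural input is indeed where it enters), fed into peeling, Sauer--Shelah, and a Bernstein/Bousquet inequality. That two-sided relative-deviation form is also what the paper's Testing step actually needs---certifying the population error of the empirically best hypothesis takes more than the one-sided display---so your route is, if anything, more informative than the paper's citation.

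One claim should be fixed: that the single-hypothesis Bernstein requirement is ``covered by the stated budget.'' With $\mathrm{OPT}=\eta$ a constant, $\eta\log(1/\delta)/\eps^2\leq d\log(1/\delta)/((1-2\eta)\eps)$ only when $\eps=\Omega(\eta(1-2\eta)/d)$, so for smaller $\eps$ your first paragraph does not by itself give the Fact at the quoted sample size (nor is it always dominated by the total sample size of \Cref{alg:high-level}, e.g.\ when $p$ is a constant the second term there is not $d/\eps^2$). This looseness is arguably inherited from the Fact as transcribed---it paraphrases a relative-deviation inequality---but in your write-up you should either route the one-sided claim through the uniform relative-deviation bound as well, or state it with the $\max\{\mathrm{OPT},\eps\}\,\eps^{-2}$ sample size and reserve the $(1-2\eta)^{-1}\eps^{-1}$ rate for the localized statement.
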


Using the above fact and a sample size of $N_3=\widetilde{O}(d\log(1/\delta)/((1-2\eta)\eps))$ from $\D$, 
we output the hypothesis with the minimum empirical error. 
By \Cref{app:fact:Massart2006}, we have that this will introduce an error 
at most $\eps$ with probability at least $1-\delta$. {Since $M = O(1/\eps^2)$, the total number of calls of \Cref{app:alg:optimization} in \Cref{app:alg:high-level} is $O(1/\eps^2)$, and the runtime is $\widetilde{O}(Nd/\eps^2)$.}
\end{proof}

\begin{algorithm}
    \caption{Testing Procedure}\label{app:alg:testing}
    \begin{algorithmic}
        \State \textbf{Input}: Hypothesis weight vectors $\wht_1, \wht_2, \dots, \wht_m$ and  thresholds $t_1, t_2, \dots, t_m$
        \State Draw $N_3$ samples $\{(\x\ith,y\ith)\}_{i=1}^{N_3}$ from $\D$ 
        \State Calculate the test error (the fraction of misclassified points) for $h_m(\x) = \sign(\wht_m\cdot\x + t_m)$, $m\in[M]$, using $\{(\x\ith,y\ith)\}_{i=1}^{N_3}$ 
        \State Let $h_{\Bar{m}}(\x) = \sign(\wht_{\Bar{m}}\cdot\x + t_{\Bar{m}})$, $\Bar{m}\in [M]$ be the halfspace with smallest empirical error.
        \State\Return $\wht_{\Bar{m}}$, $t_{\Bar{m}}$
    \end{algorithmic}
\end{algorithm}

\subsection{The Case Where Both $\eta$ and $p$ are Unknown}\label{app:estimating-eta}

Throughout this section, we carried out the analysis assuming knowledge of the noise parameter $\eta.$ We now show how to relax this requirement, without changing the sample complexity (up to constant factors) and only affecting the algorithm runtime by a factor $1/\eps.$ In the following lemma, we show that with $\widetilde{O}(d\log(1/\delta)/(p^2(1-2\eta)^2))$ samples we can compute constant factor estimates of the values of $p$ and $1-2\eta,$ which suffice for determining the correct number of samples to draw in all three subprocedures of our main algorithm (i.e., we can correctly determine $N_1,$ $N_2,$ and $N_3$). 

\begin{lemma}\label{lem:estimating-eta}There is an algorithm that uses $\widetilde{O}(d\log(1/\delta)/(p^2(1-2\eta)^2))$ samples, and with probability at least $1-\delta$ outputs estimates $\hat{p},\hat{\eta}$, so that $C\hat{p}\geq p\geq \hat{p}$ and $C(1-2\hat{\eta})\geq (1-2\eta)\geq (1-2\hat{\eta})$, where $C>0$ is a sufficient large absolute constant.
\end{lemma}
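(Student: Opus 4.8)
The plan is to reduce everything to two observables that do \emph{not} require knowing $\eta$: the Chow-parameter norm $\bar r := \sqrt{\pi/2}\,\|\E_{(\x,y)\sim\D}[y\x]\|_2$, which by \Cref{app:eq:w-Chow-parameters} equals $(1-2\eta)e^{-t^2/2}$, and the label bias $m := \E_{(\x,y)\sim\D}[y] = (1-2\eta)\E[f(\x)] = (1-2\eta)(1-2p)$. First I would estimate $\bar r$ exactly by the doubling search of \Cref{alg:initialization} (guesses $r_j=2^{-j}$, empirical mean $\bu_j$ of $y\x$ over $n_j=\widetilde O(d(\log(1/\delta)+j)/r_j^2)$ fresh samples, terminate once $\|\bu_j\|_2\ge \tfrac34\sqrt{2/\pi}\,r_j$): since $y\x$ is $\sqrt2$-subgaussian, a union bound over the $O(\log(1/\bar r))$ iterations shows that with probability $\ge 1-\delta/2$ the search stops at some $r_{j^\star}$ with $\bar r/4\le r_{j^\star}\le \bar r$, so $\hat{\bar r}:=r_{j^\star}$ is a constant-factor lower estimate of $\bar r$, at total cost $\widetilde O(d\log(1/\delta)/\bar r^2)$. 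By \Cref{fact:bound-on-p} one has $p\le e^{-t^2/2}$, hence $1/\bar r^2 = e^{t^2}/(1-2\eta)^2\le 1/(p^2(1-2\eta)^2)$, so this cost is within the stated budget.

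Next I would estimate $m$ by the analogous scalar doubling search on $\tfrac1N\sum_i y\ith$ (terminate once $|\hat m_j|\ge \tfrac34 m_j$), \emph{capped} to also stop as soon as $m_j<\hat{\bar r}/K$ for a large absolute constant $K$; by Hoeffding this uses $\widetilde O(\log(1/\delta)/\bar r^2)$ samples, again within budget, and succeeds with probability $\ge 1-\delta/2$. Then I would branch on three cases. \textbf{(a)} If $\hat{\bar r}\ge c_0$ for a suitable constant $c_0$, then $(1-2\eta)\ge c_0$ and $e^{-t^2/2}\ge c_0$ (as the other factor is $\le 1$), so $t=O(1)$ and $p=\Phi(-t)=\Omega(1)$, and the crude constant outputs $\widehat{1-2\eta}:=c_0$, $\hat p:=\Phi(-\sqrt{2\log(1/c_0)})$ are valid within an absolute factor. \textbf{(b)} If the $m$-search hit the cap, then (using the concentration that held throughout) $m<\hat{\bar r}/K\le\bar r/K$, i.e.\ $\rho(t):=(1-2\Phi(-t))e^{t^2/2}=m/\bar r<1/K$; since $\rho$ is strictly increasing from $\rho(0)=0$, this forces $t$ below an absolute constant, hence $e^{-t^2/2},p=\Omega(1)$ and $\bar r=\Theta(1-2\eta)$, so $\widehat{1-2\eta}:=\hat{\bar r}$ and a constant $\hat p$ work. \textbf{(c)} Otherwise the $m$-search confirmed a value $\hat m$ with $m/8<\hat m\le m/2$, and, because it did not stop at the cap, $m\ge\hat{\bar r}/(2K)=\Omega(\bar r)$; hence $\rho(t)=m/\bar r=\Omega(1)$, so monotonicity of $\rho$ gives $t\ge t_{\min}>0$, hence $1-2p\ge c_2>0$, hence $1-2\eta = m/(1-2p)=\Theta(\hat m)$; finally $e^{-t^2/2}=\bar r/(1-2\eta)=\Theta(\hat{\bar r}/\hat m)$, and \Cref{fact:bound-on-p} turns a constant-factor estimate of $p_t:=e^{-t^2/2}$ into one of $p=\Theta\!\big(p_t/\max\{1,\sqrt{2\log(1/p_t)}\}\big)$, so $\hat p:=c\,\hat{\bar r}/\big(\hat m\,\max\{1,\sqrt{2\log(\hat m/\hat{\bar r})}\}\big)$ is the required estimate. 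In each case one checks the output is a lower bound within an absolute constant factor $C$, and a final union bound over the (at most $O(\log(1/((1-2\eta)p)))$) iterations of the two searches gives overall success probability $\ge 1-\delta$.

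I expect the main obstacle to be disentangling $1-2\eta$ from $e^{-t^2/2}$: the Chow-parameter norm reveals only their product $\bar r$, and the obvious second observable $\E[y]=(1-2\eta)(1-2p)$ becomes tiny, hence expensive to estimate, precisely when $p$ is near $1/2$. The key point that resolves this is that a small value of $m/\bar r=(1-2\Phi(-t))e^{t^2/2}$ already certifies that $t$, and therefore both $e^{-t^2/2}$ and $p$, are bounded below by absolute constants, so no disentangling is needed in that regime; implementing this observation as the cap $m_j<\hat{\bar r}/K$ keeps the second search's cost at $\widetilde O(1/\bar r^2)$, which \Cref{fact:bound-on-p} bounds by the allowed $\widetilde O(d/(p^2(1-2\eta)^2))$. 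The remaining work — tracking the explicit constants in the three cases, verifying monotonicity and the elementary inequalities for $\rho$, and the union bounds — is routine.
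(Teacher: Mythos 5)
Your proposal is correct and follows essentially the same route as the paper's own proof sketch: estimate the Chow-parameter norm $(1-2\eta)e^{-t^2/2}$ via the doubling search of \Cref{alg:initialization}, adaptively estimate the label bias $\E[y]$, and branch on whether $p$ is near $1/2$ (in which case the Chow norm already estimates $1-2\eta$) or bounded away from it (in which case $\E[y]$ estimates $1-2\eta$ and the ratio recovers $p$). If anything, your write-up is slightly more careful than the paper's sketch in distinguishing $e^{-t^2/2}$ from $p$ via \Cref{fact:bound-on-p} and in the sample count ($\widetilde O(1/\bar r^2)$ rather than $O(1/\hat z)$) needed for the comparison step.
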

\begin{proof}[Proof Sketch]
    We note that \Cref{alg:initialization}, can, in fact, be used to get an estimate of $(1-2\eta)p$ instead of only $p$. Therefore, \Cref{alg:initialization} outputs $\hat{z}$ so that $2\hat{z}\geq (1-2\eta)p\geq \hat{z}$. 

    We assume for simplicity that $f(\x)$ is positively biased, i.e., 
    $\E_{\x\sim \normal}[f(\x)]\geq0$. Note that $\E_{(\x,y)\sim \D}[\1\{y=b\}]=(1-2\eta)\pr_{\x\sim \normal}[f(\x)=b]+\eta$,
    where $b\in \{\pm 1\}$. Because $f(\x)$ is positively biased, we have that $\pr_{\x\sim \normal}[f(\x)=1]\geq 1/2$. Denote the random variable $Z$ as $Z=\1\{y=1\}-1/2$. Note that $\E[Z]\geq (1/2)(1-2\eta)(\pr[f(\x)=1]-1/2)$. Note that if $p$ is less than a sufficiently small constant, then  $\E[Z]\geq (1/4)(1-2\eta)$, whereas if $p=1/2$, this expectation does not give any useful information. 
    Note that by standard Chernoff bounds, using $O(N\log(1/\delta))$ samples, where $N$ is a parameter, we can get estimates $\widehat{Z}$, so that $ \E[Z]\geq \widehat{Z}-\sqrt{1/N}$. By letting $N=O(1/\hat{z})$, we can distinguish between the cases that $(1-2\eta)(\pr[f(\x)=1]-1/2)\leq \hat{z}$, in which case we have that $p$ is close to $1/2$, therefore $\hat{z}$ is an estimate of $1-2\eta$ that satisfies our requirements. Otherwise, we are in the case where $\E[Z]\geq (1/4)(1-2\eta)$. Therefore, we run the following algorithm: In each round $s$, we draw $N_s=O(2^{s}\log(\log(1/\eps)/\delta))$ samples, and we check whether $\widehat{Z}_s-\sqrt{1/N_s}\geq 1/2\widehat{Z}_s$. If $\widehat{Z}_s-\sqrt{1/N_s}<1/2\widehat{Z}_s$, we continue, otherwise we stop and return $1/2\widehat{Z}_s$, which is an effective lower bound of $(1-2\eta)$. The number of rounds is at most $\log(1/\eps)$, so by union bound, the probability of success is at least $1-\delta$.  After we have estimated an effective lower bound for $(1-2\eta)$, we can get an estimate for the value of $p$, using the estimator $\hat{z}$ (recalling that $2\hat{z}\geq (1-2\eta)p\geq \hat{z}$).
\end{proof}

While \Cref{lem:estimating-eta} is sufficient for ensuring that the number of samples our algorithm draws is not higher than when assuming the knowledge of $\eta,$ it is not sufficient for correctly translating the 0-1 error and guaranteeing that it is bounded by $\eta + O(\eps).$ However, it is not hard to verify that if we run the Optimization procedure for an estimate $\hat{\eta}$ that is within $\pm \eps$ of $\eta,$ then the correct 0-1 error bound of $\eta + O(\eps)$ would follow. This is resolved by simply running the entire optimization component of the algorithm (including all calls to \Cref{alg:optimization}) for a grid of $\eps$-separated values of $\eta$ in the range $(0, 1/2),$ which must contain the true value of $\eta$. It is immediate that this increases the runtime (and the number of hypothesis halfspaces) by a factor $O(1/\eps)$. Yet the same number of samples suffices for the optimization and testing, as all that we require is that at least one hypothesis is constructed using estimates of $\eta$ and $t$ that are sufficiently close to their true values (by order-$\eps$ and order-$\eps^2,$ respectively, as discussed before).

\section{Omitted Content from \Cref{sec:sq}}\label{app:sq}
\subsection{Background on Hermite Polynomials} \label{app:hermite}

We define the standard $L^p$ norms with respect to the Gaussian measure, 
i.e., $\|g\|_{L^p} = ( \E_{\x \sim \normal} [ |g(\x)|^p)^{1/p}$.
We denote by $L^2(\normal)$ the vector space of all functions $f:\R^d
\to \R$ such that $\E_{\vec x \sim \normal}[f^2(x)] < \infty$.  The usual
inner product for this space is
$\E_{\vec x \sim \normal}[f(\vec x) g(\vec x)]$.
While, usually one considers the probabilist's or physicist's Hermite polynomials,
in this work we define the \emph{normalized} Hermite polynomial of degree $i$ to be
\(
\He_0(x) = 1, \He_1(x) = x, \He_2(x) = \frac{x^2 - 1}{\sqrt{2}},\ldots,
\He_i(x) = \frac{\widehat{\He}_i(x)}{\sqrt{i!}}, \ldots
\)
where by $\widehat{\He}_i(x)$ we denote the probabilist's Hermite polynomial of degree $i$. The unnormalized Hermite polynomials are defined as $  \widehat{\He}_i(z)\exp(-z^2/2)=(-1)^i \frac{\d^i \exp(-z^2/2)}{\d z^i}$.
The normalized Hermite polynomials $\He_{1}, \He_2,\ldots,\He_{i},\ldots$ form a complete orthonormal basis for the
single dimensional version of the inner product space defined above. To get an
orthonormal basis for $L^2(\normal)$, we use a multi-index $V\in \N^d$
to define the $d$-variate normalized Hermite polynomial as
$\He_V(\vec x) = \prod_{i=1}^d \He_{v_i}(x_i)$.  
The total degree of $\He_V$ is
$|V| = \sum_{v_i \in V} v_i$.
Given a function $f \in L^2$, we compute its Hermite coefficients as
\(
\hat{f}(V) = \E_{\vec x\sim \normal} [f(\vec x) \He_V(\vec x)]
\)
and express it uniquely as
\(
\sum_{V \in \N^d} \hat{f}(V) \He_V(\vec x).
\)

\subsection{Additional Background on the SQ Model}

\noindent To define the SQ dimension, we need the following definition.

\begin{definition}[Pairwise Correlation] \label{def:pc}
The pairwise correlation of two distributions with probability density functions (pdfs)
$D_1, D_2 : \mathcal{X} \to \R_+$ with respect to a distribution with pdf $D:\mathcal{X} \to \R_+$,
where the support of $D$ contains the supports of $D_1$ and $D_2$,
is defined as $\chi_{D}(D_1, D_2) + 1 \eqdef \int_{x\in\mathcal{X} } D_1(x) D_2(x)/D(x)\d x$.
We say that a collection of $s$ distributions $\setDis = \{D_1, \ldots , D_s \}$ over $\mathcal{X} $
is $(\gamma, \beta)$-correlated relative to a distribution $D$ if
$|\chi_D(D_i, D_j)| \leq \gamma$ for all $i \neq j$, and $|\chi_D(D_i, D_j)| \leq \beta$ for $i=j$.
\end{definition}

\noindent The following notion of dimension effectively characterizes the difficulty of the decision problem.

\begin{definition}[SQ Dimension] \label{def:sq-dim}
For $\gamma ,\beta> 0$, a decision problem $\mathcal{B}(\setDis,D)$, 
where $D$ is fixed and $\setDis$ is a family of distributions over $\mathcal{X}$,
let $s$ be the maximum integer such that there exists
$\setDis_D \subseteq \setDis$ such that $\setDis_D$ is $(\gamma,\beta)$-correlated
relative to $D$ and $|\setDis_D|\ge s$.
We define the {\em Statistical Query dimension} with pairwise correlations $(\gamma, \beta)$
of $\mathcal{B}$ to be $s$ and denote it by $\mathrm{SD}(\mathcal{B},\gamma,\beta)$.
\end{definition}

\noindent The connection between SQ dimension and lower bounds is captured by the following lemma.

\begin{lemma}[\cite{FeldmanGRVX17}] \label{lem:sq-from-pairwise}
Let $\mathcal{B}(\setDis,D)$ be a decision problem, 
where $D$ is the reference distribution and $\setDis$ is a class of distributions over $\mathcal{X}$. 
For $\gamma, \beta >0$, let $s= \mathrm{SD}(\mathcal{B}, \gamma, \beta)$.
Any SQ algorithm that solves $\mathcal{B}$ with probability at least $2/3$ 
requires at least $s \cdot \gamma /\beta$ queries to the $\mathrm{VSTAT}(1/\gamma)$ oracle.
\end{lemma}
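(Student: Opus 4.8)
This is a classical fact, which I would reprove via the \emph{simulation (lazy-adversary)} argument standard in the SQ literature. Fix an SQ algorithm $\mathcal A$ that solves $\mathcal B(\setDis,D)$ with probability at least $2/3$, and let $\setDis_D=\{D_1,\dots,D_s\}\subseteq\setDis$ be a $(\gamma,\beta)$-correlated subfamily realizing $s=\mathrm{SD}(\mathcal B,\gamma,\beta)$. Consider the oracle that answers every query $q$ by the reference value $\E_D[q]$. This is a legal $\mathrm{VSTAT}(1/\gamma)$ response when the true distribution is $D$, and it is also legal for a given $D_i$ precisely when $|\E_{D_i}[q]-\E_D[q]|\le\max\{\gamma,\sqrt{\gamma\,\mathrm{Var}_{D_i}[q]}\}$; call $D_i$ \emph{eliminated by $q$} otherwise. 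Since these oracle answers do not depend on the true distribution, running $\mathcal A$ against this oracle yields, for each setting $r$ of $\mathcal A$'s internal coins, a fixed transcript and output, and in particular $\mathcal A$ answers ``$D$'' with probability $\ge 2/3$ over $r$. So it suffices to prove a per-query elimination bound and then run a counting argument.

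\textbf{Per-query elimination bound (the crux).} I would show that for \emph{any} single query $q$, the number of indices $i\in[s]$ eliminated by $q$ is $O(\beta/\gamma)$. Writing $\rho_i\eqdef D_i/D-1$, one has $\E_D[\rho_i]=0$, $\E_D[\rho_i\rho_j]=\chi_D(D_i,D_j)$ (so $|\E_D[\rho_i\rho_j]|\le\gamma$ for $i\ne j$ and $\E_D[\rho_i^2]\le\beta$), and $\Delta_i\eqdef\E_{D_i}[q]-\E_D[q]=\E_D[q\rho_i]$. For an eliminated index $i$ one gets both $\Delta_i^2>\gamma^2$ and $\Delta_i^2>\gamma\,\mathrm{Var}_{D_i}[q]$, the latter equivalently $\E_{D_i}[(q-\E_D[q])^2]<(1+\gamma^{-1})\Delta_i^2$. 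The main obstacle is here: a naive Cauchy--Schwarz against $g=\sum_i\sgn(\Delta_i)\rho_i$ only gives $|A|\gamma^2\lesssim|A|\beta+|A|^2\gamma$, which is vacuous because the off-diagonal correlation mass ($\sim|A|^2\gamma$) swamps everything; the variance half of the $\mathrm{VSTAT}$ tolerance must be exploited to rule this out. Intuitively, if $\mathrm{Var}_{D_i}[q]$ were tiny then $D_i$ would be concentrated on a level set of $q$ of $D$-mass $\ge 1/(1+\beta)$ (since $\chi^2(D_i\|D)\le\beta$), so very few such $D_i$ with distinct $\Delta_i$ can coexist, while if $\mathrm{Var}_{D_i}[q]$ is not tiny, the constraint $\Delta_i^2>\gamma\,\mathrm{Var}_{D_i}[q]$ forces $\Delta_i$ to be large relative to the scale of $q$ under $D_i$. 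I would formalize this tradeoff (splitting $A$ according to whether $\gamma$ or $\sqrt{\gamma\,\mathrm{Var}_{D_i}[q]}$ is the binding tolerance and bounding $\E_D[q^2\rho_i]=\E_{D_i}[q^2]-\E_D[q^2]$ by Cauchy--Schwarz) along the lines of the computation in \cite{FeldmanGRVX17}, to conclude $|A|=O(\beta/\gamma)$.

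\textbf{Counting.} Given the $O(\beta/\gamma)$ per-query bound: for each coin setting $r$, the transcript produced against the lazy oracle eliminates a set $K(r)$ of at most $(\#\text{ queries})\cdot O(\beta/\gamma)$ distributions. If $\mathcal A$ always makes fewer than $c\,s\gamma/\beta$ queries for a small absolute constant $c$, then $\E_{i\sim[s]}\big[\Pr_r[D_i\in K(r)]\big]=\E_r[|K(r)|]/s<1/3$, so there is an index $i^\star$ with $\Pr_r[D_{i^\star}\in K(r)]<1/3$. For every $r$ with $D_{i^\star}\notin K(r)$, the lazy oracle is a legal $\mathrm{VSTAT}(1/\gamma)$ oracle on input $D_{i^\star}$ and reproduces verbatim the execution on input $D$; hence on input $D_{i^\star}$ the algorithm answers ``$D$'' with probability $\ge 2/3-1/3=1/3$, contradicting that it solves $\mathcal B$ (which requires answering ``$\setDis$'' with probability $\ge 2/3$ on $D_{i^\star}\in\setDis$). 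Therefore $\mathcal A$ makes $\Omega(s\gamma/\beta)$ queries to $\mathrm{VSTAT}(1/\gamma)$; optimizing the constants (as in \cite{FeldmanGRVX17}) yields the stated $s\gamma/\beta$ bound. The difficulty is concentrated entirely in the per-query elimination bound, since the reduction and the counting are routine.
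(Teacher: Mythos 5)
The paper itself does not prove this lemma: it is imported verbatim from \cite{FeldmanGRVX17}, so the only meaningful comparison is with the standard proof in that reference. Your outer skeleton is exactly that proof: answer every query with the reference value $\E_D[q]$, observe that this is simultaneously a legal $\mathrm{VSTAT}(1/\gamma)$ answer for every $D_i$ that the query does not ``eliminate,'' bound the number of distributions eliminated per query, and finish by averaging over the $(\gamma,\beta)$-correlated family and over the algorithm's coins (your constants at the end need the strict inequality $\pr_r[D_{i^\star}\in K(r)]<1/3$ to yield a failure probability strictly above $1/3$, but that is cosmetic).

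The genuine gap is that the crux --- the claim that a single query to $\mathrm{VSTAT}(1/\gamma)$ eliminates at most $O(\beta/\gamma)$ members of a $(\gamma,\beta)$-correlated family --- is not proved. You correctly diagnose that the naive Cauchy--Schwarz against $\sum_i \sgn(\Delta_i)\rho_i$ gives only $|A|^2\gamma^2\lesssim |A|\beta+|A|^2\gamma$, which is vacuous, but what you offer in its place is an intuition (``tiny $\mathrm{Var}_{D_i}[q]$ forces $D_i$ onto a level set of $q$'') plus a deferral to ``the computation in \cite{FeldmanGRVX17}''; the $\chi^2$ bound $\beta$ controls second moments of $D_i/D$, not the kind of concentration on level sets this sketch would need, and it is not clear how to make that route query-independent. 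The actual mechanism in \cite{FeldmanGRVX17} is different in a specific way: after reducing to queries with range $[0,1]$ (and passing to $1-q$ when $\E_D[q]>1/2$), a violation of the $\mathrm{VSTAT}(u)$ tolerance is first converted into a lower bound of the form $\Delta_i^2\gtrsim \E_D[q]/u$, i.e.\ the variance branch is re-expressed in terms of the \emph{reference} mean rather than $\mathrm{Var}_{D_i}[q]$; then Cauchy--Schwarz gives $\sum_{i\in A}|\Delta_i|\le\sqrt{\E_D[q^2]}\sqrt{|A|\beta+|A|^2\gamma}\le\sqrt{\E_D[q]}\sqrt{|A|\beta+|A|^2\gamma}$, the factors of $\E_D[q]$ cancel, and $|A|^2/u\lesssim |A|\beta+|A|^2\gamma$ yields $|A|=O(\beta/\gamma)$ for $u=\Theta(1/\gamma)$. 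Without this conversion step (which is exactly where the variance part of the tolerance is exploited), the per-query bound, and hence the whole lower bound, is unsubstantiated; with it, your counting argument goes through and recovers the stated $s\gamma/\beta$ bound up to the absolute constants of \cite{FeldmanGRVX17}.
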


In order to construct a large set of nearly uncorrelated hypotheses,  
we need the following fact:
\begin{fact}[see, e.g., \cite{DKS17-sq}]\label{fact:near-orth-vec-disc}
Let $d \in \Z_+$. Let $0<c<1/2$.
There exists a collection $\cal{S}$ of $2^{\Omega(d^c)}$ unit vectors in $\R^d$,
such that any pair $\vec v, \vec u\in\cal{S}$, with $\vec v \neq \vec u$, satisfies $|\vec v\cdot \vec u|<d^{-1/2+c}$.
\end{fact}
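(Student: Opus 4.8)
The plan is to prove Fact~\ref{fact:near-orth-vec-disc} by the probabilistic method: sample $N$ vectors independently and uniformly at random from the sphere $S^{d-1}$, and show that for a suitable $N = 2^{\Omega(d^c)}$ the probability that \emph{every} pair has inner product below $d^{-1/2+c}$ is strictly positive, so that a good collection $\mathcal S$ exists.

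The one analytic ingredient I would need is a tail bound for the inner product of two independent uniform points on $S^{d-1}$: for any fixed unit vector $\vec v$ and $\vec u$ drawn uniformly from $S^{d-1}$, and any $\tau\in(0,1)$,
\[
\pr_{\vec u}\big[\,|\vec v\cdot\vec u|\ge \tau\,\big]\;\le\; 2\exp\!\big(-(d-1)\tau^2/2\big).
\]
This is standard; one can obtain it from the explicit density $\propto (1-s^2)^{(d-3)/2}$ of a single coordinate of $\vec u$ on $[-1,1]$, or by writing $\vec u=\vec g/\|\vec g\|_2$ with $\vec g\sim\normal$, bounding $|\vec v\cdot\vec g|$ via the one-dimensional Gaussian tail and $\|\vec g\|_2$ via standard $\chi^2$ concentration around $\sqrt d$. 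By rotational invariance of the uniform measure, the bound is unchanged if $\vec v$ is itself a random vector independent of $\vec u$.

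With this in hand I would set $\tau:=d^{-1/2+c}$ (which lies in $(0,1)$ since $c<1/2$), so that for $d$ large the tail bound reads $\pr[|\vec v\cdot\vec u|\ge d^{-1/2+c}]\le 2\exp(-d^{2c}/4)$. Then I would take $N:=\lceil\exp(d^{2c}/16)\rceil$ i.i.d.\ uniform points $\vec v_1,\dots,\vec v_N$ and union-bound over the $\binom{N}{2}<N^2$ pairs: the probability that some pair violates the separation is at most $N^2\cdot 2\exp(-d^{2c}/4)\le 8\exp(-d^{2c}/8)<1$ for $d$ large. Hence some realization has all pairwise inner products $<d^{-1/2+c}$ (and, the distribution being continuous, all $N$ vectors are distinct almost surely); setting $\mathcal S=\{\vec v_1,\dots,\vec v_N\}$ finishes the proof, since $N\ge\exp(d^{2c}/16)\ge\exp(d^{c}/16)=2^{\Omega(d^c)}$ using $d^{2c}\ge d^{c}$ for $d\ge 1$.

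There is no real obstacle here — it is a routine first-moment computation. The only point worth flagging is that the probabilistic method in fact yields the stronger count $2^{\Theta(d^{2c})}$ at separation $d^{-1/2+c}$ (and this is essentially optimal, since a rank/Gram-matrix argument forbids more than $\mathrm{poly}(d)$ vectors with pairwise inner products $o(1/\sqrt d)$); for the purposes of the stated fact one simply observes that $2^{\Omega(d^{2c})}$ already implies the claimed $2^{\Omega(d^c)}$. An explicit construction via random linear codes of length $d$ and rate $\Theta(d^{2c-1})$ gives the same parameters, but the sampling argument above is the cleanest route.
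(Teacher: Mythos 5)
Your argument is correct, and it is essentially the standard proof of this fact: the paper itself states it without proof, citing \cite{DKS17-sq}, where the construction is likewise obtained by choosing random vectors (e.g., uniform on the sphere or with random $\pm 1/\sqrt{d}$ coordinates) and combining a sub-Gaussian tail bound for pairwise inner products with a union bound. Your parameter choices check out (indeed giving the stronger count $2^{\Omega(d^{2c})}$, which subsumes the stated $2^{\Omega(d^c)}$), so there is nothing to correct.
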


\subsection{Omitted Details from the Proof of \Cref{thm:sq-theorem}}
\begin{claim}\label{app:missing-claim1}
    Let $f_{\vec v}(\x)=\sign(\vec v\cdot\x-t)$ and $f_{\vec u}(\x)=\sign(\vec u\cdot\x-t)$ and let $c_i$ be the Hermite coefficient of $\He_i$. Then, it holds
    $
      \E_{\x\sim \normal}[f_\v(\x)f_\u(\x)]=\sum_{i=0}^\infty \cos^i\theta \,  c_i^2\;,
    $
    where $\theta$ is the angle between $\vec v$ and $\vec u$.
\end{claim}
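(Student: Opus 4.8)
The plan is to reduce the $d$-dimensional correlation to a two-variable Gaussian computation, expand $\sign(z-t)$ in the Hermite basis, and then invoke the classical orthogonality relation for Hermite polynomials evaluated at a pair of correlated standard Gaussians. First, by rotational invariance of $\normal$ I may assume $\v=\vec e_1$ and $\u=\cos\theta\,\vec e_1+\sin\theta\,\vec e_2$, so that $f_\v(\x)=g(\x_1)$ and $f_\u(\x)=g(\cos\theta\,\x_1+\sin\theta\,\x_2)$ with $g(z):=\sign(z-t)$. Since $g$ is bounded, $g\in L^2(\normal)$, hence it has a Hermite expansion $g=\sum_{i\ge 0}c_i\He_i$ converging in $L^2(\normal)$, with $c_i=\E_{z\sim\normal}[g(z)\He_i(z)]$ (exactly the coefficients in the statement) and $\sum_{i\ge 0}c_i^2=\|g\|_{L^2}^2<\infty$ by Parseval.

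The heart of the argument is the identity
\[
\E_{\x_1,\x_2\sim\normal}\bigl[\He_i(\x_1)\,\He_j(\cos\theta\,\x_1+\sin\theta\,\x_2)\bigr]=\delta_{ij}(\cos\theta)^i .
\]
I would prove this via the exponential generating function $\sum_{i\ge 0}\He_i(z)\,s^i/\sqrt{i!}=e^{sz-s^2/2}$: multiplying this series in the variable $s$ (evaluated at $\x_1$) by the corresponding series in $r$ (evaluated at $\cos\theta\,\x_1+\sin\theta\,\x_2$), the product is the exponential of a linear form in $(\x_1,\x_2)$, so $\E[e^{a\x_1+b\x_2}]=e^{(a^2+b^2)/2}$ with $a=s+r\cos\theta$, $b=r\sin\theta$ gives $e^{(s^2+2sr\cos\theta+r^2)/2}\cdot e^{-s^2/2-r^2/2}=e^{sr\cos\theta}$. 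Expanding both sides as power series in $s,r$ and matching the coefficient of $s^i r^j$ yields the displayed formula (the interchange of expectation and the double series is justified by the rapid decay of the Hermite coefficients of $e^{sz}$, i.e.\ by dominated convergence). This is the only genuinely computational step.

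Finally, set $S_N:=\sum_{i\le N}c_i\He_i(\x_1)$ and $T_N:=\sum_{j\le N}c_j\He_j(\cos\theta\,\x_1+\sin\theta\,\x_2)$. Then $S_N\to f_\v$ and $T_N\to f_\u$ in $L^2(\normal)$, so by Cauchy--Schwarz (continuity of the $L^2$ inner product) $\E[S_N T_N]\to\E[f_\v f_\u]$. By the identity above, $\E[S_N T_N]=\sum_{i\le N}c_i^2(\cos\theta)^i$, and since $\sum_{i\ge 0}\lvert c_i^2(\cos\theta)^i\rvert\le\sum_{i\ge 0}c_i^2<\infty$ this partial sum converges to $\sum_{i\ge 0}c_i^2(\cos\theta)^i$, which proves the claim. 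The main obstacle is simply getting the generating-function bookkeeping right; everything else is routine $L^2$ convergence. (As a sanity check, the $i=0$ term is $c_0^2=\E_{\x\sim\normal}[f_\v(\x)]\,\E_{\x\sim\normal}[f_\u(\x)]$, matching the split used immediately after the claim is invoked.)
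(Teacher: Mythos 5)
Your proof is correct, and it follows the same overall skeleton as the paper's (rotate to two dimensions by Gaussian rotational invariance, expand $\sign(z-t)$ in the Hermite basis, and reduce to an orthogonality statement for Hermite polynomials under the rotated Gaussian), but the central step is justified differently. The paper invokes the known rotation expansion $\He_i(x\cos\theta+y\sin\theta)=\sum_{j=0}^i\binom{i}{j}\cos^j\theta\sin^{i-j}\theta\,\He_j(x)\He_{i-j}(y)$ (its \Cref{fct:sum-hermite}, cited from prior work), averages out $\x_2$ so that only the $j=i$ term survives, and then applies orthogonality in $\x_1$; you instead prove the needed correlation identity $\E[\He_i(\x_1)\He_j(\cos\theta\,\x_1+\sin\theta\,\x_2)]=\delta_{ij}\cos^i\theta$ from scratch via the exponential generating function $\sum_i \He_i(z)s^i/\sqrt{i!}=e^{sz-s^2/2}$ and a Gaussian MGF computation. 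Your route is more self-contained (it does not rely on the cited rotation fact, and is essentially a direct Mehler-type calculation), and you are also more careful about the analytic bookkeeping: the paper formally interchanges the infinite sums with expectations, whereas you pass through finite partial sums $S_N,T_N$ and use $L^2$ convergence plus Cauchy--Schwarz, with absolute convergence of $\sum_i c_i^2\cos^i\theta$ from Parseval. The price is the extra generating-function coefficient matching (and its interchange justification), which the paper avoids by quoting the rotation fact; both arguments are equally valid.
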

\begin{proof}
We first need the following standard fact about the rotations of Hermite polynomials (see, e.g.,~Fact D.1 in \cite{DKS17-sq}):
    \begin{fact}\label{fct:sum-hermite}For $\theta\in \R$, it holds 
    $
    \He_{i}(x\cos(\theta) +y\sin\theta)=\sum_{j=0}^i\binom{i}{j}\cos^{j}\theta\sin^{i-j}\theta \He_{j}(x)\He_{i-j}(y)\;.
    $
    \end{fact}
We have that
   \begin{align*}
        \E_{\x\sim \normal}[f_\v(\x)f_\u(\x)]&=\E_{\x_1,\x_2\sim \normal}[\sign(\x_1-t)\sign(\cos\theta  \x_1+\sin\theta \x_2-t)]
        \\&=\E_{\x_1,\x_2\sim \normal}\left[\bigg(\sum_{i=0}^\infty c_i\He_i(\x_1)\bigg)\bigg(\sum_{i=0}^\infty c_i\He_i(\cos\theta \x_1+\sin\theta \x_2)\bigg)\right]\\
        &=\E_{\x_1\sim \normal}\left[\bigg(\sum_{i=0}^\infty c_i\He_i(\x_1)\bigg)\E_{\x_2\sim \normal}\left[\bigg(\sum_{i=0}^\infty c_i\He_i(\cos\theta  \x_1+\sin\theta \x_2)\bigg)\right]\right]
           \\ &=\E_{\x_1\sim \normal}\left[\bigg(\sum_{i=0}^\infty c_i\He_i(\x_1)\bigg)\bigg(\sum_{i=0}^\infty \cos^i\theta c_i\He_i(\x_1)\bigg)\right]=\sum_{i=0}^\infty \cos^i\theta \,  c_i^2\;,
    \end{align*}
     where in the third equality, we used \Cref{fct:sum-hermite} and the orthogonality of the Hermite polynomials with respect to the Gaussian.
\end{proof}
We prove the following.
    \begin{claim}\label{app:clm:hermite-coeff} It holds that 
    $
    \E_{z\sim \normal}[\sign(z-t)\He_i(z)]= 2(i)^{-1/2}\He_{i-1}(t)\exp(-t^2/2)\;.
    $
    \end{claim}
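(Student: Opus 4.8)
The plan is to reduce to the probabilist's (unnormalized) Hermite polynomials $\widehat{\He}_i$, for which the Rodrigues-type identity $\widehat{\He}_i(z)\exp(-z^2/2)=(-1)^i\frac{\d^i}{\d z^i}\exp(-z^2/2)$ makes the integral telescope. The key observation is that $\widehat{\He}_i(z)e^{-z^2/2}$ is an exact derivative: differentiating the identity once gives $\widehat{\He}_i(z)e^{-z^2/2}=\frac{\d}{\d z}\!\left(-\widehat{\He}_{i-1}(z)e^{-z^2/2}\right)$ for every $i\geq 1$. So first I would write $\He_i=\widehat{\He}_i/\sqrt{i!}$ and split the expectation at the threshold:
\[
  \E_{z\sim\normal}[\sign(z-t)\widehat{\He}_i(z)]=\frac{1}{\sqrt{2\pi}}\left(\int_t^{\infty}\widehat{\He}_i(z)e^{-z^2/2}\,dz-\int_{-\infty}^{t}\widehat{\He}_i(z)e^{-z^2/2}\,dz\right).
\]

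Next I would evaluate each half by the fundamental theorem of calculus using the antiderivative $-\widehat{\He}_{i-1}(z)e^{-z^2/2}$. Since for $i\geq 1$ the function $\widehat{\He}_{i-1}(z)e^{-z^2/2}$ is a polynomial times a Gaussian, it vanishes as $z\to\pm\infty$; hence the first integral equals $\widehat{\He}_{i-1}(t)e^{-t^2/2}$ and the second equals $-\widehat{\He}_{i-1}(t)e^{-t^2/2}$, so their difference is $2\widehat{\He}_{i-1}(t)e^{-t^2/2}$. This gives $\E_{z\sim\normal}[\sign(z-t)\widehat{\He}_i(z)]=\tfrac{2}{\sqrt{2\pi}}\widehat{\He}_{i-1}(t)e^{-t^2/2}$.

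Finally I would pass back to the normalized polynomials via $\He_j=\widehat{\He}_j/\sqrt{j!}$: dividing the left-hand side by $\sqrt{i!}$ and substituting $\widehat{\He}_{i-1}(t)=\sqrt{(i-1)!}\,\He_{i-1}(t)$ contributes the factor $\sqrt{(i-1)!}/\sqrt{i!}=i^{-1/2}$, yielding $\E_{z\sim\normal}[\sign(z-t)\He_i(z)]=2\,i^{-1/2}\He_{i-1}(t)\exp(-t^2/2)$, which is the claim (the Gaussian density's $1/\sqrt{2\pi}$ being absorbed into the convention on the right-hand side).

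I do not expect any genuine obstacle here. The only two points needing a word of care are: (i) the vanishing of $\widehat{\He}_{i-1}(z)e^{-z^2/2}$ at $\pm\infty$, which is immediate from its polynomial-times-Gaussian form; and (ii) the implicit restriction $i\geq 1$ (the statement involves $\He_{i-1}$, and the case $i=0$, i.e.\ $\E[\sign(z-t)]$, would be handled separately if ever needed). Alternatively, the same identity follows by integration by parts, pushing the $i$ derivatives off $\widehat{\He}_i(z)e^{-z^2/2}$ onto the piecewise-constant function $\sign(z-t)$ and picking up the single boundary contribution at $z=t$.
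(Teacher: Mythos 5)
Your proposal is correct and follows essentially the same route as the paper: both rest on the Rodrigues identity, the antiderivative relation $\widehat{\He}_i(z)e^{-z^2/2}=\frac{\d}{\d z}\bigl(-\widehat{\He}_{i-1}(z)e^{-z^2/2}\bigr)$, evaluation by the fundamental theorem of calculus, and renormalization by $\sqrt{i!}$; the only cosmetic difference is that you split the integral at $t$ and evaluate both halves, whereas the paper first uses orthogonality ($\E[\widehat{\He}_i(z)]=0$ for $i\geq 1$) to reduce to $2\int_t^{\infty}$. Your parenthetical about the $1/\sqrt{2\pi}$ factor matches the paper's own convention in stating the claim, so there is no substantive discrepancy.
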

    \begin{proof}
        Denote as $\widehat{\He}_i(z)$ the non-normalized Hermite polynomial of order $d$. The Hermite polynomials are defined as follows: 
        \[
        \widehat{\He}_i(z)\exp(-z^2/2)=(-1)^i \frac{\d^i \exp(-z^2/2)}{\d z^i} \;.
        \]
        By taking the derivative over $z$ (which exists as $\He_i$ is a polynomial and $\exp(-z^2/2)$ is differentiable), we have that $\int(\widehat{\He}_i(z)\exp(-z^2/2))=-\widehat{\He}_{i-1}(z)$. Therefore, we have that
        \begin{align*}
             \E_{z\sim \normal}[\sign(z-t)\widehat{\He}_i(z)]&=\int_{z\in \R}\sign(z-t)\widehat{\He}_i(z)G(z)\d z
             \\&=2\int_{t}^{\infty}\widehat{\He}_i(z)G(z)\d z\;,
        \end{align*}
        where we used that $\int_{z\in \R}\widehat{\He}_i(z)G(z)\d z=0$ by the orthogonality of the Hermite Polynomials with respect to the Gaussian measure. Furthermore, using that $\int(\widehat{\He}_i(z)\exp(-z^2/2))=-\widehat{\He}_{i-1}(z)$ we get that
        \begin{align*}
             \E_{z\sim \normal}[\sign(z-t)\widehat{\He}_i(z)]&=2\int_{t}^{\infty}(-\widehat{\He}_{i-1}(z)G(z))'\d z
             \\&=2\widehat{\He}_{i-1}(t)G(t)\;.
        \end{align*}
        By normalizing the Hermite polynomial, we complete the proof of \Cref{app:clm:hermite-coeff}.
    \end{proof}
\subsection{Proof of \Cref{lem:chidiv-bounds}}\label{app:lem:chidiv-bounds}
We restate and prove the following lemma.
\begin{lemma}
Let $D_0$ be a product distribution over 
$\normal\times \{\pm 1\}$, 
where $\pr_{(\x,y)\sim D_0}[y=1]=\pr_{(\x,y)\sim D_{\vec v}}[y=1]=p$.  
We have $\chi_{D_0}(D_{\vec v},D_{\vec u})\leq 2(1-2\eta)(\E[f_{\vec v}(\x)f_{\vec u}(\x)]-\E[f_{\vec v}(\x)]\E[f_{\vec u}(\x)])$ and  
$\chi^2(D_{\vec v},D_0)\leq (1-2\eta)(\E[f_{\vec v}(\x)]-\E[f_{\vec v}(\x)]^2)$.
\end{lemma}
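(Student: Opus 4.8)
The plan is to evaluate both divergences in closed form and then bound the resulting rational expressions crudely. Write $G$ for the density of $\normal$, let $\1_{\vec{v}}(\x):=\1\{f_{\vec{v}}(\x)>0\}$, and set $a_{\vec{v}}(\x):=\eta+(1-2\eta)\1_{\vec{v}}(\x)=\tfrac12\bigl(1+(1-2\eta)f_{\vec{v}}(\x)\bigr)$, so that the formulas stated above become $A_{\vec{v}}=G\,a_{\vec{v}}/q$ and $B_{\vec{v}}=G\,(1-a_{\vec{v}})/(1-q)$, where $q:=\eta+(1-2\eta)p=\E_{\x\sim\normal}[a_{\vec{v}}(\x)]$. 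The point to record first is that $q$, and also $m:=\E_{\x\sim\normal}[f_{\vec{v}}(\x)]=2p-1$, do not depend on $\vec{v}$, because $\vec{v}\cdot\x$ is a standard one-dimensional Gaussian; by hypothesis $q$ is moreover the common value $\pr_{D_0}[y=1]$. Since the sample space is $\R^d\times\{\pm1\}$ with $D_0(\x,1)=q\,G(\x)$ and $D_{\vec{v}}(\x,1)=q\,A_{\vec{v}}(\x)$ (and $D_0(\x,-1)=(1-q)G(\x)$, $D_{\vec{v}}(\x,-1)=(1-q)B_{\vec{v}}(\x)$), the defining integral of $\chi_{D_0}$ splits over the two values of $y$ and all factors of $q$, $1-q$, and $G$ cancel down to
\[
\chi_{D_0}(D_{\vec{v}},D_{\vec{u}})+1=\frac{1}{q}\,\E_{\x\sim\normal}\bigl[a_{\vec{v}}(\x)a_{\vec{u}}(\x)\bigr]+\frac{1}{1-q}\,\E_{\x\sim\normal}\bigl[(1-a_{\vec{v}}(\x))(1-a_{\vec{u}}(\x))\bigr].
\]

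Next I would expand each product using $f_{\vec{v}}(\x)^2=1$ and $\E[f_{\vec{v}}]=\E[f_{\vec{u}}]=m$, substitute $q=\tfrac12(1+(1-2\eta)m)$ and $1-q=\tfrac12(1-(1-2\eta)m)$, and combine over the common denominator $1-(1-2\eta)^2m^2$; a one-line cancellation in the numerator then gives the clean identities
\[
\chi_{D_0}(D_{\vec{v}},D_{\vec{u}})=\frac{(1-2\eta)^2\bigl(\E[f_{\vec{v}}f_{\vec{u}}]-\E[f_{\vec{v}}]\E[f_{\vec{u}}]\bigr)}{1-(1-2\eta)^2\E[f_{\vec{v}}]^2}
\]
and, on specializing to $\vec{u}=\vec{v}$ (so that $\E[f_{\vec{v}}f_{\vec{v}}]=1$),
\[
\chi^2(D_{\vec{v}},D_0)=\frac{(1-2\eta)^2\bigl(1-\E[f_{\vec{v}}]^2\bigr)}{1-(1-2\eta)^2\E[f_{\vec{v}}]^2}.
\]

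To finish, I would lower-bound the denominator by $1-(1-2\eta)^2$, which is a positive absolute constant since $\eta$ is bounded away from $1/2$ (it equals $8/9$ for the value $\eta=1/3$ used in \Cref{thm:sq-theorem}), and use $(1-2\eta)^2\le(1-2\eta)$; this yields $\bigl|\chi_{D_0}(D_{\vec{v}},D_{\vec{u}})\bigr|\le 2(1-2\eta)\,\bigl|\E[f_{\vec{v}}f_{\vec{u}}]-\E[f_{\vec{v}}]\E[f_{\vec{u}}]\bigr|$ and $\chi^2(D_{\vec{v}},D_0)\le(1-2\eta)\bigl(1-\E[f_{\vec{v}}]^2\bigr)=(1-2\eta)\,\mathrm{Var}_{\x\sim\normal}[f_{\vec{v}}(\x)]$, which are exactly the quantities that \Cref{lem:correlation} and \Cref{fact:bound-on-p} then turn into the final numerical bounds in the proof of \Cref{thm:sq-theorem}. (When $\cos\theta\ge0$, the correlation $\E[f_{\vec{v}}f_{\vec{u}}]-\E[f_{\vec{v}}]\E[f_{\vec{u}}]$ is a sum of nonnegative Hermite terms, so the absolute values may be dropped and the signed inequalities of the lemma hold verbatim.) The only genuinely delicate point is the bookkeeping in the first display: $D_0$ is a mixed continuous--discrete measure, and one has to keep the $y$-marginal weights $q$ and $1-q$ straight both when separating the two branches of the integral and when substituting $A_{\vec{v}}=G a_{\vec{v}}/q$ and $B_{\vec{v}}=G(1-a_{\vec{v}})/(1-q)$; once that is set up correctly, the remainder is elementary algebra.
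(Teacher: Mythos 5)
Your proof is correct and takes essentially the same route as the paper's: condition on the label $y$, use that $A_{\vec v}/G \propto 1+(1-2\eta)f_{\vec v}$ (and similarly for $B_{\vec v}$), and reduce both quantities to the Gaussian covariance $\E[f_{\vec v}f_{\vec u}]-\E[f_{\vec v}]\E[f_{\vec u}]$ --- the only difference being that you push the computation to an exact closed form (with denominator $1-(1-2\eta)^2\E[f_{\vec v}]^2=4q(1-q)$) before bounding, whereas the paper bounds the label-conditional correlations directly and then recombines with the mixture weights. Two cosmetic points: keeping that denominator bounded below needs $\eta$ bounded away from $0$, not from $1/2$ as you wrote (harmless here, since the lemma is only invoked at $\eta=1/3$, where your chain of constants checks out), and your second conclusion $(1-2\eta)(1-\E[f_{\vec v}]^2)$ is the correct reading of the lemma, whose ``$\E[f_{\vec v}]-\E[f_{\vec v}]^2$'' is evidently a typo for the variance term.
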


\begin{proof}
 We have that
\begin{align*}
    \chi_{D_0}(D_{\vec v},D_{\vec u})&= \pr_{(\x,y)\sim D_{\vec v}}[y=1]\chi_{\normal}(A_{\vec v},A_{\vec u}) +\pr_{(\x,y)\sim D_{\vec v}}[y=0]\chi_{\normal}(B_{\vec v},B_{\vec u})
    \\&=(1/p)\chi_{\normal}(A_{\vec v},A_{\vec u}) +(1/(1-p))\chi_{\normal}(B_{\vec v},B_{\vec u})\;.
\end{align*} 
We bound each term. Note that by construction $A_{\vec v}(\x)=G(\x)(\eta +(1-2\eta)\1\{f_{\vec v}(\x)>0\})/(\eta +(1-2\eta)\E_{\x\sim \normal}[\1\{f_{\vec v}(\x)>0\}])$. Note that $\1\{f_{\vec v}(\x)>0\}=(f(\x)+1)/2$, therefore  $A_{\vec v}(\x)=G(\x)(1 +(1-2\eta)f_{\vec v}(\x))/(1 +(1-2\eta)\E_{\x\sim \normal}[f_{\vec v}(\x)])$. Therefore,
\[
\frac{A_{\v}(\x)}{G(\x)}-1=\frac{(1-2\eta)}{p}\left(f_{\vec v}(\x)-\E_{\x\sim \normal}[f_{\vec v}(\x)]\right)\;.
\]
Using the above, we get that $\chi_{\normal}(A_{\vec v},A_{\vec u})=(1-2\eta)/p (\E[f_{\vec v}(\x)f_{\vec u}(\x)]-\E[f_{\vec v}(\x)]\E[f_{\vec u}(\x)])$. Similarly, we get that $\chi_{\normal}(B_{\vec v},B_{\vec u})=(1-2\eta)/(1-p) (\E[f_{\vec v}(\x)f_{\vec u}(\x)]-\E[f_{\vec v}(\x)]\E[f_{\vec u}(\x)])$. It remains to bound $\chi^2(D_{\vec v},D_0)$. Note that $\chi^2(D_{\vec v},D_0)=\chi_{D_0}(D_{\vec v},D_\v)$, hence,
$\chi^2(D_{\vec v},D_0)\leq (1-2\eta)\E_{\x\sim \normal}([f_\v(\x)]-\E_{\x\sim \normal}[f_\v(\x)]^2)$.
\end{proof}

\subsection{Reduction of Testing to Learning}
\begin{lemma}[Reduction of Testing to Learning]\label{lem:testing-to-learning}
Any algorithm that learns halfspaces with $\eta=1/3$ RCN noise can be used 
    to solve the decision problem of \Cref{thm:sq-theorem}.
\end{lemma}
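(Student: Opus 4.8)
The plan is to give a black-box reduction showing that a learner for the RCN halfspace problem is at least as powerful as a tester for $\mathcal{B}(\setDis,D_0)$. Given an algorithm $\mathcal A$ that, for the learning problem of \Cref{def:RCN-LTF} with $\eta=1/3$ and a $p$-biased target, outputs (with probability $\ge 2/3$) a low-error hypothesis, and given an input $D'$ promised to equal either $D_0$ or some $D_{\vec v}\in\setDis$, the tester proceeds as follows: (i) run $\mathcal A$ on $D'$ --- in the SQ model by forwarding each of $\mathcal A$'s statistical queries to its own oracle --- and let $h$ be the hypothesis it returns; (ii) estimate $e:=\pr_{(\x,y)\sim D'}[h(\x)\neq y]$ to additive accuracy $\Theta((1-2\eta)p)$, using $\widetilde O(1/p^2)$ fresh samples, or in the SQ model one extra query to $\mathrm{VSTAT}(O(1/p^2))$ with the $[0,1]$-valued query $(\x,y)\mapsto\1\{h(\x)\neq y\}$; (iii) output ``$D'\in\setDis$'' iff this estimate lies below a threshold $T$ placed strictly between the two error regimes below.

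Correctness rests on a simple dichotomy. If $D'=D_{\vec v}$, then by the guarantee of $\mathcal A$, with probability $\ge 2/3$ the returned $h$ has error below the learner's target, hence $e<T$. If instead $D'=D_0$, then $y$ is independent of $\x$: writing $q:=\pr_{D_0}[y=1]=\eta+(1-2\eta)p<1/2$ and $b:=\pr_{\x\sim\normal}[h(\x)=1]$, every hypothesis satisfies $\pr_{D_0}[h(\x)\neq y]=b(1-q)+(1-b)q=q+b(1-2q)\ge q=\eta+(1-2\eta)p$, so $e\ge\eta+(1-2\eta)p>T$ no matter what $\mathcal A$ does. Thus the tester solves $\mathcal B(\setDis,D_0)$ with probability $\ge 2/3$. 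The reduction is moreover resource-preserving in exactly the sense needed: a sample-based learner using $N$ samples yields a tester using $N+\widetilde O(1/p^2)$ samples, and an SQ learner making $Q$ queries to $\mathrm{VSTAT}(u)$ yields an SQ tester making $Q+1$ queries to $\mathrm{VSTAT}(\max\{u,O(1/p^2)\})$; since $1/p^2$ is dominated by the quantity in \Cref{thm:sq-theorem}, the SQ (and sample) lower bound for the testing problem transfers to the learning problem.

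The step I expect to be the main obstacle is opening up the gap in which to place $T$: with $\eta=1/3$ the learner's stated bound $\eta+p/3$ coincides exactly with the floor $q=\eta+(1-2\eta)p$ on $D_0$, leaving no margin and no tolerance for estimation error. The fix is to invoke $\mathcal A$ with a target error slightly below this floor --- e.g.\ $\eta+(1-2\eta)p/2$, which for the learner of \Cref{thm:main} is obtained by passing accuracy parameter $\eps=\Theta(p)$ at no asymptotic cost --- creating a gap of $\Theta((1-2\eta)p)=\Theta(p)$; equivalently, one observes that the constant $1/3$ in the error bound of \Cref{thm:sq-theorem} is not optimized and that proving hardness for the (formally weaker) class of learners achieving error $\le\eta+p/4$ already yields all stated consequences. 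The remaining points are routine: the standard SQ-simulation bookkeeping in step (i), and checking that the query in step (ii) has variance $O(1)$ so that accuracy $\Theta(p)$ genuinely costs only $\mathrm{VSTAT}(O(1/p^2))$.
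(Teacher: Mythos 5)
Your proposal is correct and takes essentially the same route as the paper's proof: run the learner with accuracy parameter $\Theta((1-2\eta)p)$ (the paper uses $\eps=(1-2\eta)p/2$), observe that under $D_0$, where $y$ is independent of $\x$, every hypothesis has error at least $\eta+(1-2\eta)p$, and distinguish the two cases with one additional statistical query estimating the returned hypothesis's error to tolerance $\Theta((1-2\eta)p)$. Your extra bookkeeping (the explicit floor computation $q+b(1-2q)\geq q$ and the $\mathrm{VSTAT}(O(1/p^2))$ accounting) only spells out what the paper leaves implicit.
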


\begin{proof}
Assume that there is an algorithm $\mathcal A$ which 
given $\eps>0$ and distribution $D$ with Gaussian $\x$-
marginals and corrupted with $\eta=1/3$ random classification 
noise, outputs a hypothesis $h$ with 
$\pr_{(\x,y)\sim D}[h(\x)\neq y]\leq \eta+\eps$. 
We can use  $\mathcal A$ to solve the decision problem 
$\mathcal B(D_0,\setDis)$. 
Note that if the distribution were $D_0$, then any hypothesis would get error at least 
$\eta+(1-2\eta)p$ (as $y$ is independent of $\x$). 
If the distribution were one in the set $\setDis$, 
then the algorithm for $\eps=(1-2\eta)p/2$ would give a hypothesis such that $\pr_{(\x,y)\sim D}[h(\x)\neq y]\leq \eta+(1-2\eta)p/2$. 
So making one additional query of tolerance 
$(1-2\eta)p$ would be able to solve the decision problem.
This completes the proof.   
\end{proof}

\subsection{Solving the Decision Problem Efficiently} \label{sec:testing-ub}

In this section, we show that our SQ lower bound (\Cref{thm:sq-theorem}) for the testing problem is, in fact, tight. We prove the following:
\begin{theorem}[Efficient Algorithm for Testing] \label{thm:testing}
   Let $d\in \N$ and $\eps\in(0,1)$ and let $\D$ be a distribution supported on $\R^d\times\{\pm 1\}$ 
   such that $\D_\x$ is the standard Gaussian on $\R^d$. There exists an algorithm that, given  $N= C\sqrt{d}/(\eps^2 \log(1/\eps))$ samples from $\D$, where $C>0$ is a sufficiently large absolute constant,
   distinguishes between the following cases with probability of error at most 1/3:
   \begin{enumerate}[leftmargin=*]
       \item $\x$ is independent of $y$, where $(\x,y)\sim \D$.\label{independent}
       \item $y$ is $f(\x)$ corrupted with RCN for $\eta = 1/3$, 
       where $f$ is an LTF with $\pr[f(\x) = 1] = \eps$.\label{halfspace}
   \end{enumerate}
\end{theorem}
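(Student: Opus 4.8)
The plan is to base the test on the degree-one moment $\vec\mu := \E_{(\x,y)\sim\D}[y\x]\in\R^d$. When $\x$ and $y$ are independent, $\vec\mu=\E[y]\,\E[\x]=\vec 0$. When $y$ is an RCN-corrupted LTF with threshold $t$, the identity established in the proof of \Cref{lem:initialization} (see \Cref{app:eq:w-Chow-parameters}), specialized to $\eta=1/3$, gives $\vec\mu=\sqrt{2/\pi}\,(1-2\eta)\,e^{-t^2/2}\,\wstar$, so $\|\vec\mu\|_2^2=\tfrac{2}{9\pi}e^{-t^2}=:\beta$. Crucially, since $\pr_{\x\sim\normal}[\sgn(\wstar\cdot\x+t)=1]=\eps$ determines $t$, the scalar $\beta$ is known to the algorithm (it depends only on $\eps$). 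Thus the two hypotheses are separated by whether $\|\vec\mu\|_2^2$ is $0$ or $\beta$, and it suffices to estimate $\|\vec\mu\|_2^2$ to additive error $\beta/2$ from $N$ samples.

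For the estimator I would take the U-statistic $U:=\binom{N}{2}^{-1}\sum_{i<j}y_iy_j\langle\x_i,\x_j\rangle$. Since $y_i^2=1$ and $\x_i\sim\normal$ we have $\E[(y\x)(y\x)^\top]=\E[\x\x^\top]=\vec I$, so $U$ is an unbiased estimate of $\|\vec\mu\|_2^2$ in both cases; moreover it can be evaluated in time $O(Nd)$ via $U=\tfrac{N}{N-1}\bigl(\|\tfrac1N\sum_i y_i\x_i\|_2^2-\tfrac1{N^2}\sum_i\|\x_i\|_2^2\bigr)$. Hoeffding's variance formula for U-statistics gives $\mathrm{Var}(U)\le \tfrac{4\zeta_1}{N}+\tfrac{4\zeta_2}{N^2}$ with $\zeta_1=\mathrm{Var}\bigl(\langle y_1\x_1,\vec\mu\rangle\bigr)=\vec\mu^\top(\vec I-\vec\mu\vec\mu^\top)\vec\mu\le\|\vec\mu\|_2^2$ and $\zeta_2=\mathrm{Var}\bigl(\langle y_1\x_1,y_2\x_2\rangle\bigr)\le\E[\|\x_1\|_2^2]=d$, hence $\mathrm{Var}(U)\le 4\|\vec\mu\|_2^2/N+4d/N^2$. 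The algorithm outputs ``LTF'' iff $U\ge\beta/2$. Chebyshev gives $\pr[|U-\E[U]|\ge\beta/2]\le 4\mathrm{Var}(U)/\beta^2$, which in the first case ($\E[U]=0$, $\mathrm{Var}(U)\le 4d/N^2$) bounds the error by $16d/(N^2\beta^2)$ and in the second case ($\E[U]=\beta$, $\mathrm{Var}(U)\le 4\beta/N+4d/N^2$) bounds it by $16/(N\beta)+16d/(N^2\beta^2)$; both are at most $1/3$ once $N\ge C_0\sqrt d/\beta$ for a suitable absolute constant $C_0$ (for large $d$ the $d/(N^2\beta^2)$ term is binding).

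The one quantitative step that requires care is the lower bound $\beta=\Omega\!\bigl(\eps^2\log(1/\eps)\bigr)$, which is exactly what yields the logarithmic gain over the naive $N=\wt O(\sqrt d/\eps^2)$. I would obtain it from Komatsu's inequality (\Cref{fact:bound-on-p}): its upper bound on $p=\eps$ gives $e^{-t^2/2}\ge\sqrt{2\pi}\,t\,\eps$ (using $t+\sqrt{t^2+2}\ge 2t$), so $\beta=\tfrac{2}{9\pi}e^{-t^2}\ge\tfrac49 t^2\eps^2$; the same inequality crudely gives $t\le\sqrt{2\log(1/\eps)}$, and inserting this into Komatsu's \emph{lower} bound on $p$ (which rearranges to $e^{-t^2/2}\le\eps\sqrt{\pi/2}\,(t+\sqrt{t^2+4})$) forces $t^2\ge\log(1/\eps)-O(\log\log(1/\eps))\ge\tfrac12\log(1/\eps)$ for all $\eps$ below a small absolute constant, whence $\beta\ge\tfrac29\eps^2\log(1/\eps)$. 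Combining with the previous paragraph, $N=C\sqrt d/(\eps^2\log(1/\eps))$ suffices for $C$ a sufficiently large absolute constant. Finally, one may assume $\eps\le 1/2$ by replacing $y$ with $-y$ (this preserves independence, negates the LTF, sends $\eps\mapsto 1-\eps$, and preserves RCN), and the regime where $\eps$ is bounded below by an absolute constant is only easier, since there $\beta$ is itself an absolute constant and $N=\Omega(\sqrt d)$. The two-sided Komatsu estimate is a routine but mildly delicate constant-chase; everything else is a textbook second-moment argument, so I anticipate no serious obstacle.
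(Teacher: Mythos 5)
Your proposal is correct and takes essentially the same route as the paper: both tests are second-moment tests on the empirical degree-one Chow vector (your $U$ is just $\|\frac1N\sum_i y_i\x_i\|_2^2$ with the diagonal terms removed), and the crucial quantitative ingredient is identical — $\|\E[y\x]\|_2 = \sqrt{2/\pi}(1-2\eta)e^{-t^2/2} = \Theta(\eps\sqrt{\log(1/\eps)})$ via \Cref{app:eq:w-Chow-parameters} and \Cref{fact:bound-on-p}, which is where the extra $\log(1/\eps)$ factor comes from. The differences are cosmetic: you debias and apply Chebyshev with generic U-statistic variance bounds, whereas the paper keeps the raw statistic, thresholds at $d/N + c\eps^2\log(1/\eps)$, and exploits that $y\x$ is exactly standard Gaussian under independence (and in the orthogonal complement of $\wstar$ under the alternative); like the paper's proof, your argument really covers the intended regime where $\eps$ is the bias (i.e., $\eps \le 1/2$), since for $\eps \to 1$ the stated sample size (parameterized by $\eps$ rather than $\min\{\eps,1-\eps\}$) is not addressed by either argument.
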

\begin{proof}
    Let $\vec Z=y\x$ be the random variable where $(\x,y)\sim \D$ and let $\vec Z_N=(1/N)\sum_{i=1}^N \x^{(i)}y^{(i)}$ be the random variables where  $(\x^{(i)},y^{(i)})$, for $i\in[N]$ are samples drawn from $\D$. 
    The tester works as follows: 
    If $\|\vec Z_N\|_2^2> d/N +c\eps^2\log(1/\eps)$, where $c>0$ is a sufficiently small universal constant, we answer that we are in Case \ref{halfspace}, otherwise that we are in the Case \ref{independent}. We prove the correctness of the algorithm below.
    
    We note that if we are in the Case \ref{independent}, 
    then $Z$ follows the standard Gaussian distribution. 
    To see that, we show the following simple claim:
    \begin{claim}\label{clm:characteristic}
        Let $\x$ be distributed as standard normal and let $y$ supported in $\{\pm 1\}$ be a random variable independent of $\x$, then $y\x$ is distributed as standard normal.
    \end{claim}
    \begin{proof}
        We assume that $y=1$ with probability $1-\eta$ for some $\eta\in(0,1)$. Let $\phi_A(t)=\E[\exp(i t A)]$ be the characteristic function of $A$. Then, we have that $\phi_{y\x}(t)=\E[\exp(i t y\x)]=(1-\eta)\E[\exp(i t \x)]+\eta\E[\exp(-i t \x)]=(1-\eta)\phi_{\x}(t)+\eta\phi_{-\x}(t)=\phi_\x(t)$, where in the last equality we used that $\phi_{-\x}(t)=\phi_{\x}(t)$ as the standard normal distribution is symmetric. Therefore,
        $\phi_{y\x}(t)=\phi_\x(t)$, hence the distribution of $y\x$ and $\x$ is the same.
    \end{proof}
    From \Cref{clm:characteristic}, we have that $\vec Z$ follows standard normal distribution if we are in Case \ref{independent} and therefore $\vec Z_N$ follows $\normal(\vec 0,\vec I/N)$. Hence, $\|\vec Z_N\|_2^2$ has mean $d/N$ and standard deviation $O(\sqrt{d}/N)$. By Chebyshev's inequality, we answer correctly in this case with a probability of at least $2/3$.

    We next analyze the case where $Z$ is in Case \ref{halfspace}. Let $f(\x)=\sign(\vec v\cdot\x+t)$ be the defining halfspace. Then for any $\vec u$ orthogonal to $\vec v$, we have that the random variables $y$ and $(\x\cdot\vec u)$ are independent as $y$ only depends on $\x\cdot\vec v$. Therefore, by \Cref{clm:characteristic}, we have that $y(\x\cdot \vec u)$ is standard normal, therefore $y\x^{\perp_{\vec v}}$ is standard $(d-1)$-dimensional normal. Furthermore, note that from \Cref{app:eq:w-Chow-parameters} that $\E[y \x \cdot \vec v]=\Theta(\eps\sqrt{\log(1/\eps)})$ and $\var[y \x \cdot \vec v]=\Theta(1)$. Therefore, we can write $\|\vec Z_N\|_2^2 = (1/N)(\sum_{i=1}^{d-1}\vec g_i^2 + \vec E^2)$, where $\vec g_i$ are distributed as standard normal and $\vec E^2$ is the contribution due to the noise. We show that with probability at least $2/3$, it holds that $\|\vec Z_N\|_2^2\geq  d/N +c\eps^2\log(1/\eps)$. Note that with probability at least $9/10$, we have that $|\vec E|=\Theta(\eps\sqrt{\log(1/\eps)})$ by Chebyshev's inequality. Due to the independence of the directions, the random variables $\vec g_i$ for $i=1,\ldots,d-1$ are independent of $\vec E$. Hence, conditioned on the event that $|\vec E|=\Theta(\eps\sqrt{\log(1/\eps)})$, we have that  $\|\vec Z_n\|_2^2$ has mean $(d-1)/N+\Theta(\eps^2\log(1/\eps))$ and standard deviation $O(\sqrt{d}/N)$. Hence, again the tester would succeed with a probability of at least $2/3$.
\end{proof}

\subsection{SQ Algorithm for Learning Halfspaces with RCN with Exponential Number of Queries}

Here we show that there exists a query-inefficient SQ algorithm 
that can be simulated with near-optimal sample complexity 
of $\tilde{O}(d/\eps)$.

\begin{lemma}[Inefficient SQ Algorithm] \label{lem:SQ-ub}
There is an SQ algorithm that makes $2^{O(d)\polylog(1/\eps)}$ queries to $\mathrm{VSTAT}(1/\eps)$, 
 and learns the class of halfspaces on $\R^d$ in the presence of RCN with $\eta=1/3$ with error at most $\eta+\eps$.
\end{lemma}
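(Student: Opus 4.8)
The plan is a multi-scale ``doubling'' search over a cover of the hypothesis class, in which all purely geometric quantities --- Gaussian measures of the symmetric difference of two \emph{known} halfspaces --- are computed by the algorithm internally, so that the only actual statistical queries are for rare events of probability $O(\eps)$, which $\mathrm{VSTAT}(O(1/\eps))$ can answer to additive accuracy $O(\eps)$. For two halfspaces $h,h'$ write $\Delta(h,h') := \pr_{\x\sim\normal}[h(\x)\neq h'(\x)]$, and recall that $\mathrm{err}_{0-1}^\D(h) = \eta + (1-2\eta)\Delta(h,f) = \eta + \tfrac13\Delta(h,f)$, so it suffices to output $h$ with $\Delta(h,f)\le 3\eps$. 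First I would fix scales $\eps_j = 2^{-j}$ for $j=0,1,\dots,J$ with $J=\lceil\log_2(1/\eps)\rceil+O(1)$ (so $\eps_J\le \eps/4$), and for each $j$ build a finite $\eps_j$-cover $\mathcal N_j$ of $\mathcal C$ in the pseudometric $\Delta$: discretize the unit sphere of weight vectors to angular resolution $\sim\eps_j$ and the threshold to resolution $\sim\eps_j$ over $[-T,T]$ with $T=\Theta(\sqrt{\log(1/\eps)})$, and include the two constant functions $\pm1$. Since $\Delta(h_{\w,t},h_{\w',t'}) \le \theta(\w,\w')/\pi + |t-t'|/\sqrt{2\pi}$ (by \Cref{app:lem:angle-and-error} and the symmetry of the Gaussian), and since for $|t|>T$ the target halfspace has bias at most $\eps/4$ (Komatsu's inequality, \Cref{fact:bound-on-p}) so a constant function already achieves $\Delta\le\eps/4$, this yields $|\mathcal N_j|=(1/\eps_j)^{O(d)}$ and guarantees some $h'\in\mathcal N_j$ with $\Delta(h',f)\le\eps_j$.

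The algorithm maintains an ``anchor'' $h_j$ with $\Delta(h_j,f)\le 2\eps_j$; an initial $h_0$ with $\Delta(h_0,f)\le 1$ is obtained from a constant-size coarse net using direct queries $\1\{h(\x)\neq y\}$ (variance $\le 1$, handled by $\mathrm{VSTAT}(O(1/\eps))$ since at constant scale $O(1)$ accuracy suffices). For the update from $h_j$ to $h_{j+1}$, set $R_h:=\{h\neq h_j\}$ and $q_h(\x,y):=\1\{h(\x)\neq y\}\,\1\{h(\x)\neq h_j(\x)\}$. Conditioning on $\x$ and using $\eta=\tfrac13$ gives $\E_{(\x,y)\sim\D}[q_h] = \tfrac13\Delta(h,h_j)+\tfrac13\pr_\x[h\neq f,\ R_h]$, and since on $R_h$ the hypotheses $h,h_j$ disagree one obtains the identity
\begin{equation*}
\Delta(h,f) = \Delta(h_j,f) + 3\bigl(2\,\E_{(\x,y)\sim\D}[q_h] - \Delta(h,h_j)\bigr).
\end{equation*}
The term $\Delta(h,h_j)$ is a two-dimensional Gaussian integral over two halfspaces the algorithm knows, so it is evaluated internally with no queries; and for any candidate $h\in\mathcal N_{j+1}$ with $\Delta(h,h_j)\le 6\eps_{j+1}$ the query $q_h$ is $\{0,1\}$-valued with $\mathrm{Var}[q_h]\le\E[q_h]\le\Delta(h,h_j)=O(\eps_{j+1})$, so a single query to $\mathrm{VSTAT}(C/\eps_{j+1})$ estimates $\E[q_h]$ to within $\eps_{j+1}/200$ for a large enough absolute constant $C$. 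The algorithm queries $q_h$ for every $h\in\mathcal N_{j+1}$ with $\Delta(h,h_j)\le 6\eps_{j+1}$ --- only $2^{O(d)}$ of them, since they lie in a $\Delta$-ball of radius $6\eps_{j+1}$ covered at scale $\eps_{j+1}$ --- and takes $h_{j+1}$ to minimize the computable quantity $2\,\widehat{\E[q_h]}-\Delta(h,h_j)$. There is some $h'\in\mathcal N_{j+1}$ with $\Delta(h',f)\le\eps_{j+1}$, whence $\Delta(h',h_j)\le\eps_{j+1}+2\eps_j\le 6\eps_{j+1}$, so $h'$ is among the queried candidates; since $2\,\E[q_h]-\Delta(h,h_j)=\tfrac13(\Delta(h,f)-\Delta(h_j,f))$ and the estimated version has additive error $\le\eps_{j+1}/100$, the selected $h_{j+1}$ satisfies $\Delta(h_{j+1},f)\le\Delta(h',f)+\eps_{j+1}/10<2\eps_{j+1}$, closing the induction. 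After $J$ rounds $\Delta(h_J,f)\le 2\eps_J\le\eps/2$, so $\mathrm{err}_{0-1}^\D(h_J)=\eta+\tfrac13\Delta(h_J,f)\le\eta+\eps$. The number of queries is $J\cdot 2^{O(d)} = 2^{O(d)\,\mathrm{polylog}(1/\eps)}$, each to $\mathrm{VSTAT}(C/\eps)=\mathrm{VSTAT}(\widetilde O(1/\eps))$ (the absolute constant $C$ in the $\mathrm{VSTAT}$ parameter is immaterial and is absorbed in the statement); and since the total number of distinct queries is $2^{O(d)\,\mathrm{polylog}(1/\eps)}$, a single i.i.d.\ sample of size $\widetilde O(d/\eps)$ answers all of them within their $\mathrm{VSTAT}$ tolerances with high probability, by Bernstein's inequality and a union bound.

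The main obstacle is precisely \emph{why} $\mathrm{VSTAT}(\widetilde O(1/\eps))$ is enough to reach error $O(\eps)$: a naive query such as $\1\{h(\x)\neq y\}$ has variance $\Theta(1)$, so $\mathrm{VSTAT}(1/\eps)$ pins down its expectation only to accuracy $\Theta(\sqrt\eps)$ --- far too coarse to tell an $\eps$-good hypothesis from a $3\eps$-good one. The resolution, which is the technical heart of the argument, is to never query a constant-variance functional: every comparison is made \emph{relative to a nearby anchor} $h_j$, so the sole query $q_h$ is supported on a set of measure $O(\eps_j)$ and has variance $O(\eps_j)$, while the geometric correction $\Delta(h,h_j)$ that converts this back to the true disagreement is known exactly and costs no samples. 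The anchor is bootstrapped up through $O(\log(1/\eps))$ doubling rounds; the step requiring care is tracking the chain of absolute constants so that the per-round contraction $2\eps_j\to 2\eps_{j+1}$ genuinely holds (in particular through the few early rounds where $6\eps_{j+1}\ge 1$ and only the crude bound $\mathrm{Var}[q_h]\le 1$ is available).
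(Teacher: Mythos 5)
Your proposal is correct, and the key identity it rests on checks out: with $\eta=1/3$ one indeed has $\E[q_h]=\tfrac13\Delta(h,h_j)+\tfrac13\pr[h\neq f,\,h\neq h_j]$, and since on the disagreement region $h=f$ iff $h_j\neq f$, the quantity $2\E[q_h]-\Delta(h,h_j)$ equals $\tfrac13(\Delta(h,f)-\Delta(h_j,f))$, so your anchored, multi-scale selection rule is sound and the doubling induction closes. The paper's own argument is a simpler one-shot version of the same underlying trick: it takes a single $\eps$-cover of size $(1/\eps)^{O(d)}$ and queries the \emph{pairwise difference} $\1\{h_1(\x)\neq y\}-\1\{h_2(\x)\neq y\}$ for cover elements $h_1,h_2$, observing that this query's variance is at most $\pr[h_1(\x)\neq h_2(\x)]$, so $\mathrm{VSTAT}(1/\eps)$ can decide which of the two is better whenever their errors differ by more than $O(\eps)$ plus the disagreement-scaled tolerance; a tournament over the cover then returns an $\eta+O(\eps)$ hypothesis. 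Both arguments hinge on never querying a constant-variance functional but instead charging the query only to a disagreement region; you localize against a single known anchor and compute the geometric correction $\Delta(h,h_j)$ offline as a two-dimensional Gaussian integral, while the paper localizes implicitly by differencing two candidates and lets the variance bound do the work. Your route is more elaborate but constant-explicit and avoids reasoning about comparisons between far-apart pairs; the paper's is shorter and needs no internal Gaussian computations. Two minor loose ends in your write-up, neither fatal: the claim that only $2^{O(d)}$ net points lie in a $\Delta$-ball of radius $6\eps_{j+1}$ is not justified (for large $|t|$ the pseudometric $\Delta$ degenerates, so such a ball can contain $(1/\eps)^{O(d)}$ grid points), but querying all of $\mathcal N_{j+1}$ still fits the stated $2^{O(d)\polylog(1/\eps)}$ query budget; and your queries go to $\mathrm{VSTAT}(C/\eps_{j+1})$ rather than literally $\mathrm{VSTAT}(1/\eps)$, a constant-factor slack that the paper's own sketch also absorbs by rescaling $\eps$.
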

\begin{proof}[Proof Sketch. ]
We note that it is always possible to design an exponential query SQ algorithm 
that achieves the optimal sample complexity of $\tilde{O}(d/\eps)$ (if we were to simulate it with samples). 
One approach is to generate an $\eps$-cover $\mathcal G$ that encompasses all hypotheses in $\R^d$ (with size roughly $(1/\eps)^d$), 
and then utilize the query function $f(\x,y)=\1\{h_1(\x)\neq y\}-\1\{h_2(\x)\neq y\}$ for any $h_1,h_2\in \mathcal G$.
It can be readily observed that the variance of $f$ is at most the probability that $h_1(\x) \neq h_2(\x)$, 
which means that $\mathrm{VSTAT}(1/\eps)$ can distinguish which hypothesis, $h_1$ or $h_2$, yields a smaller error. Note that if we were to simulate this SQ algorithm using samples, we would also need to do a union bound over the set of all hypotheses.
\end{proof}

\section{Lower Bound for Low-Degree Polynomial Testing}\label{app:low-degree}

\subsection{Preliminaries: Low-Degree Method} \label{sec:appendix-low-degree-basics}

We begin by recording the necessary notation, definitions, and facts. This section mostly follows~\cite{brennan2020statistical}.

\paragraph{Low-Degree Polynomials} A function $f : \R^a \to \R^b$ is a polynomial of degree at most $k$ if it can be written in the   
form 
\begin{align*}
    f(x) = (f_1(x), f_2(x), \ldots, f_b(x) )\;,
\end{align*}
where each $f_i : \R^a \to \R$ is a polynomial of degree at most $k$. We allow polynomials to have random coefficients as long as they are independent of the input $x$. When considering \emph{list-decodable estimation} problems, an algorithm in this model of computation is a polynomial $f: \R^{d_1 \times n} \to \R^{d_2 \times \ell}$, where $d_1$ is the dimension of each sample, $n$ is the number of samples, $d_2$ is the dimension of the output hypotheses, and $\ell$ is the number of hypotheses returned. On the other hand, \cite{brennan2020statistical} focuses on \emph{binary hypothesis testing} problems defined in \Cref{def:hypothesis-testing-general}. 

A degree-$k$ polynomial test for \Cref{def:hypothesis-testing-general} is a degree-$k$ polynomial $f : \R^{d \times n} \to \R$ and a threshold $t \in \R$. The corresponding algorithm consists of evaluating $f$ on the input $x_1, \ldots, x_n$ and returning $H_0$ if and only if $f(x_1, \ldots, x_n) > t$.
\begin{definition}[$n$-sample $\eps$-good distinguisher]
    We say that the polynomial  $p : \R^{d\times n} \mapsto \R$ is an $n$-sample $\eps$-distinguisher 
    for the hypothesis testing problem in \Cref{def:hypothesis-testing-general} if 
    \[|{\E_{X \sim D_0^{ \otimes n }} [p(X)]  - \E_{u \sim \mu} \E_{X \sim D_{u}^{\otimes n}} [p(X)] }| \geq \eps \sqrt{ \var_{X \sim D_0^{\otimes n}} [p(X)]}.\] We call $\eps$ the \emph{advantage} of the distinguisher.
\end{definition}
Let $\mathcal{C}$ be the linear space of polynomials with  a degree at most $k$. The best possible advantage is given by the \emph{low-degree likelihood ratio}\begin{equation*}
    \max_{\substack{p \in \mathcal C \\  \E_{X \sim D_0^{\otimes n}}[p^2(X)] \leq 1}}  |{\E_{u \sim \mu} \E_{X \sim D_{u}^{\otimes n}} [p(X)] - \E_{X\sim D_0^{ \otimes n }} [p(X)]}| 
    = \snorm{D_0^{ \otimes n }}{\E_{u \sim \mu}\left[(\bar{D}_u^{\otimes n})^{\leq k}\right]  -  1}\;,
\end{equation*}
where we denote $\bar{D}_u = D_u/D_0$ and the notation $f^{\leq k}$ denotes the orthogonal projection of $f$ to $\mathcal C$.

Another notation we will use regarding a finer notion of degrees is the following:  We say that the polynomial $f(x_1,\ldots, x_n) : \R^{d \times n} \to \R$ has \emph{samplewise degree} $(r,k)$ if it is a polynomial, where each monomial uses at most $k$ different samples from $x_1,\ldots, x_n$ and uses degree at most $r$ for each of them.  In analogy to what was stated for the best degree-$k$ distinguisher, the best distinguisher of samplewise degree $(r,k)$-achieves advantage $\snorm{D_0^{ \otimes n }}{\E_{u \sim \mu} [(\bar{D}_u^{\otimes n})^{\leq r,k}] - 1}$ the notation $f^{\leq r,k}$
now means the orthogonal projection of $f$ to the space of all samplewise degree-$(r,k)$ polynomials with unit norm.

We begin by formally defining a hypothesis problem. 
\begin{definition}[Hypothesis testing] \label{def:hypothesis-testing-general}
    Let $D_0$ be a distribution and $\mathcal{S} = \{ D_u \}_{u \in S}$ be a set of distributions on $\mathcal X$. Let $\mu$ be a prior distribution on the indices $S$ of that family. We are given access (via i.i.d.\ samples or oracle) to an \emph{underlying} distribution where one of the two is true:
    \begin{itemize}[leftmargin=*]
        \item $H_0$: The underlying distribution is $D_0$.
        \item $H_1$: First $u$ is drawn from $\mu$ and then the underlying distribution is set to be  $D_u$.
    \end{itemize}
    We say that a (randomized) algorithm solves the hypothesis testing problem if it succeeds with non-trivial probability (i.e.,  greater than $0.9$). 
\end{definition}
\begin{definition} \label{prob:hyp-def}
   Let $D_0$ be the joint distribution over the pairs $(\x,y) \in \R^d\times\{\pm 1\}$ where $\x\sim \mathcal \normal$ and $y \sim D_0(y)$ independently of $\x$. Let $D_{\vec v}$ be the joint distribution over pairs $(\x,y) \in \R^d\times\{\pm 1\}$ where the marginal on $y$ is again $D_0(y)$ but the conditional distribution $E_{\vec v}(\x|1)$ is of the form $A_{\vec v}$ (as in \Cref{thm:sq-theorem}) and the conditional distribution $E_{\vec v}(\x|-1)$ is of the form $B_{\vec v}$ . Define $\mathcal S = \{E_{\vec v} \}_{{\vec v} \in S}$ for $S$ being the set of $d$-dimensional nearly orthogonal vectors from \Cref{fact:near-orth-vec-disc} and let the hypothesis testing problem  be distinguishing between $D_0$ vs. $\mathcal{S}$ with prior $\mu$ being the uniform distribution on $S$.
\end{definition}

In this section, we prove the following:
\begin{theorem}\label{thm:hypothesis-testing-hardness}
        Let $0<c<1/2$. Consider the hypothesis testing problem of \Cref{prob:hyp-def}. For $d \in \Z_+$ with  $d$ larger than an absolute constant, any $n \leq \Omega(d)^{1/2-c}/p^2$ and any even integer $k < d^{c/4}$, we have that
        \begin{align*}
            \snorm{D_0^{ \otimes n }}{\E_{\vec v \sim \mu} \left[(\bar{E}_{\vec v}^{\otimes n})^{\leq \infty,\Omega(k)}\right] - 1}^2 \leq 1\;.
        \end{align*}
    \end{theorem}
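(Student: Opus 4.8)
The plan is to reduce the low-degree norm bound to a pairwise-correlation estimate, exactly as in the SQ case, using the machinery of~\cite{brennan2020statistical} that relates the samplewise low-degree likelihood ratio to the chi-squared correlations between the planted distributions. First I would expand the square: writing $\bar E_{\vec v} = E_{\vec v}/D_0$, the norm $\snorm{D_0^{\otimes n}}{\E_{\vec v}[(\bar E_{\vec v}^{\otimes n})^{\le\infty,\Omega(k)}] - 1}^2$ expands as an average over pairs $\vec v,\vec u$ drawn independently from $\mu$ of the inner product $\langle (\bar E_{\vec v}^{\otimes n})^{\le\infty,\Omega(k)} - 1,\ (\bar E_{\vec u}^{\otimes n})^{\le\infty,\Omega(k)} - 1\rangle_{D_0^{\otimes n}}$. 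Each such inner product can be bounded (following the standard computation, e.g.\ Lemma in~\cite{brennan2020statistical}) by $\sum_{j=1}^{\Omega(k)} \binom{n}{j}\, \chi_{D_0}(E_{\vec v},E_{\vec u})^j$ when $\vec v\neq\vec u$, plus the diagonal contribution $\tfrac{1}{|S|}\sum_{j} \binom{n}{j}\chi^2(E_{\vec v},D_0)^j$ when $\vec v=\vec u$; here I am using that the samples are i.i.d.\ so the tensorized likelihood ratio factorizes, and the samplewise degree restriction $(\infty,\Omega(k))$ is exactly what truncates the sum over the number of "active" samples $j$ at $\Omega(k)$, with no restriction on the per-sample degree.

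Next I would plug in the correlation bounds already established in the proof of Theorem~\ref{thm:sq-theorem}. There it was shown that for $\vec v\neq\vec u$ in the packing $S$ of Fact~\ref{fact:near-orth-vec-disc} (pairwise inner products $< d^{-1/2+c}$), one has $\chi_{D_0}(E_{\vec v},E_{\vec u}) \le C(1-2\eta)\exp(-t^2)d^{-1/2+c}$, and since $p = \Theta(\exp(-t^2/2)/t)$ by Fact~\ref{fact:bound-on-p} this is $\widetilde O(p^2 d^{-1/2+c})$; similarly $\chi^2(E_{\vec v},D_0)\le C(1-2\eta)\exp(-t^2/2) = \widetilde O(p)$. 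So the off-diagonal sum is at most $\sum_{j=1}^{\Omega(k)} n^j (\widetilde O(p^2 d^{-1/2+c}))^j$, and with the hypotheses $n \le \Omega(d)^{1/2-c}/p^2$ and $k < d^{c/4}$ each term is bounded by a constant strictly less than $1$ (after absorbing the polylog factors into the $\Omega(\cdot)$), so the geometric-type sum is $O(1)$; the diagonal term is weighted by $1/|S| = 2^{-\Omega(d^c)}$, which crushes the $\sum_j n^j (\widetilde O(p))^j \le 2^{O(k\log(np))} = 2^{O(d^{c/4}\log d)}$ bound since $d^{c/4}\log d \ll d^c$. Summing the two contributions gives the claimed bound $\le 1$ (after choosing the absolute constants in the $\Omega(\cdot)$'s appropriately).

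The main obstacle I anticipate is matching the exact form of the samplewise low-degree machinery from~\cite{brennan2020statistical} — in particular verifying that the relevant quantity really is $\snorm{D_0^{\otimes n}}{\E_{\vec v}[(\bar E_{\vec v}^{\otimes n})^{\le\infty,k}]-1}$ and that its square expands into the binomial-weighted sum of $\chi_{D_0}$-correlations (this is where the "$(\infty,k)$ samplewise degree" rather than "total degree $k$" enters, and one must check the projection onto samplewise-degree-$(\infty,k)$ polynomials interacts correctly with the product structure of $D_0^{\otimes n}$ and of each $E_{\vec v}^{\otimes n}$). Once that identity is in hand, the rest is the same bookkeeping as the SQ proof: choosing the packing, invoking Lemma~\ref{lem:correlation} and Lemma~\ref{lem:chidiv-bounds} for the per-coordinate correlations, and checking that the parameter regime $n \le \Omega(d)^{1/2-c}/p^2$, $k< d^{c/4}$ makes both the off-diagonal geometric sum and the exponentially-suppressed diagonal sum at most $1$. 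I would also need to be slightly careful that the prior $\mu$ is uniform on $S$ so that the "same index" event has probability exactly $1/|S|$, which is what gives the $2^{-\Omega(d^c)}$ savings on the diagonal.
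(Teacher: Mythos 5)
Your proposal is correct, but it takes a different route from the paper. The paper does not expand the low-degree norm directly: it first converts the pairwise-correlation bounds of \Cref{thm:sq-theorem} into a lower bound on the statistical dimension $\mathrm{SDA}$ (\Cref{lem:SDA-bound}, via the standard conditioning-on-an-event argument with the diagonal term controlled by $1/(|\mathcal S|\pr[E])$), and then invokes Theorem 4.1 of \cite{brennan2020statistical} (\Cref{thm:sdaldlr}) as a black box to pass from $\mathrm{SDA}(\mathcal S,n')\geq 100^k(n/n')^k$ to the bound on $\snorm{D_0^{\otimes n}}{\E_{\vec v}[(\bar E_{\vec v}^{\otimes n})^{\leq \infty,\Omega(k)}]-1}$. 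You instead prove the bound directly: writing $\bar E_{\vec v}^{\otimes n}=\prod_i\bigl(1+(\bar E_{\vec v}-1)(x_i)\bigr)$, using that each factor $\bar E_{\vec v}-1$ has mean zero under $D_0$ and that $D_0^{\otimes n}$ is a product measure, the samplewise-degree-$(\infty,k)$ projection keeps exactly the terms supported on at most $k$ samples, and orthogonality across samples gives $\langle(\bar E_{\vec v}^{\otimes n})^{\leq\infty,k}-1,(\bar E_{\vec u}^{\otimes n})^{\leq\infty,k}-1\rangle_{D_0^{\otimes n}}=\sum_{j=1}^{k}\binom{n}{j}\chi_{D_0}(E_{\vec v},E_{\vec u})^j$ --- the identity you flagged as the main obstacle does hold, for precisely the reason you gave. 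Plugging in $|\chi_{D_0}(E_{\vec v},E_{\vec u})|=\widetilde O(p^2d^{c-1/2})$ off the diagonal and $\chi^2(E_{\vec v},D_0)=\widetilde O(p)$ on the diagonal (weighted by $1/|\mathcal S|=2^{-\Omega(d^c)}$ under the uniform prior) then gives the claim in the stated parameter regime. The trade-off: the paper's route is modular (the SDA bound is reusable and the diagonal is handled once and for all inside the black-box theorem), while yours is self-contained and more elementary, at the cost of re-deriving the tensorization identity. One caveat shared by both arguments: controlling the diagonal requires $k\log(n\beta)\ll d^{c}$ (in the paper, $k\log(n/n')\lesssim d^{c/2}$), so both implicitly need $\log(1/p)$ to be somewhat smaller than allowed by the bare assumption $p\geq 2^{-O(d^c)}$; your estimate $2^{O(k\log(np))}=2^{O(d^{c/4}\log d)}$ silently assumes $\log(1/p)=O(\log d)$, but this is the same level of parameter bookkeeping the paper itself elides, not a flaw specific to your approach.
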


We  need the following variant of the statistical dimension from \cite{brennan2020statistical}, which is closely related to the hypothesis testing problems considered in this section. Since this is a slightly different definition from the statistical dimension ($\mathrm{SD}$) used so far, we will assign the distinct notation ($\mathrm{SDA}$) for it.

\paragraph{Notation} For $f:\R \to \R$, $g:\R \to \R$ and a distribution $D$, we define the inner product $\langle f,g \rangle_D = \E_{X \sim D}[f(X)g(X)]$ and the norm $\snorm{D}{f} = \sqrt{\langle f,f \rangle_D }$.

\begin{definition}[Statistical Dimension] \label{def:SDA}
    For the hypothesis testing problem of \Cref{def:hypothesis-testing-general}, we define the \emph{statistical dimension} $\mathrm{SDA}(\mathcal{S},\mu, n)$ as follows:
    \begin{align*}
        \mathrm{SDA}(\mathcal{S},\mu, n) = \max \left\lbrace q \in \mathbb{N}  :  \E_{u,v \sim \mu}[| \langle  \bar{D}_u,\bar{D}_v  \rangle_{D_0} - 1 | \; | \; E] \leq \frac{1}{n} \; \text{for all events $E$ s.t. } \pr_{u,v \sim \mu}[E] \geq \frac{1}{q^2} \right\rbrace \;.
    \end{align*}
We will omit writing $\mu$ when it is clear from the context.
\end{definition}

\subsection{Proof of \Cref{thm:hypothesis-testing-hardness}}

To prove \Cref{thm:hypothesis-testing-hardness}, we first need to bound the $   \mathrm{SDA}$ of our setting.
The following lemma translates the  $(\gamma,\beta)$-correlation of $\mathcal S$ to a lower bound for the statistical dimension of the hypothesis testing problem.
The proof is very similar to that of Corollary 8.28 of~\cite{brennan2020statistical} but it is given below for completeness.
\begin{lemma} \label{lem:SDA-bound}
    Let $0<c<1/2$ and $d,m \in \Z_+$. Consider the hypothesis testing problem of \Cref{prob:hyp-def}. Then, for any $q \geq 1$,
    \begin{align*}
            \mathrm{SDA}\left( \mathcal{D}, \left( \frac{p^{-1}\Omega(d)^{1/2-c}}{p(q^2 /2^{\Omega(d^{c/2})} + 1)} \right)  \right) \geq q \;.
    \end{align*}
\end{lemma}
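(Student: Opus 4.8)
The plan is to translate the pairwise-correlation bounds already established for the SQ hard instance into a lower bound on the statistical dimension $\mathrm{SDA}$, following the template of Corollary~8.28 of~\cite{brennan2020statistical}. Recall from the proof of \Cref{thm:sq-theorem} --- combining \Cref{lem:correlation}, \Cref{lem:chidiv-bounds} and \Cref{fact:near-orth-vec-disc} --- that for the family $\mathcal D=\{D_{\vec v}\}_{\vec v\in S}$ underlying \Cref{prob:hyp-def}, with $|S|=2^{\Omega(d^c)}$ and prior $\mu$ uniform on $S$, the following bounds hold relative to $D_0$: $|\chi_{D_0}(D_{\vec v},D_{\vec u})|\le\gamma$ for all $\vec v\ne\vec u$ and $\chi^2(D_{\vec v},D_0)\le\beta$, where
\[
\gamma=C(1-2\eta)\exp(-t^2)\,d^{-1/2+c},\qquad \beta=C(1-2\eta)\exp(-t^2/2),
\]
and $t>0$ is fixed by $\pr_{\x\sim\normal}[f_{\vec v}(\x)=1]=p$, so that $p=\Theta(\exp(-t^2/2)/\max\{1,t\})$ by \Cref{fact:bound-on-p}. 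Since $\langle\bar{D}_{\vec u},\bar{D}_{\vec v}\rangle_{D_0}-1=\chi_{D_0}(D_{\vec u},D_{\vec v})$, these are exactly the quantities appearing in \Cref{def:SDA}.

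Next I would fix an arbitrary event $E$ on an i.i.d.\ pair $(\vec u,\vec v)\sim\mu\times\mu$ with $\pr[E]\ge 1/q^2$ and split the conditional expectation according to whether $\vec u=\vec v$:
\[
\E_{\vec u,\vec v\sim\mu}\big[\,|\langle\bar{D}_{\vec u},\bar{D}_{\vec v}\rangle_{D_0}-1|\ \big|\ E\,\big]\ \le\ \gamma\cdot\pr[\vec u\ne\vec v\mid E]+\beta\cdot\pr[\vec u=\vec v\mid E]\ \le\ \gamma+\beta\cdot\pr[\vec u=\vec v\mid E].
\]
Since $\vec u,\vec v$ are i.i.d.\ uniform on $S$ we have $\pr[\vec u=\vec v]=1/|S|$, hence $\pr[\vec u=\vec v\mid E]\le\pr[\vec u=\vec v]/\pr[E]\le q^2/|S|=q^2\,2^{-\Omega(d^c)}$, so the conditional expectation is at most $\gamma+\beta\,q^2\,2^{-\Omega(d^c)}$.

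Finally I would check that $\gamma+\beta\,q^2\,2^{-\Omega(d^c)}\le 1/n$ for $n$ as in the statement. Writing this as $\gamma\big(1+(\beta/\gamma)\,q^2\,2^{-\Omega(d^c)}\big)$, using $\beta/\gamma=\exp(t^2/2)\,d^{1/2-c}$ and $\exp(-t^2/2)=\Theta(p\max\{1,t\})$, the quantity becomes $\Theta\big(p^2\max\{1,t\}^2\,d^{-1/2+c}\big)\big(1+\Theta(\max\{1,t\}/p)\,d^{1/2-c}\,q^2\,2^{-\Omega(d^c)}\big)$; inverting and folding the factor $\max\{1,t\}=\Theta(\sqrt{\log(1/p)})$ and absolute constants into the stated bound yields $1/n$ with $n=p^{-1}\Omega(d)^{1/2-c}/\big(p\,(q^2/2^{\Omega(d^{c/2})}+1)\big)$. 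As the bound on the conditional expectation holds for every event $E$ with $\pr[E]\ge 1/q^2$, \Cref{def:SDA} gives $\mathrm{SDA}(\mathcal D,n)\ge q$, which is the claimed bound. The main obstacle is this last piece of bookkeeping: one must verify that, under the hypothesis $p\ge 2^{-O(d^c)}$ with the hidden constant taken strictly below the packing exponent in $|S|=2^{\Omega(d^c)}$, the factor $\Theta(\max\{1,t\}/p)\,d^{1/2-c}$ is $2^{O(d^c)}$ with a strictly smaller exponent than that of $|S|$, so that $\Theta(\max\{1,t\}/p)\,d^{1/2-c}\,2^{-\Omega(d^c)}\le 2^{-\Omega(d^{c/2})}$; leaving room for this inequality, and for the logarithmic $t$-factors, is exactly why the statement is phrased with the relaxed exponent $2^{\Omega(d^{c/2})}$.
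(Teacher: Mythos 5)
Your proposal is correct and follows essentially the same route as the paper's proof: both invoke the $(\gamma,\beta)$-correlation bounds from the SQ construction, split the conditional expectation over the diagonal event $\{\vec u=\vec v\}$, bound $\pr[\vec u=\vec v\mid E]\leq 1/(|\mathcal S|\pr[E])\leq q^2/2^{\Omega(d^c)}$, and absorb the remaining $\log(1/p)$-type slack (equivalently the $t$-factors) via the relaxed exponent $2^{\Omega(d^{c/2})}$ in the stated threshold. The only difference is cosmetic: you carry $\gamma,\beta$ in the $\exp(-t^2)$, $\exp(-t^2/2)$ form and convert to $p$ at the end, whereas the paper states them directly as $p^2\Omega(d)^{c-1/2}$ and $4p$.
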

    \begin{proof}
    The first part is to calculate the correlation of the set $\mathcal S$. By \Cref{thm:sq-theorem}, we know that the set $\mathcal{S}$ is $(\gamma,\beta)$-correlated with $\gamma = p^2\Omega(d)^{c-1/2} $ and $\beta = 4p$. 
    
    We next calculate the SDA according to \Cref{def:SDA}. We denote by $\bar{E}_{\vec v}$ the ratios of the density of $E_{\vec v}$ to the density of $R$. Note that the quantity  $\langle \bar{E}_{\vec u},\bar{E}_{\vec v} \rangle - 1$ used there is equal to $\langle \bar{E}_{\vec u} - 1,\bar{E}_{\vec v}  - 1\rangle$. Let $E$ be an event that has $\pr_{\vec u,\vec v \sim \mu}[E] \geq 1/q^2$. For $d$ sufficiently large we have that
    \begin{align*}
        \E_{u,v \sim \mu} [| \langle \bar{E}_{\vec u} ,\bar{E}_{\vec v} \rangle - 1 | E ] &\leq \min\left( 1,\frac{1}{|\mathcal{S}_{}| \pr[E]} \right) \beta 
        +\max\left( 0,1-\frac{1}{|\mathcal{S}_{}| \pr[E]} \right) \gamma \\
         &\leq p\left( \frac{q^2}{2^{\Omega(d^c)}}  + \frac{p}{\Omega(d)^{1/2-c}}\right)= p\left( \frac{p^{-1}\Omega(d)^{1/2-c}}{q^2 /2^{\Omega(d^{c/2})} + 1} \right)^{-1} 
        \;,
    \end{align*}
     where the first inequality uses that $\pr[\vec u=\vec v | E] = \pr[\vec u=\vec v , E]/\pr[E]$ and bounds the numerator in two different ways: $\pr[\vec u=\vec v , E]/\pr[E] \leq \pr[\vec u=\vec v ]/\pr[E] = 1/(|\mathcal{S}| \pr[E])$ and $\pr[\vec u=\vec v , E]/\pr[E] \leq \pr[E]/\pr[E] =1$.
\end{proof}

In~\cite{brennan2020statistical}, the following relation between $\mathrm{SDA}$ and low-degree likelihood ratio is established. 
\begin{fact}[Theorem 4.1 of~\cite{brennan2020statistical}] \label{thm:sdaldlr}
    Let $\mathcal{D}$ be a hypothesis testing problem on $\R^d$ with respect to null hypothesis $D_0$. Let $n,k \in \mathbb{N}$ with $k$ even. Suppose that for all $0\leq n' \leq n$, $\mathrm{SDA}(\mathcal{S},n') \geq 100^k(n/n')^k$. Then, for all $r$, $\snorm{D_0^{ \otimes n }}{\E_{u \sim \mu} \left[ (\bar{D}_u^{\otimes n})^{\leq r,\Omega(k)} \right] -1}^2 \leq 1$.
\end{fact}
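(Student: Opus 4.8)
The plan is to derive the claimed bound on the samplewise low-degree likelihood ratio by composing the two ingredients already assembled in this subsection: the statistical-dimension lower bound of \Cref{lem:SDA-bound} and the ``$\mathrm{SDA}\Rightarrow$ low-degree'' transfer principle of \Cref{thm:sdaldlr}. Concretely, applying \Cref{thm:sdaldlr} to the hypothesis testing problem of \Cref{prob:hyp-def} with the given sample size $n$ and the given even integer $k$ reduces the desired inequality to a single hypothesis, namely that
\[
\mathrm{SDA}(\mathcal{S}, n') \;\geq\; 100^{k}\,(n/n')^{k} \qquad \text{for every integer } n' \text{ with } 1 \leq n' \leq n
\]
(the value $n'=0$ being vacuous). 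Granting this, \Cref{thm:sdaldlr} immediately outputs $\snorm{D_0^{\otimes n}}{\E_{\vec v\sim\mu}[(\bar{E}_{\vec v}^{\otimes n})^{\leq\infty,\Omega(k)}]-1}^{2}\leq 1$, which is exactly the statement of \Cref{thm:hypothesis-testing-hardness}.

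To establish the displayed SDA lower bound I would apply \Cref{lem:SDA-bound} with its free parameter chosen as $q := 100^{k}(n/n')^{k}$. Rewriting the argument of $\mathrm{SDA}$ in that lemma (using $p^{-1}\cdot p^{-1}=p^{-2}$), the lemma guarantees $\mathrm{SDA}(\mathcal{S}, n') \geq q$ as soon as the admissibility inequality
\[
n'\Bigl(\frac{q^{2}}{2^{\Omega(d^{c/2})}} + 1\Bigr) \;\leq\; \frac{\Omega(d)^{1/2-c}}{p^{2}}
\]
holds. Since the theorem hypothesizes $n\leq\Omega(d)^{1/2-c}/p^{2}$ and $n'\leq n$, the additive ``$+1$'' contributes at most $\Omega(d)^{1/2-c}/p^{2}$ on the left, so it remains to show that the dominant term obeys $n'q^{2}\leq 2^{\Omega(d^{c/2})}\bigl(\Omega(d)^{1/2-c}/p^{2}-n'\bigr)$ for all $n'$ in range. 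This is where the two arithmetic restrictions $k<d^{c/4}$ and $p\geq 2^{-O(d^{c})}$ do their work: the bound on $p$ forces $\Omega(d)^{1/2-c}/p^{2}\leq 2^{O(d^{c})}$, hence $n/n'\leq n\leq 2^{O(d^{c})}$, and together with $k<d^{c/4}$ this keeps $q^{2}=100^{2k}(n/n')^{2k}$ small relative to the $2^{\Omega(d^{c/2})}$-type gain coming from the near-orthogonality of the packing $\mathcal{S}$ (\Cref{fact:near-orth-vec-disc}), provided the implicit constant in the exponent of $p$'s lower bound is taken small enough relative to the $\Omega(d^{c})$ hidden in $|\mathcal{S}|=2^{\Omega(d^{c})}$ and the resulting slack in \Cref{lem:SDA-bound}.

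I expect this uniform numerical verification --- checking that the requirement $100^{k}(n/n')^{k}$ of \Cref{thm:sdaldlr} never outruns the statistical-dimension bound furnished by \Cref{lem:SDA-bound} across the \emph{entire} range $1\leq n'\leq n$, including the worst case of small $n'$ --- to be the only genuinely delicate point; the rest is a mechanical chaining of \Cref{lem:SDA-bound} into \Cref{thm:sdaldlr}. In writing it up I would be careful to track which constants are ``$O$'' and which are ``$\Omega$'', since the statement is claimed for the whole family $p\geq 2^{-O(d^{c})}$, and I would note that the conclusion is stated for the projection onto samplewise degree $(\infty,\Omega(k))$ rather than $(\infty,k)$, which is precisely the slack \Cref{thm:sdaldlr} provides: its conclusion bounds the degree-$\Omega(k)$ projection given an SDA hypothesis phrased with exponent $k$.
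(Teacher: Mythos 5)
There is a genuine gap, and it is a structural one: the statement you were asked to prove is \Cref{thm:sdaldlr} itself, i.e., the general transfer principle (Theorem 4.1 of~\cite{brennan2020statistical}) asserting that a lower bound $\mathrm{SDA}(\mathcal{S},n')\geq 100^k (n/n')^k$ for all $n'\leq n$ forces the samplewise degree-$(r,\Omega(k))$ likelihood-ratio norm $\snorm{D_0^{\otimes n}}{\E_{u\sim\mu}[(\bar D_u^{\otimes n})^{\leq r,\Omega(k)}]-1}^2$ to be at most $1$. Your proposal never proves this implication; it invokes it as a black box (``the $\mathrm{SDA}\Rightarrow$ low-degree transfer principle of \Cref{thm:sdaldlr}'') and instead carries out the downstream application, namely the derivation of \Cref{thm:hypothesis-testing-hardness} by combining \Cref{lem:SDA-bound} with \Cref{thm:sdaldlr} and verifying the admissibility of the choice $q$ across the range $1\leq n'\leq n$. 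As a proof of \Cref{thm:sdaldlr} this is circular. The telltale sign is that your argument leans on the specific parameters of \Cref{prob:hyp-def} --- the bias $p\geq 2^{-O(d^c)}$, the packing of near-orthogonal vectors, the restriction $k<d^{c/4}$ --- whereas the fact in question is a statement about an arbitrary hypothesis testing problem and mentions none of these.

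For the record, the paper itself does not prove this fact either: it is quoted verbatim from~\cite{brennan2020statistical}, and the only original content in this subsection is exactly the parameter-chasing you wrote down (setting $n=\Omega(d)^{1/2-c}/p^2$ and $q=\sqrt{2^{\Omega(d^{c/2})}(n/n')}$ in \Cref{lem:SDA-bound} and checking $\mathrm{SDA}(\mathcal{S},n')\geq (100n/n')^k$ for $k<d^{c/4}$). So your write-up is a reasonable (and essentially matching) account of the proof of \Cref{thm:hypothesis-testing-hardness}, but it does not address the assigned statement. An actual proof of \Cref{thm:sdaldlr} would require the machinery of~\cite{brennan2020statistical}: expanding $\E_{u\sim\mu}[(\bar D_u^{\otimes n})^{\leq r,k}]-1$ over an orthonormal family of samplewise-degree polynomials, organizing the contributions by the subsets of at most $k$ samples they touch, expressing the resulting norm in terms of moments of the centered pairwise correlation $\langle \bar D_u,\bar D_v\rangle_{D_0}-1$ for independent $u,v\sim\mu$, and then using the defining property of $\mathrm{SDA}$ (the conditional bound on every event of probability at least $1/q^2$) to control the tails of that correlation --- none of which appears in your proposal.
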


In \Cref{lem:SDA-bound} we set $n = {\Omega(d)^{1/2-c}/p^2}$ and $q = \sqrt{2^{\Omega(d^{c/2})} (n/n')}$. Then, $\mathrm{SDA}(\mathcal{S},n') \geq  \sqrt{2^{\Omega(d^{c/2})} (n/n')} \geq (100n/n')^k$ for $k < d^{c/4}$ and then we apply the theorem above.

 \end{document}